\title{A Strong Baseline for Batch Imitation Learning}
\newtheorem{theorem}{Theorem}
\newtheorem{lemma}{Lemma}
\newtheorem*{exlemma}{Lemma}
\newtheorem{claim}{Claim}
\theoremstyle{remark}
\newtheorem*{remark}{Remark}
\newcommand{\algolong}{Imitation Learning by Batch Reinforcement Learning}
\newcommand{\algo}{ILBRL}
\DeclareMathOperator*{\argmax}{arg\,max}
\DeclareMathOperator*{\argmin}{arg\,min}
\begin{document}
\twocolumn[
\icmltitle{A Strong Baseline for Batch Imitation Learning}

\begin{icmlauthorlist}
\icmlauthor{Matthew Smith}{oxford,spotifyint}
\icmlauthor{Lucas Maystre}{spotify}
\icmlauthor{Zhenwen Dai}{spotify}
\icmlauthor{Kamil Ciosek}{spotify}
\end{icmlauthorlist}

\icmlaffiliation{oxford}{Department of Computer Science, University of Oxford}
\icmlaffiliation{spotifyint}{Research completed while on internship at Spotify.}
\icmlaffiliation{spotify}{Spotify}

\icmlcorrespondingauthor{Matthew Smith}{msmith@cs.ox.ac.uk}
\icmlcorrespondingauthor{Kamil Ciosek}{kamil.ciosek@posteo.net}

\icmlkeywords{Machine Learning, Reinforcement Learning, Imitation Learning, Offline RL, Batch RL, ICML}

\vskip 0.3in
]

\printAffiliationsAndNotice{}  %

\begin{abstract}
Imitation of expert behaviour is a highly desirable and safe approach to the problem of sequential decision making. We provide an easy-to-implement, novel algorithm for imitation learning under a strict data paradigm, in which the agent must learn solely from data collected a priori. This paradigm allows our algorithm to be used for environments in which safety or cost are of critical concern. Our algorithm requires no additional hyper-parameter tuning beyond any standard batch reinforcement learning (RL) algorithm, making it an ideal baseline for such data-strict regimes. Furthermore, we provide formal sample complexity guarantees for the algorithm in finite Markov Decision Problems. In doing so, we formally demonstrate an unproven claim from \citet{kearns1998finite}. On the empirical side, our contribution is twofold. First, we develop a practical, robust and principled evaluation protocol for offline RL methods, making use of only the dataset provided for model selection. This stands in contrast to the vast majority of previous works in offline RL, which tune hyperparameters on the evaluation environment, limiting the practical applicability when deployed in new, cost-critical environments. As such, we establish precedent for the development and fair evaluation of offline RL algorithms. Second, we evaluate our own algorithm on challenging continuous control benchmarks, demonstrating its practical applicability and competitiveness with state-of-the-art performance, despite being a simpler algorithm.
\end{abstract} 

\section{Introduction}
Optimal sequential decision making remains a  problem which is challenging but crucial for tackling many practical applications. While reinforcement learning (RL) algorithms are theoretically well suited to this task, in practice they are limited by safety concerns \citep{russell2015research}, unknown data requirements, and high sensitivity to problem design parameters \citep{russell2010artificial}. Imitation learning (IL) presents a flexible paradigm that sidesteps these concerns, but requires expert demonstrations.

Ideally, in order to be used in the widest variety of practical settings, we consider two key desiderata for (offline) IL algorithms: First, learning should occur only on data gathered a priori from known policies --- no additional interaction with the true environment should be needed. This ensures that the IL algorithm can be used in expensive or dangerous environments. Second, the algorithm should not require access to the true reward, which is unknown in general. This allows the agent to learn exclusively from a dataset, without need for a handcrafted reward, which may be expensive to evaluate and lead to unpredictable behaviour.

Perhaps surprisingly, to our knowledge, no known IL algorithm thus far has demonstrated both of these properties, in part due to the manner in which prior algorithms make use of environment interaction to tune hyperparameters. In this work, we approach these issues by developing a simple IL algorithm and evaluation protocol which can be used to develop, compare, and tune offline RL algorithms.

Our algorithm is inspired by support matching approaches to IL, which look to simplify the IL problem by constructing a reward function offline which assigns high reward to policies that match the support of the expert policy in the dataset \citep{wang2019random,reddy2019sqil,ciosek2022imitation}. Then a policy is learned by a single call to an RL solver with this fixed reward function. However, thus far, these algorithms require online interaction with the environment in order to optimise for the constructed reward.

Other recent works in offline IL \citep{zolna2020offline, ma2022smodice} have focused on eliminating the need for environmental interaction in order to imitate expert policies. These works introduce, alongside a small expert dataset, a second, much larger dataset which is used to learn dynamics of the Markov Decision Process (MDP) in areas potentially not covered by the expert agent. We will make use of this approach in our algorithm as well. These prior works are limited by their complexity: they involve multi-stage learning procedures involving density estimation as well as off-policy RL methods. Furthermore, the evaluation of these algorithms makes use of hyperparameter tuning on the environment, violating our desiderata above.

\paragraph{Contributions}
In order to make progress in offline IL, we develop a simple, easily optimised support matching algorithm, with theoretical guarantees in the tabular RL setting.

Building on the work of \citet{ciosek2022imitation}, we provide formal bounds on the performance of a policy learned via our algorithm for discrete MDPs. These bounds depend on the size of the MDP, the size of the expert dataset, and the size and quality of the exploratory data-generating policy. As a corollary, we prove a conjecture by \citet{kearns1998finite}, which to our knowledge has never been demonstrated formally until now.

Equipped with these bounds, we go on to empirically demonstrate the effectiveness of a relaxation of our algorithm applied in challenging continuous control environments. In order to compare offline RL methods fairly, with explicit data requirements, we develop a novel and pragmatic hyperparameter tuning protocol for offline RL methods, which requires minimal knowledge of the true environment. We show that our simple algorithm is competitive with state-of-the-art IL methods in a challenging data regime in which Behavioural Cloning (BC) fails. These results are statistically robust, making use of the evaluation methods described by \citet{agarwal2021deep}, which establishes a precedent for more robust benchmarking.

Overall, our study finds the proposed algorithm to be both simple and effective, making it a pragmatic baseline for future work on IL in challenging, low expert data regimes.

\section{Preliminaries}

In this section, we provide a brief technical overview of the formalism employed in the rest of the paper and establish the notation that we will apply.

\paragraph{Markov Decision Processes \& Reinforcement Learning}

Markov Decision Processes (MDPs) provide a formalism for agents in sequential reasoning task. An \textit{average reward} MDP is given by the tuple: $\left\langle \mathcal{S}, \mathcal{A}, P, R, P_0 \right\rangle$, where $\mathcal{S}$ is the state space, $\mathcal{A}$ the action space, $P:\mathcal{S} \times \mathcal{A} \to U(\mathcal{S})$ the transition distribution, where $U(\mathcal{S})$ is the set of probability distributions over states, $R:\mathcal{S} \times \mathcal{A} \to [0, 1]$ a bounded deterministic reward function, and $P_0$ the distribution over initial states. We also denote $S = |\mathcal{S}|$, $A = |\mathcal{A}|$.
In this work, we consider agents which act according to a deterministic policy, $\pi: \mathcal{S} \to \mathcal{A}$, which maps states to the action taken in that state. We also assume that such policies induce \textit{ergodic} Markov chains for simplicity. In an average reward MDP, we look to find a policy which optimises the average per-step expected reward, given by
\[
    \mu_{\pi} = \lim_{N\to\infty}\frac{1}{N}\mathbb{E}\left[
        \sum_{i=0}^N r\left(s_i, \pi(s_i)\right)
    \right],
\]
where expectation is taken over draws from $P_0$ and $P\left(s_i, \pi(s_i)\right)$.

A \textit{discounted reward} MDP adds, in addition to the tuple described above, a \textit{discount factor}, $\gamma$, which determines an effective time horizon for the MDP. In a discounted reward MDP, values are state dependent, and given by
\[
    V_\pi(s) = \mathbb{E}\left[\sum_{i=0}^{\infty} \gamma^i r\left(s_i, \pi(s_i)\right) \Big\vert s_0 = s\right].
\]
where the conditional expectation is again taken over draws from the MDP dynamics. Related to the value function is the \textit{action} value function, which determines the value of taking an action in a state, then proceeding to follow $\pi$: $Q_\pi(s,a) := r(s,a) + \gamma\mathbb{E}_{s' \sim P(s,a)}\left[V_{\pi}(s')\right]$.

Of special importance to us is the fact that average rewards and the average values only differ by a constant factor of $(1 - \gamma)$ --- see e.g. the work of \citet{kakade2001optimizing}:
\[\mu^{\pi}  = (1 - \gamma)\sum_s \rho_{\pi}V^{\pi}(s),\]
where $\rho^{\pi}$ is the steady-state distribution of the Markov chain induced by policy $\pi$ on the MDP.

\paragraph{Imitation Learning}
Imitation learning looks to recast the problem of learning an optimal policy by making use of a fixed dataset of expert behaviour, $D_E$, and by learning a policy which achieves average reward close to the one used to produce the dataset. Formally, for an expert policy, $\pi_E$ and any reward function $f$ with range $[0,1]$, we look to learn $\pi$ such that
\begin{equation}
    \label{eqn:imlearning}
    \mathbb{E}_{\rho_{\pi_{E}}}\left[f(s,a)\right] - \mathbb{E}_{\rho_{\pi}}\left[f(s,a)\right] \leq \epsilon.
\end{equation}

In addition to the finite expert dataset, we consider an additional finite \textit{exploratory} dataset, $D_X$ which consists of transitions generated by an arbitrary policy, $\pi_X$. The policy must cover the MDP, meaning it must be possible for it to visit any given state and action of the MDP in a finite number of steps. This is to say, the steady state probability obeys $\rho_{\pi}(s, a) \geq p_{\min} > 0$. The purpose of this dataset is to provide the offline agent with information about the MDP dynamics outside of the states covered by the expert policy. Training thus occurs on the unified dataset $D_U = D_E \cup D_X$.

\paragraph{Total Variation Distance}
The total variation distance (TV) formalises a notion of how far apart two probability distributions are, in terms of how they assign different probabilities to the same events. Formally, for discrete distributions $\rho_1$ and $\rho_2$, over an event set $\Omega$, we have
\[
\left \lVert \rho_1 - \rho_2 \right \rVert_{TV} := 
    \sup_{M \subseteq \Omega}\left\lvert \rho_1(M) - \rho_2(M) \right \rvert.
\]
It is well-known that bounding the total variation distance between the expert policy and a policy learned via imitation learning also provides a bound on \cref{eqn:imlearning} --- see, e.g. \citet{ciosek2022imitation}. As such our bound will also look to bound the TV between the expert distribution and the policy learned by our algorithm.
The total variation distance is also used to define the \emph{mixing time} $t^\pi$ of an ergodic policy $\pi$, as the smallest $t$ such that
\[\max_{s} \lVert \mathbf{1}(s')^{\top}P_{\pi}^t - \rho_\pi\rVert_{TV} \leq \frac{1}{4},\]
where $P_{\pi}$ is the transition matrix for the Markov chain induced by $\pi$ and transition function $P$. This will be used in our analysis to represent the quality of the exploratory policy, and the coverage of the expert data over the state space.

\section{Imitation Learning by Batch RL}
In order to introduce our algorithm, \algolong\ (\algo), we define the \textit{intrinsic} reward, which, for discrete MDPs, is given by  
\[
\hat{r}(s,a) = \mathbf{1}_{D_E}((s,a)),
\]
where $\mathbf{1}_{D_E}$ is the indicator function, which maps to $1$ if a state action pair is in the expert dataset, and 0 otherwise. This matches exactly the function used by \citet{wang2019random} and \citet{ciosek2022imitation}. However, we implement a subtle but important shift. Instead of making a call to an arbitrary RL solver in order to optimise returns under this intrinsic reward through unlimited interaction with the environment, we make a call to an \textit{offline} RL algorithm, which in general may not be able to perfectly optimise for our intrinsic reward. This significantly complicates analysis, as we need to manage the inefficiency and statistical dependence on finite data that comes from offline RL algorithms, alongside the fact that optimising for the intrinsic reward may not lead to a good policy if there is not enough expert data. Pseudocode for the overall process is given in Algorithm 1.

In order to simplify analysis, we make use of an off-policy RL algorithm, developed by \citet{kearns1998finite}, known as \textit{phased Q-Learning}. The algorithm is somewhat impractical, but its analysis is tractable. We develop favourable asymptotical guarantees on its sample complexity in subsequent sections.

\begin{algorithm}
\caption{Imitation Learning by Batch RL \label{algo:ilbrl}}
\begin{algorithmic}[1]
    \renewcommand{\algorithmicrequire}{\textbf{Input:}}
    \renewcommand{\algorithmicensure}{\textbf{Output:}}
    \REQUIRE Expert dataset: $D_E$, Exploration dataset: $D_X$
    \ENSURE Imitation policy: $\pi$
    \STATE $D_U \gets D_E \cup D_X$
    \FOR{$s, a \in D_U$}
        \STATE $\hat{r}(s,a) \gets \mathbf{1}_{D_E}((s,a))$
    \ENDFOR
    \STATE  $\pi \gets$ BatchRL($D_U, \hat{r}$) 
    \RETURN $\pi$
\end{algorithmic}
\end{algorithm}

\section{Prior Work}
\paragraph{Behavioural Cloning}
There exists a wide body of prior work which looks to solve the imitation learning problem. The simplest and oldest form is known as behavioural cloning, which simply optimises policy parameters in order to match the expected state-conditional probability of taking the actions in the expert dataset. However, this formulation fails to accommodate for the sequential nature of the MDP, leading to distribution shift \citep{ross10efficient}, and severely increasing the amount of expert data needed to learn a good policy.

\paragraph{Inverse RL}
Inverse RL (IRL) proposes to evade this issue by learning a reward function which the expert is (implicitly) maximising --- as indicated by the data --- then optimising a policy to solve the original MDP augmented with this reward function. Early formulations of this problem required full solutions to candidate MDPs at each iteration \citep{ng2000algorithms, ziebart2008maximum}, while more recent \textit{adversarial} IRL methods \citep{ho2016generative} use GAN-style training to construct candidate reward functions and policies online in order to minimise a divergence between the learned policy and the expert data distribution. However, these methods are known to be difficult to train, and require continuous interaction with the environment, which is not available in our setting, which constrains interaction with the MDP to a finite dataset.

\paragraph{Support Matching}
In order to avoid the instabilities of adversarial IRL methods, and to limit the number of calls to an RL solver needed, \citet{wang2019random} propose a phased approach, which looks to construct a reward function such that state-actions visited by (or close to those visited by) a deterministic expert are rewarding, while state-actions far from the dataset are not. A reward function is first estimated using only the expert data, either by direct labelling, or an approximate support estimation method. Then, a single call to an RL solver is made, using the rewards supported by the expert. \citet{ciosek2022imitation} develops theory for support matching methods, providing formal guarantees for reward labelling on discrete MDPs, as well as a heuristic relaxation of the labelling which has been shown to work in continuous environments. Our work builds on these support matching results, but looks to do so in a strictly batch setting, without the need to call an online RL solver.

\paragraph{Offline Imitation Learning}
Several previous methods employ the setting assumed by our work, looking to learn to mimic an expert policy strictly from data. \citet{zolna2020offline} propose an adaptation of adversarial IRL methods for batch settings, though this approach still suffers from training challenges and nonstationarity. More recently, \citet{kim2021demodice} establish the use of a hybrid dataset composed of expert and exploratory data, making use of a marginalised density ratio method \citep{liu2018breaking, nachum2019dualdice} in order to minimise the KL-divergence between the learned policy and expert behaviour. Concurrently, \citet{ma2022smodice} discuss a paradigm which leverages a more general class of $f$-divergences in a similar way.

\section{Theory of \algo}
\label{sec:theory}

\subsection{Theory of Phased Q-Learning}
Phased Q-learning can be seen as an approximate form of value iteration. In this algorithm, rather than sampling states and actions individually, each action at each state across the entire environment is sampled simultaneously $m$ times. This data generation process is known as \textit{parallel sampling}. Parallel sampling can be simulated using standard RL sampling based on rollouts of a policy in the MDP, which we characterise theoretically later in this section. It allows us to update the estimated state-action value at each state and action simultaneously according to the optimal Bellman equation using the deterministic reward and \textit{empirical bootstrap}:
\[
    Q_{i+1}(s, a) 
        = r(s,a) + \gamma \frac{1}{m}\sum_{k=1}^m \max_{a'} Q_{i}(s'_k, a').
\]
Since the empirical bootstrap will concentrate around the true value, we can see that, with enough data, this becomes equivalent to performing value iteration. Here we prove that this is an efficient offline RL algorithm. First, we need to demonstrate that the amount of data needed to achieve bounded error at the end of the evaluation process is polynomial in the appropriate quantities, which was accomplished in the original work by \citet{kearns1998finite}. Second, we need to show that we can simulate sampling from the parallel model efficiently. \citet{kearns1998finite} postulate that such a procedure should be possible under assumptions concerning the quality of the exploration policy, but a formal proof has never been provided until now.

We use the notation from \citet{kearns1998finite}. $m$ is the number of samples from the parallel sampler per state-action pair, $\ell$ is the number of iterations of phased Q-learning to perform, $s$ and $a$ are states and actions, and $P_{ss'}^a$ is the transition distribution, $r$ is a deterministic reward function, $\hat{V}_i(\cdot), \hat{Q}_i(\cdot,\cdot)$ are our estimated value functions at iteration $i$, $\bar{V}_i(\cdot), \bar{Q}_i(\cdot,\cdot)$ are the true value functions of the policy greedy with respect to $\hat{Q}_i$, $V_i(\cdot), Q_i(\cdot,\cdot)$ is the output of $i$ rounds of policy iteration, $V^*(\cdot), Q^*(\cdot,\cdot)$ are the optimal value functions.

We start by assuming that the empirical bootstrap of our estimated function is $\eta'$-concentrated around the bootstrap on the true transition distribution. We will later take $m$ large enough for this to hold.
For all $s, a$, and $i\leq\ell$, we have
\begin{align}
    \left\lvert 
        \frac{1}{m}\sum_{k=1}^m \hat{V}_i(s'_k) 
        - \sum_{s'}P_{ss'}^a \hat{V}_i(s')
    \right\rvert
    \leq \eta',
    \label{eqn:val_concentration}
\end{align}
where $s'_k$ is the $k$th sampled subsequent state from the parallel sampler at $s,a$.

With this assumption, we are able to formalise the notion that performing phased Q-learning is very close to performing true value iteration, as suggested by the following lemma.

\begin{lemma}
    \label{lemma:value_gap}
Given \cref{eqn:val_concentration}, the difference between the value output by phased Q-learning after $\ell$ steps and the optimal value is given by
\begin{align*}
\zeta(s,a)
    := \left\lvert \hat{Q}_{\ell}(s, a)  - Q^*(s,a) \right\rvert
    \leq \frac{\eta' \gamma + \gamma^{\ell}}{1 - \gamma}
\qquad \forall s, a.
\end{align*}
\end{lemma}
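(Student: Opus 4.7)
The plan is to set $\zeta_i(s,a) := |\hat{Q}_i(s,a) - Q^*(s,a)|$ and establish a one-step recursion on $\|\zeta_i\|_\infty$ that combines (i) the sampling concentration of \cref{eqn:val_concentration} with (ii) the usual $\gamma$-contraction of the Bellman optimality operator, then unroll it from $i=0$ to $i=\ell$.

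The key algebraic step: starting from the update
\[
\hat{Q}_{i+1}(s,a) = r(s,a) + \gamma \frac{1}{m}\sum_{k=1}^m \hat{V}_i(s'_k)
\]
and the Bellman optimality identity
\[
Q^*(s,a) = r(s,a) + \gamma \sum_{s'} P_{ss'}^a V^*(s'),
\]
I would subtract the two, insert the intermediate term $\sum_{s'} P_{ss'}^a \hat{V}_i(s')$, and split the difference into a \emph{sampling} part $\big|\frac{1}{m}\sum_k \hat{V}_i(s'_k) - \sum_{s'} P_{ss'}^a \hat{V}_i(s')\big|$, which is bounded by $\eta'$ by assumption, and a \emph{propagation} part $\big|\sum_{s'} P_{ss'}^a (\hat{V}_i(s') - V^*(s'))\big|$. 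For the latter I would invoke the standard inequality $|\max_a f(a) - \max_a g(a)| \le \max_a |f(a) - g(a)|$ to bound it by $\sum_{s'} P_{ss'}^a \max_{a'} \zeta_i(s',a') \le \|\zeta_i\|_\infty$. Together this gives
\[
\|\zeta_{i+1}\|_\infty \;\le\; \gamma\,\eta' \;+\; \gamma\,\|\zeta_i\|_\infty.
\]

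Unrolling this linear recursion yields
\[
\|\zeta_\ell\|_\infty \;\le\; \gamma\,\eta' \sum_{j=0}^{\ell-1}\gamma^j \;+\; \gamma^\ell \|\zeta_0\|_\infty \;\le\; \frac{\gamma\,\eta'}{1-\gamma} \;+\; \gamma^\ell \|\zeta_0\|_\infty,
\]
and since rewards lie in $[0,1]$ we have $\|Q^*\|_\infty \le 1/(1-\gamma)$; initialising $\hat{Q}_0 \equiv 0$ gives $\|\zeta_0\|_\infty \le 1/(1-\gamma)$, which produces the stated bound $\frac{\eta'\gamma + \gamma^\ell}{1-\gamma}$ uniformly in $s,a$.

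The only delicate point is the $\max$-vs-expectation manipulation inside the propagation term: one must carefully pull the $\max_{a'}$ through the expectation over $s'$ and through the absolute value, rather than over the sampling noise (which would produce an extra $\eta'$ per iteration and break the geometric sum). Once that is handled cleanly, the rest is a routine induction on $i$, so I do not anticipate any substantive obstacles beyond being careful about which quantities are random and which are deterministic under the hypothesis \cref{eqn:val_concentration}.
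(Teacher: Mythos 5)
Your proof is correct and yields exactly the stated bound, but it organises the error decomposition differently from the paper. The paper runs the contraction recursion on the difference between the phased Q-learning iterate $\hat{Q}_i$ and the \emph{exact value-iteration iterate} $Q_i$ (both initialised identically, so that recursion starts from zero and produces only the geometric sum $\eta'\gamma/(1-\gamma)$), and then separately invokes the standard value-iteration convergence rate $\lvert Q_\ell - Q^*\rvert \le \gamma^\ell/(1-\gamma)$ (Puterman, Thm.\ 6.3.3) before combining the two by the triangle inequality. You instead run a single recursion on $\zeta_i = \lvert \hat{Q}_i - Q^*\rvert$ directly against the fixed point, obtaining $\lVert\zeta_{i+1}\rVert_\infty \le \gamma\eta' + \gamma\lVert\zeta_i\rVert_\infty$ and absorbing the $\gamma^\ell$ term through the initial error $\lVert\zeta_0\rVert_\infty \le 1/(1-\gamma)$. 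The key manipulations — inserting the intermediate bootstrap term $\sum_{s'}P^a_{ss'}\hat{V}_i(s')$ to split sampling error from propagation error, and using $\lvert\max_a f - \max_a g\rvert \le \max_a\lvert f-g\rvert$ — are identical in both arguments, and you correctly flag the one place where care is needed. What your route buys is self-containedness: no external convergence theorem is needed. What it costs is that the $\gamma^\ell/(1-\gamma)$ term now depends explicitly on the initialisation $\hat{Q}_0 \equiv 0$ (or at least $\hat{Q}_0 \in [0, 1/(1-\gamma)]$), whereas the paper's version only requires that the two iterations be initialised identically and delegates the initialisation dependence to the cited result. Either way the final bound is the same, so there is no substantive gap.
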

The proof is left to the appendix, which makes use a recurrence relation between updates of phased Q-learning.

We now look to bound the overall suboptimality of the policy output by phased Q-learning. We define the discounted regret of the policy at step $i$ on the discounted MDP as the expectation of the difference between the discounted return achieved by the learned policy and that of the optimal policy, with expectation taken over the initial state distribution:
\[
    R_\gamma(i) := \mathbb{E}_{s_0}\left[V^*(s_0) - \bar{V}_i(s_0) \right].
\]
Equipped with the pointwise value gap, we can bound the overall regret with the following lemma.
\begin{lemma}
    \label{lemma:regret}
    Given \cref{eqn:val_concentration}, the policy output by phased Q-learning after $\ell$ steps achieves regret of at most
    \[
        R_\gamma(\ell) \leq 
            \frac{2}{(1-\gamma)}
            \left(
                \frac{\eta' \gamma  + \gamma^\ell}{1 - \gamma}
            \right).
    \]
\end{lemma}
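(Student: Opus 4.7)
The plan is to reduce \cref{lemma:regret} to \cref{lemma:value_gap} via a standard ``greedy policy from an approximate action-value function'' argument. Writing $\epsilon := \frac{\eta'\gamma + \gamma^{\ell}}{1-\gamma}$ for the uniform bound given by \cref{lemma:value_gap}, and recalling that $\bar\pi_\ell$ is by definition greedy with respect to $\hat Q_\ell$, the goal is to control $V^*(s) - \bar V_\ell(s)$ in sup-norm and then take expectation over $s_0$.

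First I would fix an arbitrary state $s$ and let $a^* := \pi^*(s)$ and $\bar a := \bar\pi_\ell(s)$. The key chain of inequalities is
\begin{align*}
V^*(s) &= Q^*(s,a^*) \leq \hat Q_\ell(s,a^*) + \epsilon \\
       &\leq \hat Q_\ell(s,\bar a) + \epsilon \leq Q^*(s,\bar a) + 2\epsilon,
\end{align*}
where the first and third steps use \cref{lemma:value_gap} and the middle step uses that $\bar a$ is a maximiser of $\hat Q_\ell(s,\cdot)$. Expanding $Q^*(s,\bar a) = r(s,\bar a) + \gamma \mathbb{E}_{s'\sim P(s,\bar a)}[V^*(s')]$ and subtracting the Bellman identity $\bar V_\ell(s) = r(s,\bar a) + \gamma \mathbb{E}_{s'\sim P(s,\bar a)}[\bar V_\ell(s')]$ gives
\begin{equation*}
V^*(s) - \bar V_\ell(s) \leq 2\epsilon + \gamma\, \mathbb{E}_{s'\sim P(s,\bar a)}\bigl[V^*(s') - \bar V_\ell(s')\bigr].
\end{equation*}

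Next I would take the supremum over $s$ on both sides. Because the right-hand expectation is bounded by $\|V^* - \bar V_\ell\|_\infty$, this yields
\begin{equation*}
\|V^* - \bar V_\ell\|_\infty \leq 2\epsilon + \gamma\, \|V^* - \bar V_\ell\|_\infty,
\end{equation*}
which rearranges to $\|V^* - \bar V_\ell\|_\infty \leq \frac{2\epsilon}{1-\gamma}$. Finally, the regret is dominated by the sup-norm gap, so
\begin{equation*}
R_\gamma(\ell) = \mathbb{E}_{s_0}\bigl[V^*(s_0) - \bar V_\ell(s_0)\bigr] \leq \frac{2\epsilon}{1-\gamma} = \frac{2}{1-\gamma}\left(\frac{\eta'\gamma + \gamma^{\ell}}{1-\gamma}\right),
\end{equation*}
which is precisely the claimed bound.

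There is no genuinely hard step here once \cref{lemma:value_gap} is in hand; the main thing to be careful about is the direction of each inequality in the central chain (one needs the greedy step $\hat Q_\ell(s,a^*) \leq \hat Q_\ell(s,\bar a)$ applied in exactly the right place), and the fact that taking the sup-norm closes the recursion without requiring any mixing or distributional assumption on $\bar\pi_\ell$. The factor of $2$ is the classical ``two uses of \cref{lemma:value_gap}'' (once for $a^*$ and once for $\bar a$), and the outer $1/(1-\gamma)$ comes from the geometric contraction of the Bellman recursion.
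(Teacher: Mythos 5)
Your proof is correct and follows essentially the same route as the paper's: both arguments bound the per-state action gap $V^*(s) - Q^*(s,\pi_\ell(s))$ by $2\epsilon$ via the telescoping insertion of $\hat Q_\ell(s,a^*)$ and $\hat Q_\ell(s,\bar a)$ (two applications of \cref{lemma:value_gap} plus the greedy property), and both pick up the outer $1/(1-\gamma)$ by closing a geometric Bellman recursion. The only cosmetic difference is ordering --- the paper unrolls the recursion first and then bounds the action gap, whereas you bound the action gap first and close the recursion with a sup-norm fixed-point step --- which changes nothing of substance.
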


We provide a complete proof of \cref{lemma:regret} in the appendix.
Since we now have a bound on the overall regret of our algorithm, bounding by $\epsilon$, our overall error margin, allows us to solve for the relevant quantities, eventually leading to a lower bound on the number of samples needed to achieve such error.

Once we have solved for the minimum acceptable concentration error $\eta'$ in terms of our overall allowable discounted regret, $\eta$, we can solve for $m$, the number of samples we need for \cref{eqn:val_concentration} to hold with high probability. This is accomplished in the following lemma.

\begin{lemma}
\label{lemma:phased_q}
Taking $m \ell$ samples from the parallel model for phased Q-learning, such that
\begin{align*}
m \ell
    \geq \log\left(\frac{2 \ell SA}{\delta'}\right)\frac{2 \gamma^2 \ell}{(1-\gamma)^2(\eta(1 - \gamma)^2 - 2\gamma^\ell)^2},
\end{align*}
we obtain discounted regret less than $\eta$ with probability at least $1 - \delta'$.
\end{lemma}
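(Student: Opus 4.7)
The plan is to work backwards from the regret bound in \cref{lemma:regret} to determine how small the concentration parameter $\eta'$ must be, then apply Hoeffding's inequality together with a union bound to translate this into a sample requirement.

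First, I would invert the regret bound. Setting $\frac{2(\eta'\gamma + \gamma^\ell)}{(1-\gamma)^2} \leq \eta$ and solving yields
\[
\eta' \;\leq\; \frac{\eta(1-\gamma)^2 - 2\gamma^\ell}{2\gamma}.
\]
Note that this is only meaningful when $\ell$ is large enough that $\gamma^\ell < \eta(1-\gamma)^2/2$; otherwise the right-hand side of the claimed sample bound is vacuously infinite, consistent with the statement. I would take $\eta'$ to equal this upper bound.

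Next I would apply Hoeffding's inequality to \cref{eqn:val_concentration}. Since $r \in [0,1]$, the estimated value function satisfies $\hat{V}_i(s') \in [0, 1/(1-\gamma)]$, so each of the $m$ samples $\hat{V}_i(s'_k)$ lies in an interval of width $1/(1-\gamma)$. Crucially, at iteration $i$ the samples $s'_k$ are drawn fresh from the parallel sampler and are independent of $\hat{V}_i$, which is a function only of samples drawn at prior iterations; this independence is what lets Hoeffding apply. It gives, for each fixed $(s,a,i)$,
\[
\Pr\!\left[\left\lvert \tfrac{1}{m}\textstyle\sum_k \hat{V}_i(s'_k) - \textstyle\sum_{s'} P^a_{ss'}\hat{V}_i(s') \right\rvert > \eta'\right] \leq 2\exp\!\left(-2 m \eta'^2 (1-\gamma)^2\right).
\]
Taking a union bound over all $S A$ state-action pairs and all $\ell$ iterations, \cref{eqn:val_concentration} holds simultaneously for all required $(s,a,i)$ with probability at least $1 - 2 S A \ell \exp(-2 m \eta'^2 (1-\gamma)^2)$.

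Finally, I would force this failure probability below $\delta'$, which after rearranging yields
\[
m \;\geq\; \frac{\log(2 S A \ell / \delta')}{2\eta'^2 (1-\gamma)^2}.
\]
Substituting the value of $\eta'$ chosen above gives $\eta'^2 = (\eta(1-\gamma)^2 - 2\gamma^\ell)^2/(4\gamma^2)$, and multiplying through by $\ell$ produces exactly the stated lower bound on $m\ell$. On this good event, \cref{lemma:regret} then delivers $R_\gamma(\ell) \leq \eta$, completing the argument.

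The only subtle obstacle is verifying the independence structure needed for Hoeffding. The cleanest fix, and the one implicit in \citet{kearns1998finite}'s parallel sampling model, is to draw a disjoint batch of $m$ samples per $(s,a)$ at each of the $\ell$ iterations, so that the samples driving the update at iteration $i+1$ are independent of $\hat{V}_i$; this is why the sample complexity scales as $m\ell$ rather than $m$. The remainder is routine algebra, and the final bound matches the statement verbatim.
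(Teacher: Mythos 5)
Your proposal is correct and follows essentially the same route as the paper's own proof: invert the regret bound of \cref{lemma:regret} to fix $\eta' = [\eta(1-\gamma)^2/2 - \gamma^\ell]/\gamma$, apply Hoeffding's inequality with range $[0,(1-\gamma)^{-1}]$ and a union bound over the $SA\ell$ triples $(s,a,i)$, and solve for $m$ before multiplying by $\ell$. Your remarks on the nonnegativity condition on $\eta'$ and on the independence of the fresh parallel samples from $\hat{V}_i$ correspond exactly to the paper's condition on $\ell$ in its \cref{eqn:ell} and its justification for invoking Hoeffding.
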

The proof is left to the Appendix. It involves bounding by the overall error $\eta$ and solving for $m\ell$ using the Hoeffding inequality.

Now we only need to concern ourselves with the gathering of these $m \ell$ samples from the parallel model. We do so by appealing to a classic result applying to ergodic Markov chains, which suggests that sampling the current state and action after following the chain for several steps is close to sampling from the stationary distribution, allowing for gathering of independent samples.

\begin{lemma}
\label{lemma:lucas_lemma}
The number of sample transitions from the exploration policy needed to perfectly simulate the parallel sampling model with failure probability less than $\delta''$ is upper bounded by
\begin{align*}
\frac{2 t^{\pi_X}}{\log(2) p_{\rm min}}  \log \left( \frac{2}{p_{\rm min}} \right)  \log \left( \frac{SA}{\delta''} \right).
\end{align*}
\end{lemma}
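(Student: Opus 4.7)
The plan is to run $\pi_X$ for one long trajectory, thin it at a spacing of $T$ steps so that the thinned samples are approximately i.i.d.\ draws from $\rho_{\pi_X}$, couple them to a genuine i.i.d.\ sequence, and then use a coupon-collector bound to argue that every state-action pair is visited at least once. Because the Markov property guarantees that the next state after any visit to $(s,a)$ is exactly distributed as $P_{s,\cdot}^{a}$, the simulation is perfect whenever both the coupling and the coverage events succeed.

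First, I would fix $T$ to be the smallest multiple of $t^{\pi_X}$ for which the $T$-step kernel satisfies $\lVert P_{\pi_X}^T(x,\cdot) - \rho_{\pi_X}\rVert_{TV} \leq \delta''/(2M)$ uniformly in the starting state $x$, where $M$ is the number of thinned subsamples chosen in the next step. The mixing-time hypothesis $\lVert P_{\pi_X}^{t^{\pi_X}}(x,\cdot) - \rho_{\pi_X}\rVert_{TV} \leq 1/4$ combined with submultiplicativity of TV distance along multiples of $t^{\pi_X}$ yields such a $T$ of order $(t^{\pi_X}/\log 2)\log(2M/\delta'')$; the $1/\log 2$ factor in the lemma comes from converting the geometric decay rate $(1/2)^{T/t^{\pi_X}}$ to a target tolerance.

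Second, I would iteratively apply a maximal coupling to match the thinned chain $X_T, X_{2T}, \ldots, X_{MT}$ to an i.i.d.\ sequence $Y_1, \ldots, Y_M$ with common law $\rho_{\pi_X}$. Conditional on the preceding state, each single-step coupling succeeds with probability at least $1 - \delta''/(2M)$, so a union bound gives a joint coupling failure probability at most $\delta''/2$. Conditional on the coupling succeeding, the $Y_i$ are independent draws from $\rho_{\pi_X}$, so by the assumption $\rho_{\pi_X}(s,a) \geq p_{\min}$ the probability that a specific $(s,a)$ is never drawn is at most $(1-p_{\min})^M \leq e^{-M p_{\min}}$; a union bound over the $SA$ pairs then forces $M = \lceil \log(2SA/\delta'')/p_{\min} \rceil$ in order to push the coverage failure below $\delta''/2$. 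Combining the two failure modes keeps the overall failure probability below $\delta''$, and the total number of transitions is $MT$, which after absorbing $\log(2M/\delta'')$ into $\log(2/p_{\min})$ (using that $M \geq 1/p_{\min}$) produces the claimed bound.

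The main obstacle I anticipate is the second step, where the coupling to the i.i.d.\ sequence must hold uniformly over the starting state after each thinning interval; one needs to be careful that the coupling error at step $i$ is controlled by the TV distance of $P_{\pi_X}^T(X_{(i-1)T}, \cdot)$ from $\rho_{\pi_X}$, which is precisely where the worst-case-over-$x$ form of the mixing-time definition is essential. The remainder is constant-tracking: preserving the $2/\log 2$ leading constant requires care about how the inequality $\lVert \cdot \rVert_{TV} \leq 1/4$ propagates along multiples of $t^{\pi_X}$, and about balancing the two logarithmic factors so that only $\log(2/p_{\min})$ and $\log(SA/\delta'')$ survive.
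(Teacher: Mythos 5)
Your high-level plan --- thin the trajectory at a spacing $T$ tied to the mixing time, argue the thinned samples are nearly stationary, and finish with a coupon-collector bound plus a union bound over the $SA$ pairs --- is exactly the route the paper takes, and your observation that the worst-case-over-starting-state form of the mixing bound is what makes the per-step argument go through is correct. However, there is a genuine quantitative gap in your second step that prevents the proof from delivering the stated bound. You demand that each thinned sample couple to a stationary draw with failure probability $\delta''/(2M)$, which forces $T = \Theta\bigl(t^{\pi_X}\log(2M/\delta'')/\log 2\bigr)$. With $M = \lceil \log(2SA/\delta'')/p_{\min}\rceil$ this gives $\log(2M/\delta'') = \log(2/p_{\min}) + \log\log(2SA/\delta'') + \log(1/\delta'')$, and the final ``absorption'' of $\log(2M/\delta'')$ into $\log(2/p_{\min})$ is false: the inequality $M \ge 1/p_{\min}$ runs in the wrong direction for an upper bound, and the extra $\log(1/\delta'')$ term can dominate $\log(2/p_{\min})$ (take $p_{\min} = 1/2$ and $\delta''$ small). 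Your argument therefore proves only a weaker bound carrying an additional factor of order $\log\bigl(\log(SA/\delta'')/\delta''\bigr)$.

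The fix is to decouple the thinning tolerance from the failure budget. Set $\tau = p_{\min}/2$, so that $T = t^{\pi_X}\log(2/p_{\min})/\log 2$ independently of $M$ and $\delta''$, and instead of insisting that the coupling succeed with high probability, fold the residual total-variation error into the per-sample miss probability: conditional on the history, each thinned sample lands on a fixed $(s,a)$ with probability at least $p_{\min} - \tau = p_{\min}/2$, so by the tower property the probability of never observing $(s,a)$ in $N$ thinned samples is at most $(1 - p_{\min}/2)^N \le \exp(-Np_{\min}/2)$. The union bound over $SA$ pairs then gives $N \ge (2/p_{\min})\log(SA/\delta'')$ with no $\delta''/2$ split needed, and $NT$ is exactly the claimed bound. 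This is the paper's argument; your coupling construction is a perfectly good way to make the conditional-probability step rigorous, but it must be run at the constant tolerance $p_{\min}/2$ rather than at $\delta''/(2M)$.
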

The proof can be found in the Appendix. \cref{lemma:lucas_lemma} combined with \cref{lemma:phased_q} give the overall sample complexity of the phased-Q learning stage of our algorithm.

\subsection{Bounding the TV Between Expert and Imitator}

With a good sense for the data requirements of our offline RL algorithm, we can proceed to ensure performance of our algorithm. We do so by invoking a key lemma of \citet{ciosek2022imitation}. However, this lemma applies only to the average reward setting, while our results thus far concern policies optimised for discounted reward. As such we must first bound the suboptimality of the learned policy with respect to the average value attained. We achieve this with a relaxation of Theorem 1 by \citet{kakade2001optimizing}.

\begin{lemma}
    \label{lemma:discounted_to_average}
    Suppose policy $\pi$ is $\epsilon$-suboptimal or better at each state with respect to the discounted return. Let $\pi^*$ be the optimal policy under the average reward criterion. We assume  $P_{\pi^*}$ has $S$ distinct eigenvalues. Taking $\Sigma$ to be the matrix of right eigenvectors of $P_{\pi^*}$ with corresponding eigenvalues: $\lambda_1  = 1 > ... \geq \vert \lambda_S \vert$, we have
    \[
        \mu^{\pi} \geq \mu^{\pi^*} - \kappa(\Sigma)\lVert \mathbf{r} \rVert \frac{1 - \gamma}{1 - \gamma \lvert \lambda_2 \rvert} - (1 - \gamma) \epsilon,
    \]
    where $\kappa(\Sigma) := \lVert \Sigma \rVert_2 \lVert \Sigma^{-1} \rVert_2$ is the condition number of the matrix $\Sigma$, and $\mathbf{r}$ is the vector with entries $r(s, \pi^{*}(s))$ for each state.
\end{lemma}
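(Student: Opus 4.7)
The plan is to combine the pointwise suboptimality hypothesis with a spectral decomposition of the discounted value under $\pi^*$, in the spirit of the original Kakade (2001) argument. To start, I would note that because $V^*$ is the optimal discounted value, $V^*(s) \geq V^{\pi^*}(s)$ for every $s$, so the hypothesis $V^\pi(s) \geq V^*(s) - \epsilon$ yields $V^\pi(s) \geq V^{\pi^*}(s) - \epsilon$ at every state. Using the preliminaries identity $\mu^\pi = (1-\gamma)\sum_s \rho_\pi(s) V^\pi(s)$ gives
\[\mu^\pi \geq (1-\gamma)\rho_\pi^\top V^{\pi^*} - (1-\gamma)\epsilon,\]
so the task reduces to showing that $(1-\gamma)\rho_\pi^\top V^{\pi^*}$ is close to $\mu^{\pi^*}$.

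Next I would exploit the diagonalisation $P_{\pi^*} = \Sigma \Lambda \Sigma^{-1}$, available under the distinct-eigenvalues hypothesis. Since $P_{\pi^*}$ is row-stochastic and ergodic, $\lambda_1 = 1$, and the eigenvectors can be normalised so that the first column of $\Sigma$ is $\mathbf{1}$ and the first row of $\Sigma^{-1}$ is exactly $\rho_{\pi^*}^\top$ (the unique left-eigenvector of eigenvalue $1$ satisfying $\rho_{\pi^*}^\top \mathbf{1} = 1$). Writing $V^{\pi^*} = \Sigma (I-\gamma\Lambda)^{-1}\Sigma^{-1}\mathbf{r}$ and splitting
\[(I - \gamma \Lambda)^{-1} = \frac{1}{1-\gamma}\, e_1 e_1^\top + M_\gamma,\]
the top piece collapses to $\mathbf{1}\rho_{\pi^*}^\top \mathbf{r} / (1-\gamma) = \mu^{\pi^*}\mathbf{1}/(1-\gamma)$, so
\[V^{\pi^*} = \frac{\mu^{\pi^*}}{1-\gamma}\mathbf{1} + h_\gamma, \qquad h_\gamma := \Sigma M_\gamma \Sigma^{-1}\mathbf{r}.\]
Since $\rho_\pi^\top \mathbf{1} = 1$, the dominant term contributes exactly $\mu^{\pi^*}$, leaving only the bias $h_\gamma$ to be controlled.

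Finally I would bound the bias via matrix norms: Cauchy--Schwarz together with $\lVert\rho_\pi\rVert_2 \leq 1$ gives $|\rho_\pi^\top h_\gamma| \leq \lVert h_\gamma\rVert_2$, and submultiplicativity of the spectral norm gives $\lVert h_\gamma\rVert_2 \leq \kappa(\Sigma)\lVert M_\gamma\rVert_2 \lVert\mathbf{r}\rVert$. The diagonal matrix $M_\gamma$ has a zero in the first slot and entries $(1-\gamma\lambda_i)^{-1}$ for $i \geq 2$; since $|1 - \gamma \lambda_i| \geq 1 - \gamma|\lambda_i| \geq 1 - \gamma|\lambda_2|$ even when $\lambda_i$ is complex, we obtain $\lVert M_\gamma \rVert_2 \leq 1/(1 - \gamma|\lambda_2|)$, delivering the claimed inequality after multiplying through by $(1-\gamma)$.

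The main obstacle is the spectral bookkeeping: choosing the eigenvector normalisation so that the leading eigen-component of $V^{\pi^*}$ comes out exactly as $\mu^{\pi^*}\mathbf{1}/(1-\gamma)$, and verifying the geometric estimate $|1-\gamma\lambda_i| \geq 1 - \gamma|\lambda_2|$ for possibly complex eigenvalues. The distinct-eigenvalue assumption is precisely what avoids the complications of Jordan blocks; relaxing it would require handling generalised eigenspaces with polynomial-in-$\gamma$ correction factors.
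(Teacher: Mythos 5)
Your proposal is correct and follows essentially the same route as the paper: reduce via the $\epsilon$-suboptimality hypothesis and the identity $\mu^{\pi} = (1-\gamma)\sum_s \rho_{\pi}(s)V^{\pi}(s)$ to bounding the gap between $(1-\gamma)\rho_{\pi}^{\top}V^{\pi^*}$ and $\mu^{\pi^*}$, and then close that gap with the eigendecomposition of $P_{\pi^*}$. The only difference is that the paper delegates the spectral step verbatim to \citet{kakade2001optimizing}, whereas you carry it out explicitly (isolating the $\lambda_1=1$ component and bounding the remainder by $\kappa(\Sigma)\lVert\mathbf{r}\rVert/(1-\gamma\lvert\lambda_2\rvert)$), which is a faithful and correct rendering of that cited argument.
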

The proof is in the appendix. It follows closely that of \citet{kakade2001optimizing}.

Lemma \ref{lemma:discounted_to_average} gives us a lower bound on the suboptimality with respect to the average reward of a policy $\epsilon$-suboptimal with respect to the discounted reward. Next, we need to ensure that it is possible to achieve a policy which performs well with respect to the intrinsic reward. This is accomplished by invoking Lemma 5 of \citet{ciosek2022imitation}, reproduced here for convenience.
 
\begin{lemma}[Lemma 5, \citealp{ciosek2022imitation}]
\label{lemma:ciosek_5}
 Given an expert dataset consisting of $\lvert D_E \rvert$ points, 
 a policy $\pi$ which maximises the average intrinsic reward achieves an expected average intrinsic reward of at least
 \[ \mu_{\textrm{int}}^\pi := 1 - \nu - \sqrt{\frac{8St^{\pi_E}}{\lvert D_E \rvert}},\]
 for all error terms $\nu > 0$, with probability at least
 \[1 - \delta''' := 1 - 2 \exp\left(-\frac{\nu^2 \lvert D_E \rvert}{4.5 t^{\pi_E}}\right).\]
 
\end{lemma}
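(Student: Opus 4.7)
The plan is to reduce the claim to a statement about the expert's own intrinsic reward and then control the coverage of $D_E$ under $\rho_{\pi_E}$. Since $\pi$ is assumed to maximise the average intrinsic reward over all policies and $\pi_E$ is itself a valid policy, the trivial inequality $\mu_{\text{int}}^\pi \geq \mu_{\text{int}}^{\pi_E}$ holds, so it suffices to lower-bound $\mu_{\text{int}}^{\pi_E}$.

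Exploiting determinism, every state $s$ that appears in $D_E$ (which was generated by rolling out $\pi_E$) must be paired there with action $\pi_E(s)$, so $\hat{r}(s, \pi_E(s)) = 1$. Writing $U \subseteq \mathcal{S}$ for the random set of states covered by $D_E$, this gives
\[
\mu_{\text{int}}^{\pi_E} \;=\; \sum_{s} \rho_{\pi_E}(s)\, \hat{r}(s, \pi_E(s)) \;=\; 1 - \sum_{s \notin U} \rho_{\pi_E}(s),
\]
reducing the claim to an upper bound on the ``uncovered mass'' $\sum_{s \notin U} \rho_{\pi_E}(s)$.

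I would bound this uncovered mass in two steps. First, in expectation: a coverage-time estimate for ergodic chains says that a state $s$ with stationary probability $p_s$ fails to appear in $|D_E|$ samples of the expert chain with probability at most $\exp(-c\, p_s |D_E| / t^{\pi_E})$. Splitting the state space at a threshold on $p_s$—so that rare states contribute their mass directly while common states contribute only through their tiny miss probability—and optimising the threshold yields an expected uncovered mass of order $\sqrt{S t^{\pi_E}/|D_E|}$, which after bookkeeping matches the $\sqrt{8 S t^{\pi_E}/|D_E|}$ term in the statement. Second, to upgrade this expectation bound to a high-probability one, I would invoke a Bernstein-type concentration inequality for functionals of an ergodic Markov chain (e.g.\ Paulin's inequality), whose effective sample size $|D_E|/t^{\pi_E}$ produces precisely the failure probability $2\exp\bigl(-\nu^2 |D_E| / (4.5\, t^{\pi_E})\bigr)$ at deviation $\nu$ from the mean.

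The hard part will be the concentration step. The uncovered mass is a set-valued functional of the full trajectory, and flipping a single sample can change $U$ by one state, shifting the uncovered mass by as much as $\max_s \rho_{\pi_E}(s)$; a naive McDiarmid bound is therefore vacuous. The resolution is to use a chain-specific inequality that tracks variance rather than range and replaces independence by a mixing-time assumption. Threading this inequality through the coverage decomposition above, while simultaneously pinning down the constants $4.5$ and $8$ in the final statement, is the most delicate part of the argument.
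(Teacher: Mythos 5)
Note first that the paper does not prove this lemma at all: it is imported verbatim from \citet{ciosek2022imitation} (``reproduced here for convenience''), so there is no in-paper proof to compare against. That said, your reconstruction follows the same skeleton as the cited source: reduce to the expert via the maximality of $\pi$ and determinism of $\pi_E$ (so that $\mu_{\textrm{int}}^{\pi}\geq\mu_{\textrm{int}}^{\pi_E}=\rho_{\pi_E}(U)$, turning the claim into a missing-mass bound), control the \emph{expected} missing mass by splitting states at a stationary-probability threshold to get the $\sqrt{8St^{\pi_E}/\lvert D_E\rvert}$ term, and then concentrate around that expectation with a mixing-time-dependent inequality to get the $\nu$ term and the $2\exp(-\nu^2\lvert D_E\rvert/(4.5\,t^{\pi_E}))$ failure probability. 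The reduction steps you give are fully correct under the paper's Assumptions 1--2.

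The one place where your proposal is a sketch rather than a proof is exactly the step you flag as hard: the concentration of the missing mass. You correctly observe that the bounded difference of the uncovered mass under a single-coordinate change is $\max_s\rho_{\pi_E}(s)$, so a naive McDiarmid argument (even its Markov-chain analogue) does not yield an effective sample size of $\lvert D_E\rvert/t^{\pi_E}$; but you then assert, without derivation, that a Paulin-style Bernstein inequality ``produces precisely'' the stated tail with constant $4.5$. That constant ($=9/2$) is the signature of the Markov-chain McDiarmid bound $2\exp\bigl(-2\nu^2/(9\,t\,\lVert c\rVert_2^2)\bigr)$ applied with $\lVert c\rVert_2^2=1/\lvert D_E\rvert$, i.e.\ with per-coordinate differences of order $1/\lvert D_E\rvert$ --- which the missing-mass functional does not obviously satisfy. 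Resolving this (whether by a negative-association/Kearns--Saul-type argument adapted to chains, or by concentrating a different, empirically averaged functional) is the actual content of the cited lemma, and your proposal defers it entirely. So the route is the right one, but the decisive step is named rather than carried out, and the constants $8$ and $4.5$ remain unverified.
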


To connect high performance on the intrinsic reward with the performance on the extrinsic reward, we again invoke a key lemma of \citet{ciosek2022imitation},
\begin{lemma}[Lemma 7, \citealp{ciosek2022imitation}]
\label{lemma:ciosek_7}
An agent which achieves an average reward of $1 - \epsilon'$ on the intrinsic reward MDP also achieves average reward of
\[(1 - \epsilon')\mu^{\pi_E} - 4 t^{\pi_E} \epsilon'\]
on the true MDP.
\end{lemma}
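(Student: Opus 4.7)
The plan is to reduce the guarantee on the true average reward to a statement about how close $\rho_\pi$ is to $\rho_{\pi_E}$, using the intrinsic reward $\hat r(s,a) = \mathbf{1}_{D_E}(s,a)$ as a support-matching certificate, and then invoking a mixing-time sensitivity bound for stationary distributions. The starting observation is that because $\hat r$ is an indicator on $D_E$, achieving average intrinsic reward $1-\epsilon'$ is equivalent to $\rho_\pi(D_E) = 1-\epsilon'$: at most an $\epsilon'$-fraction of $\pi$'s long-run visitation lies outside the expert-visited support.

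Next I would decompose the true average reward by the support of $D_E$:
\[
\mu^\pi
= \sum_{(s,a) \in D_E} \rho_\pi(s,a)\, r(s,a) + \sum_{(s,a)\notin D_E} \rho_\pi(s,a)\, r(s,a),
\]
and discard the (nonnegative) second sum. For the first sum, I would compare $\rho_\pi$ restricted to $D_E$ with $\rho_{\pi_E}$. Intuitively, one can construct an auxiliary policy $\tilde\pi$ that agrees with $\pi_E$ on the support of $D_E$ and with $\pi$ off it; the two policies $\pi$ and $\tilde\pi$ differ only on a set of states whose stationary mass under $\pi$ is at most $\epsilon'$, so a classical sensitivity bound for ergodic Markov chains (of the type used throughout \citealp{ciosek2022imitation}, based on coupling and the $\tfrac14$-TV definition of mixing time $t^{\pi_E}$) yields $\lVert\rho_\pi - \rho_{\tilde\pi}\rVert_{TV} \lesssim t^{\pi_E}\epsilon'$. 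Since $\rho_{\tilde\pi}$ matches $\rho_{\pi_E}$ on $D_E$ (up to a tail of weight $\epsilon'$), this translates into
\[
\sum_{(s,a)\in D_E}\rho_\pi(s,a)\,r(s,a) \;\geq\; (1-\epsilon')\,\mu^{\pi_E} \;-\; 4\,t^{\pi_E}\,\epsilon',
\]
where the prefactor $(1-\epsilon')$ records that only that fraction of $\rho_\pi$'s mass is available to match $\rho_{\pi_E}$ on the expert support, while the additive $4\,t^{\pi_E}\epsilon'$ absorbs both the perturbation to the stationary distribution and the tail outside $D_E$.

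The main obstacle is Step 3: making the mixing-time sensitivity bound precise with the exact constant $4$. This requires careful coupling of the Markov chains induced by $\pi$ and $\tilde\pi$, together with the definition of $t^{\pi_E}$ via $\tfrac14$-TV convergence, and is the technical core inherited from \citet{ciosek2022imitation}. The remaining steps (support concentration, reward decomposition, combining the bounds with $r\in[0,1]$) are elementary and follow by direct calculation once the sensitivity bound is in hand.
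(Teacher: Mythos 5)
First, a point of reference: the paper does not prove this statement at all. It is quoted verbatim as Lemma~7 of \citet{ciosek2022imitation} and used as a black box, so there is no in-paper proof to compare against; your attempt has to stand on its own as a reconstruction of the argument in that reference.

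As a reconstruction, your outline has the right skeleton --- (i) average intrinsic reward $1-\epsilon'$ is exactly $\rho_\pi(D_E)=1-\epsilon'$, (ii) because the expert and imitator are both deterministic (Assumption~1 of the paper) and $D_E$ contains only expert state--actions, $(s,\pi(s))\in D_E$ forces $\pi(s)=\pi_E(s)$, so the two policies disagree only on a set of $\rho_\pi$-measure at most $\epsilon'$, and (iii) a mixing-time perturbation bound converts this into closeness of the stationary distributions --- but the proof has a genuine gap at precisely the step that constitutes the entire content of the lemma. You assert $\lVert\rho_\pi-\rho_{\tilde\pi}\rVert_{TV}\lesssim t^{\pi_E}\epsilon'$ and defer it as ``the technical core inherited from \citet{ciosek2022imitation}''; since everything else is bookkeeping with $r\in[0,1]$, deferring this is deferring the lemma. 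The step is not routine: the disagreement set is small under $\rho_\pi$ while the mixing time in the bound is that of the \emph{expert} chain, and handling this asymmetry (via the coupling/block decomposition that produces the factor $4$ from the $\tfrac14$-TV definition of $t^{\pi_E}$) is exactly where the work lies. Moreover, your auxiliary policy $\tilde\pi$ (agreeing with $\pi_E$ on the support of $D_E$ and with $\pi$ off it) is an unnecessary detour that makes things worse: it forces two perturbation comparisons ($\pi$ vs.\ $\tilde\pi$ and $\tilde\pi$ vs.\ $\pi_E$), each contributing its own $O(t^{\pi_E}\epsilon')$ term, which would not recover the stated constant. The determinism observation in (ii) lets you compare $\pi$ and $\pi_E$ directly and should replace the auxiliary construction. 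Finally, note that a clean TV bound of $4t^{\pi_E}\epsilon'$ would give $\mu^\pi\ge\mu^{\pi_E}-4t^{\pi_E}\epsilon'$ outright, so your accounting of the $(1-\epsilon')$ prefactor as ``mass available to match'' is a post-hoc rationalisation of slack in the target inequality rather than something your argument actually produces.
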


We are now ready to conclude, evaluating the overall sample complexity of our algorithm.

\begin{theorem}
\label{thm:regret_main}
Consider an instance of \cref{algo:ilbrl}. In our setting (Assumptions \ref{assumption:deterministic}-\ref{assumption:kakade} in the appendix),
in order to achieve regret on the average reward problem of less than $\epsilon$, or total variation distance between the expert and the imitation policy less than $\epsilon$, with probability $1 - \delta$, we need to sample 
\begin{gather*}
\max\left\{
    \frac{1}{\epsilon^2}32St^{\pi_E}(1+4t^{\pi_E})^2, 
    \right. \quad \quad \quad \quad \quad \quad \quad \quad \\ 
    \quad \quad \quad \quad \quad \quad \quad \quad \left. 
    \frac{
        4.5t^{\pi_E}16(1 +  4 t^{\pi_E})^2 \log(4 / \delta)
    }{\epsilon^2}
\right\}
\end{gather*}
transitions from the expert policy, and 
\begin{gather*}
\mathcal{O}\left(
    \frac{t^{\pi_X}}{p_{\rm min}}
    \log \left( \frac{1}{p_{\rm min}} \right) 
    \log^2 \left( \frac{SA}{\delta} \right)
        \right. \quad \quad \quad \quad \quad \quad \\
        \quad \quad \quad \quad \quad \quad \left.
    \log\left(
        \frac{t^{\pi_E}\beta}{\epsilon(1 - \lambda)}
    \right)
    \frac{(t^{\pi_E})^8\beta^6}{\epsilon^8 (1 - \lambda)^6}
\right)
\end{gather*}
from the exploratory policy, where $\beta =  \kappa(\Sigma)\lVert \mathbf{r} \rVert$, as described in \cref{lemma:discounted_to_average}.
\end{theorem}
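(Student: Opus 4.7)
The plan is to chain the preceding lemmas end-to-end, working backwards from the target regret $\epsilon$. Budget the failure probability as $\delta = \delta' + \delta'' + \delta'''$ across \cref{lemma:phased_q}, \cref{lemma:lucas_lemma}, and \cref{lemma:ciosek_5}, each getting $\delta/4$. The first reduction, via \cref{lemma:ciosek_7}, converts the required gap on the true reward into a required suboptimality on the intrinsic average reward: an intrinsic-reward loss of $\epsilon'$ propagates to a true-reward loss of at most $\epsilon'(\mu^{\pi_E} + 4t^{\pi_E}) \leq (1+4t^{\pi_E})\epsilon'$, so it suffices to drive the intrinsic suboptimality below $\epsilon_{\mathrm{int}} := \epsilon/(1+4t^{\pi_E})$. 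The TV-distance form of the theorem follows from the standard TV-vs-IL inequality noted in the preliminaries, so both claims reduce to the same intrinsic-reward target.

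I split $\epsilon_{\mathrm{int}}$ between two sources. The first is the gap between the optimal intrinsic-reward policy and the best policy that merely respects expert support, controlled by \cref{lemma:ciosek_5}: setting $\nu = \epsilon_{\mathrm{int}}/2$ and forcing $\sqrt{8St^{\pi_E}/|D_E|} \leq \epsilon_{\mathrm{int}}/2$ yields the first expert-dataset bound $32St^{\pi_E}(1+4t^{\pi_E})^2/\epsilon^2$, while the probability clause of that lemma, solved for $|D_E|$ with $\nu$ and $\delta''' = \delta/4$ substituted, yields the second term. The second source is the gap introduced because phased Q-learning only approximately maximises the intrinsic reward. By \cref{lemma:discounted_to_average} an $\epsilon_{\mathrm{disc}}$-suboptimal discounted policy loses at most $\beta(1-\gamma)/(1-\gamma|\lambda_2|) + (1-\gamma)\epsilon_{\mathrm{disc}} \leq \beta(1-\gamma)/(1-\lambda) + (1-\gamma)\epsilon_{\mathrm{disc}}$ in average reward. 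Forcing each term to be at most $\epsilon_{\mathrm{int}}/4$ pins down $1-\gamma = \Theta(\epsilon_{\mathrm{int}}(1-\lambda)/\beta)$ and $\epsilon_{\mathrm{disc}} = \Theta(\beta/(1-\lambda))$.

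Feeding these choices into \cref{lemma:phased_q} with $\ell$ chosen so that $\gamma^\ell \leq \epsilon_{\mathrm{disc}}(1-\gamma)^2/4$ (giving $\ell = \Theta(\log(\beta/(\epsilon_{\mathrm{int}}(1-\lambda)))/(1-\gamma))$) yields $m\ell = \tilde{O}(\log(SA/\delta')/(\epsilon_{\mathrm{disc}}^2(1-\gamma)^6))$ parallel samples, and substitution collapses the polynomial to order $\beta^6(t^{\pi_E})^8/(\epsilon^8(1-\lambda)^6)$, with the logarithmic $\ell$ contributing the $\log(t^{\pi_E}\beta/(\epsilon(1-\lambda)))$ factor. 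Finally, multiplying by the per-parallel-sample overhead from \cref{lemma:lucas_lemma} contributes the $t^{\pi_X}/p_{\min}$, $\log(1/p_{\min})$, and the second $\log(SA/\delta)$ factor in the stated bound.

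The main obstacle is the algebraic bookkeeping in the last two paragraphs: the choice of $1-\gamma$ forced by \cref{lemma:discounted_to_average} must be simultaneously compatible with the $\ell$ used in \cref{lemma:phased_q}, so $\gamma^\ell$ is genuinely negligible against $\epsilon_{\mathrm{disc}}(1-\gamma)^2$, and one must verify that after substitution the polynomial exponents in $(t^{\pi_E}, \beta, 1/(1-\lambda), 1/\epsilon)$ collapse to exactly $(8,6,6,8)$ rather than something larger --- in particular, the cancellation $\epsilon_{\mathrm{disc}} = \Theta(\beta/(1-\lambda))$ is what prevents extra factors of $\epsilon^{-1}$ from appearing. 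Everything else reduces to a union bound across the three failure events and careful tracking of constants.
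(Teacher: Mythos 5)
Your argument follows the paper's proof essentially step for step: reduce to the intrinsic-reward problem via \cref{lemma:ciosek_7}, decompose the intrinsic suboptimality into the four terms coming from \cref{lemma:ciosek_5} and \cref{lemma:discounted_to_average}, solve for $\gamma$, the discounted tolerance, $\nu$ and $\lvert D_E\rvert$, and union-bound the three failure events before multiplying the bounds of \cref{lemma:phased_q} and \cref{lemma:lucas_lemma}. Two pieces of bookkeeping do not go through as written, though neither reflects a missing idea. First, your error budget over-spends: you give $\epsilon_{\mathrm{int}}/2$ to each of $\nu$ and $\sqrt{8St^{\pi_E}/\lvert D_E\rvert}$ and then a further $\epsilon_{\mathrm{int}}/4$ to each of the two terms from \cref{lemma:discounted_to_average}, so the four contributions sum to $\tfrac{3}{2}\epsilon_{\mathrm{int}}$ rather than $\epsilon_{\mathrm{int}}$; as written your policy is only guaranteed $\tfrac{3}{2}\epsilon$-suboptimal. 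The paper gives each term $\epsilon_{\mathrm{int}}/4$. This only shifts constants (your half-split happens to reproduce the $32St^{\pi_E}(1+4t^{\pi_E})^2/\epsilon^2$ constant in the theorem statement, where the paper's own quarter-split yields $128$ in the appendix), but it should be rescaled.

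Second, the exponents in the exploratory bound do not follow from your displayed intermediate. With $1-\gamma = \Theta\bigl(\epsilon_{\mathrm{int}}(1-\lambda)/\beta\bigr)$ and $\epsilon_{\mathrm{disc}} = \Theta\bigl(\beta/(1-\lambda)\bigr)$, the quantity $\log(SA/\delta')/\bigl(\epsilon_{\mathrm{disc}}^2(1-\gamma)^6\bigr)$ is of order $(t^{\pi_E})^6\beta^4/\bigl(\epsilon^6(1-\lambda)^4\bigr)$, not $(t^{\pi_E})^8\beta^6/\bigl(\epsilon^8(1-\lambda)^6\bigr)$. Moreover, the right-hand side of \cref{lemma:phased_q} carries an explicit factor of $\ell$ in its numerator, and since $\ell = \Theta\bigl(\log(\cdot)/(1-\gamma)\bigr)$ is polynomial in $1/\epsilon$ it cannot be absorbed into a $\tilde{\mathcal{O}}$; including it brings you only to exponents $(7,5,5,7)$ times a logarithm. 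The paper reaches $(8,6,6,8)$ by the cruder bound $\ell = \mathcal{O}\bigl((\alpha\beta/(1-\lambda))^2\bigr)$. Your tighter handling of $\ell$ still implies the stated $\mathcal{O}(\cdot)$ --- indeed something slightly stronger, since $x\log x = \mathcal{O}(x^2)$ --- so the theorem survives, but the substitution chain you wrote does not itself produce the claimed exponents and should either be corrected to $(7,5,5,7)$ with the extra logarithm or relaxed to the paper's quadratic bound on $\ell$.
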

We defer the proof to the Appendix, which largely consists of disentangling the relevant constants. We note that the bound on the amount of expert data needed differs from that of \citet{ciosek2022imitation} only in terms of a constant factor. This indicates that the burden of offline learning falls mostly on the exploratory policy. We use big-O notation to denote the number of exploratory samples needed in order to provide intuition, but specific constants can be found in the proof of \cref{thm:regret_main}.

\section{Empirical Evaluation}
In order to evaluate the efficacy of \algo, we evaluate it using a relaxation of our density labelling for the continuous control setting, similar to that of \citet{ciosek2022imitation}.

\subsection{Realistic Evaluation of Offline RL Methods}

\begin{table*}[ht]
    \centering \small
\begin{tabular}{|l|c|c|c|c|}
    \hline
    \textbf{Algorithm} & \textbf{Hopper}  & \textbf{Halfcheetah} & \textbf{Walker2d} & \textbf{Ant}\\
    \hline\hline
    ILBRL & $\mathbf{64.634}\pm8.417$ & $\mathbf{49.856}\pm1.229$ & $\mathbf{80.681}\pm6.154$ & $\mathbf{104.907}\pm1.109$\\
    BC & $\mathbf{69.459}\pm15.005$ & $4.884\pm0.338$ & $27.04\pm5.937$ & $21.666\pm5.17$\\
    SMODICE & $53.228\pm2.508$ & $42.662\pm0.422$ & $\mathbf{75.942}\pm0.794$ & $90.05\pm1.436$\\
    ORIL & $5.7\pm5.214$ & $\mathbf{55.079}\pm4.207$ & $25.268\pm14.239$ & $35.81\pm2.296$\\
    \hline
    (Exploration) & $55.62$ & $42.686$ & $67.371$ & $88.333$\\
    (Expert) & $111.016$ & $94.371$ & $107.747$ & $121.639$\\
    \hline
\end{tabular}
\caption{Mean performances on environment during evaluation for three training seeds, averaged over 30 rollouts. Error measures are given by a 95\% confidence interval based on the mean performance per training seed. }
\label{tab:mean_table}
\end{table*}

While batch RL algorithms purport to circumvent dangerous exploration problems and data complexity issues by using a dataset of fixed size, it is standard for batch methods to perform hyperparameter validation on the true environment, with access to conceivably unlimited data and the true reward function. This means that the data requirements of batch RL algorithms are not made explicit, and IL algorithms are affected by a data leak, effectively meaning they are tuned to optimise the true rewards. In an extreme case we can imagine that all parameters are taken to be hyperparameters, and we essentially train a ``batch'' algorithm, though all optimisation happens on data from the true environment.

This issue has been noted recently by \citet{paine2020hyperparameter}, who suggest using an independent offline policy evaluation (OPE) procedure on the policy output by the offline RL algorithm. Hyperparameters are optimised in accordance with the estimated values of the initial states in the dataset. However, little attention is given to the fact that the OPE is itself a challenging and provably difficult problem \citep{wang2020statistical}, requiring hyperparameter tuning of its own. In order to address these issues, we develop a novel evaluation protocol which enables tuning of OPE in a practically reasonable and safe manner. The details of our approach are below, with pseudocode in the appendix.

\paragraph{Data Partitioning}
In order to prevent overfitting to our limited dataset and biasing our performance estimates, as in supervised learning, we first partition our dataset into a training set and evaluation set. The policy evaluation set is then further partitioned into an \textit{evaluation training set}, and an \textit{evaluation validation set}, preventing overfitting of the OPE algorithm to the evaluation data. 

\paragraph{Tuning OPE}
In order to tune the OPE algorithm, at least two deterministic policies of different known quality are needed. In addition, we require that the dataset be composed of transitions which include the actions taken at the subsequent state. In this work we employ Expected SARSA \citep{van2009theoretical}, though in principle, other methods for OPE can be used. ESARSA is then tuned on the data with the known, ``safe'' policies to ensure convergence to the values of the known policies, as well as good separation of policy values. Since data is generated by multiple policies, one key hyperparameter of the policy evaluation process is the composition of the evaluation dataset --- we need to find a reasonable data distribution, such that state distribution induced by our target policies are not too far from the distribution of the dataset, a key factor which leads to divergence. These hyperparameters are then used to evaluate the policies learned via offline RL. While in general this may not lead to a convergent OPE algorithm for all possible learned policies, such an algorithm is not yet possible in general. Instead, our procedure provides a consistent evaluation protocol without the use of new data from the environment, enabling principled selection of policies and grounding of the learned values in the environment at hand.

\paragraph{Policy Selection}
Each batch RL method is trained across several hyperparameter configurations, and several seeds. See Appendix 2 for methodological details. Using the evaluation hyperparameters tuned on the known safe policies, evaluation of a single hyperparameter and random seed setting for the batch algorithm is then performed across several evaluation seeds in order to estimate the true value of the learned policy. Policy evaluation is run for a fixed number of steps, determined by the amount of time it took for evaluation to converge on the known policies. Performance of a control hyperparameter setting is then given by the mean output of the learned value function on a held-out dataset consisting of initial states. The policy which achieves the highest mean convergent value across seeds and rollouts is then chosen for evaluation on the true environment, which forms the results that are reported in the next section (details of this procedure are provided in the appendix).

\paragraph{Imitation Learning}
Our paper is concerned with IL, which in its most general form completely lacks access to the true reward function. However, while our algorithm needs no access to true rewards, in order to implement our protocol for the purpose of evaluation, knowledge of true rewards is needed for the transitions in the dataset. These rewards are used for policy evaluation only: importantly they are not needed for \algo. To our knowledge, no previous work in IL is able to strictly adhere to an unknown reward function, as hyperparameters are optimised according to returns on the true environment. We see the present work as a solution to this problem by limiting reward function access to an algorithm-independent policy evaluation procedure and a restricted dataset, rather than the complete MDP.

\begin{figure}[ht]
    \centering
    \includegraphics[width=0.7\linewidth]{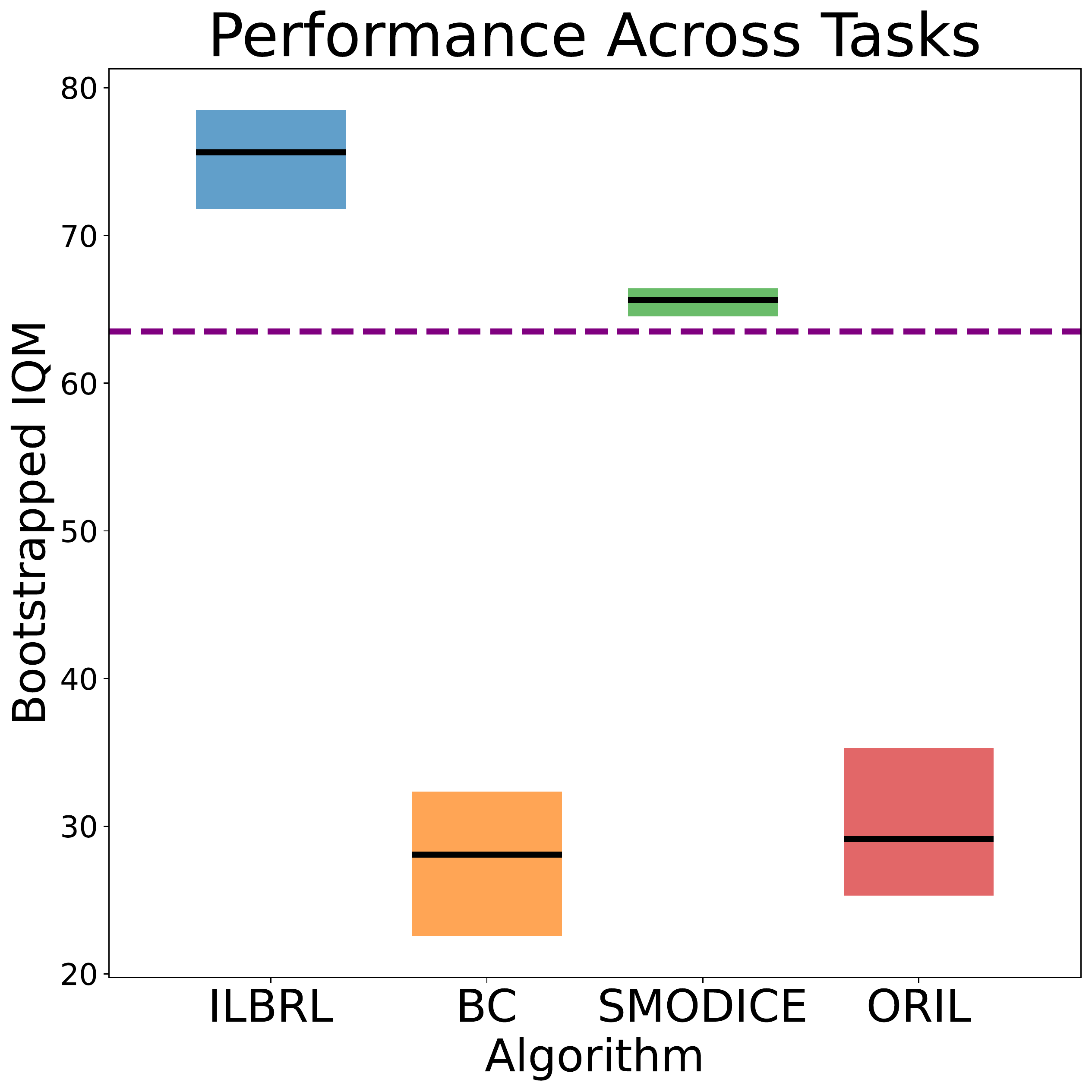}
    \caption{Aggregate Performance of ILBRL and baselines across D4RL tasks. Horizontal bars represent interquartile means. Shaded regions give 95\% confidence intervals based on bootstrapped samples \citep{agarwal2021deep}. The purple, dashed line gives the average performance of the exploratory policy. The expert policy achieves an expected score of 100.}
    \label{fig:bootstrap_CIs}
\end{figure}

\subsection{Continuous Control Tasks}

In many realistic RL settings, tabular function approximation is not possible, due to large or continuous state and action spaces. The Gym MuJoCo benchmark \citep{mujoco} is a standard set of challenging RL environments which cover such continuous control tasks. In order to extend our algorithm to these settings, we apply a modified version of the continuous support reward relaxation of \citet{ciosek2022imitation}. We employ the normalised reward:
\[
r(s,a) \coloneqq \max_{s', a' \in D_E}  1 - \frac{1}{d_{\textrm{max}}^{\frac{1}{2}}}\left\lVert \phi(s, a) - \phi(s', a') \right\rVert_2^{\frac{1}{2}},
\]
where
\[
d_{\textrm{max}} \coloneqq \max_{s, a \in D_U} \min_{s', a' \in D_E} \left\lVert \phi(s, a) - \phi(s', a')\right\rVert_2
\]
is the maximum distance observed in the dataset. The use of normalisation ensures that rewards are bounded in $[0,1]$ and helps control instability that arises in our context due to the need for offline deep RL methods, which generally lack convergence guarantees. We can think of this as a ``soft'' support estimator which assumes some smoothness over the environment and agent policy: state-actions that are in the expert dataset will have reward 1, just as in the tabular algorithm. However, in continuous environments it is unlikely that the same state will be visited more than once, so we also reward state-actions that are close to the expert data.

In our experiments, data comes from the D4RL MuJoCo benchmarks \citep{fu2020d4rl}, which are standard continuous control datasets used to benchmark state-of-the-art offline RL and IL methods. The D4RL data consists of several datasets, generated by policies of varying quality. The ``expert'' data represents rollouts from a stochastic agent trained on the true environment using SAC --- this policy is taken to be close to optimal in the environment. We use a fraction of the D4RL expert data as our $D_E$. In order to simulate the use of known safe exploratory policies for training and evaluation, our exploratory data, $D_X$, is derived from the D4RL ``medium'' dataset, which comes from a suboptimal, intermediate policy trained on the true environment. 

Our agent learns via a state-of-the-art offline RL method known as TD3-BC \citep{fujimoto2021minimalist}. The algorithm is a modification of the algorithm developed by \citet{fujimoto2018addressing}, with an additional BC regularisation term applied to the policy loss. This shifts the distribution induced by the learned policy to be closer to the data distribution and leads to more stable learning. We employ a small modification to the TD3-BC algorithm by multiplying the statewise BC loss by the intrinsic reward achieved in those states. This leads the agent to put more emphasis on imitating expert behavior, and less weight on the imitation of the exploratory policy. More details on the algorithm and architecture employed, as well as environments and data can be found in the Appendix.

\begin{figure}[ht]
    \centering
    \includegraphics[width=0.75\linewidth]{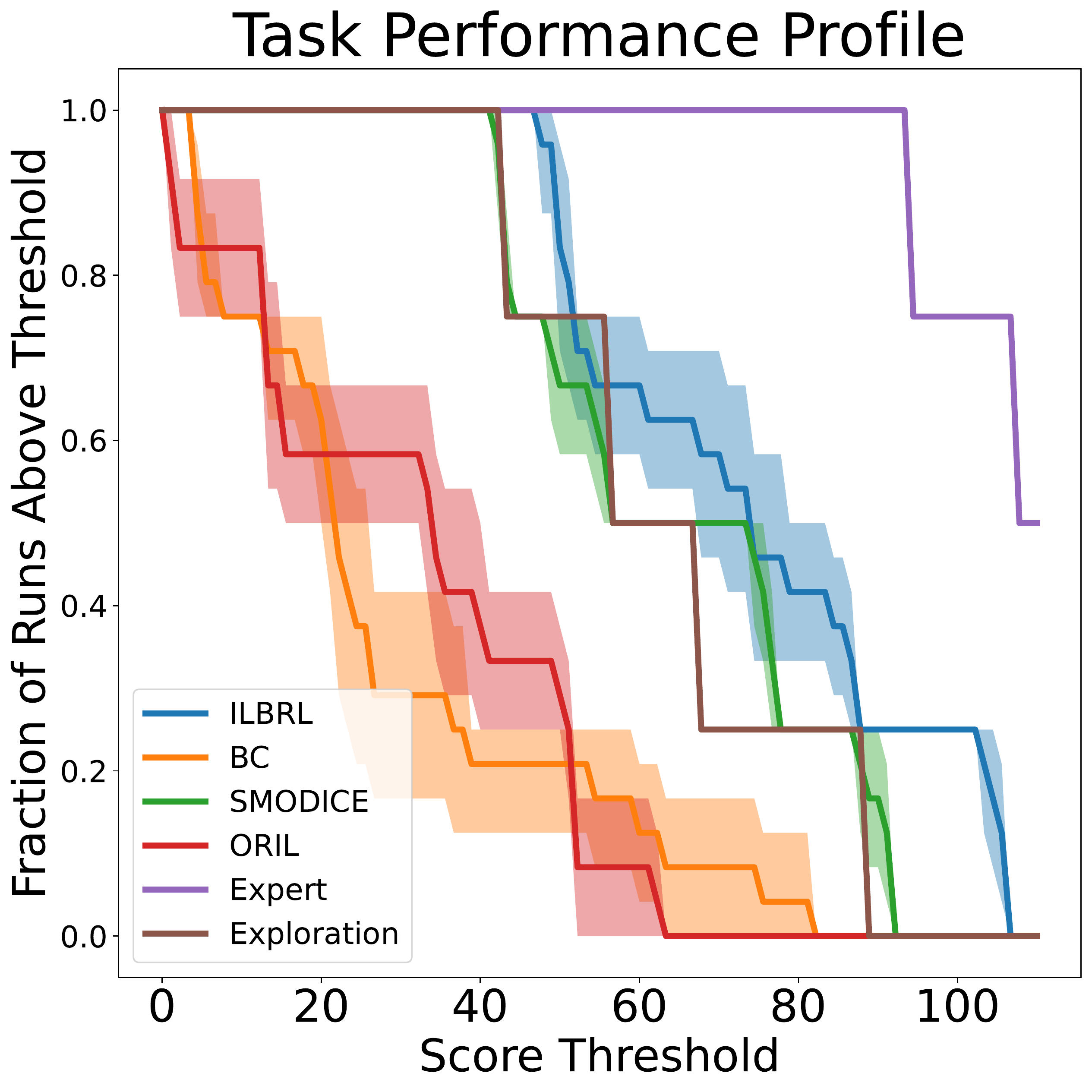}
    \caption{Performance profiles for ILBRL and baselines \citep{agarwal2021deep}. Curves that arc further from the origin are higher performing. ILBRL is competitive with state-of-the-art IL algorithms in difficult data regimes. It is able to achieve more tasks at a lower thresholds and reach higher performances when compared to the baselines.}
    \label{fig:performance_profiles}
\end{figure}

\paragraph{Baselines}
We compare to two state-of-the-art offline IL algorithms: SMODICE \citep{ma2022smodice} and ORIL \citep{zolna2020offline}. In addition, we compare to an expert-only behavioural cloning baseline, which makes use exclusively of $D_E$. For algorithms other than behavioural cloning, the training data mix was chosen such that, given equal quantities of expert data, the behavioural cloning baseline could not perform well. This stands in contrast to the dataset used by \citet{ma2022smodice}, which contains not just large amounts of expert data, but large amounts of random data as well, which confounds the BC learner. In some of our experiments, we found that giving the BC learner as much data as is reported by \citet{ma2022smodice} leads to near-optimal performance for some environments. We also provide a sample-based estimate of the D4RL policies, evaluated by rolling out the provided models on the environment several times. We use the argmax variant of the policies, which act according to the mean of their policy.

\paragraph{Results}

The final performance results of our algorithm, evaluated on the true environment after optimisation using our principled evaluation procedure can be found in \cref{fig:bootstrap_CIs} and \cref{fig:performance_profiles}. Our metrics were chosen based on the recommendations by \citet{agarwal2021deep}, for improved ease of comparison, and increased statistical robustness. Performance was measured according to the normalised D4RL score, which assigns random performance a score of 0, and expert performance a score of 100.  \cref{fig:bootstrap_CIs} represents 95\% confidence intervals (CI) for the interquartile mean (IQM) score of bootstrapped samples uniformly chosen across three rollouts per algorithm and all tasks. The IQM interpolates between the median and the mean statistic. It is estimated by first computing the mean normalised score of the middle 50\% of scores in a given bootstrap. Then the reported overall means and CIs are computed over the entire set of bootstrap means. \citet{agarwal2021deep} find that bootstrapped CIs are more statistically robust than the median, and less susceptible to outliers than the mean. Bootstrapping across tasks serves two purposes: it gives an aggregate notion of performance in order to efficiently compare algorithms, and allows for more statistically robust point estimates by making use of all available data at once.

We note that among the algorithms we tested, only ILBRL is able to outperform the exploratory policy by a statistically significant margin. This suggests that it is able to make good use of the expert data labels and is not simply cloning the exploratory policy. Because we are in a challenging, low expert data regime, performance of the baselines is somewhat lower than their reported values, which make use of large amounts of expert, or near-expert data. We hypothesise that our relative effectiveness is due to the simplicity of our approach, as well as the natural scaling of our method to imbalanced datasets, while the baselines rely on density estimation, which may be challenging under highly imbalanced data regimes.

In addition to our more robust approaches to evaluation, we report complete evaluation results with means and standard errors for all tests in \cref{tab:mean_table}. Our results carry over: our algorithm  outperforms or competes with the baselines on all of the environments.

For a more qualitative evaluation of our method, we make use of a performance profile, as shown in \cref{fig:performance_profiles}, also suggested by \citet{agarwal2021deep}. On the $x$ axis is a performance threshold, and on the $y$ axis is the fraction of runs across environments that achieved the level of performance specified on the $x$ axis. We see that our method is able to compete with expert performance on some tasks, as can be observed by the $x$-intercept reaching past a threshold of 100. Our algorithm strictly dominates the ORIL baseline, and reaches well beyond medium performance, showing that it is able to both perform very well on easier tasks, and maintain a reasonable performance for more challenging tasks.

\section{Conclusions}
In conclusion, we have developed and formally analysed a simple and effective algorithm for imitation learning from batch data. In addition, we have developed a novel approach to model selection and optimisation in the offline reinforcement learning setting. In the future, it would be meaningful to examine our evaluation protocol on other benchmarks, as well as extend our analysis beyond the tabular RL setting.

\bibliography{bibliography}
\bibliographystyle{icml2023}

\newpage
\appendix
\newtheorem{assumption}{Assumption}
\newtheorem{theoremalt}{Theorem}[theorem]
\newenvironment{theoremp}[1]{
	\renewcommand\thetheoremalt{#1}
	\theoremalt
}{\endtheoremalt}
\setcounter{theorem}{0}
\setcounter{corollary}{0}
\setcounter{lemma}{0}
\setcounter{claim}{0}
\newcommand{\algrule}[1][.2pt]{\par\vskip.3\baselineskip\hrule height #1\par\vskip.3\baselineskip}
\newcommand\Algphase[1]{%
\algrule
\hspace{-1.7\algorithmicindent}#1%
\algrule
}
\newcommand*{\skipnumber}[2][1]{%
  {\renewcommand*{\alglinenumber}[1]{}\State #2}%
  \addtocounter{ALG@line}{-#1}}

\onecolumn
\section{Proofs}
\label{appendix:theory}
In this section we provide detailed proof of the results that appear in the main paper. We also extend Theorem \ref{thm:regret} to include a bound on total variation distance between the expert and imitation policy, which follows directly from the regret bound with the same constant. We recall the setting of our work through the following key assumptions.

\begin{assumption}
    \label{assumption:deterministic}
    The expert and imitation policies are deterministic.
\end{assumption}
\begin{assumption}
    \label{assumption:ergodic}
   Expert, exploration, and imitation policies induce aperiodic and irreducible Markov chains.  
\end{assumption}
\begin{assumption}
    \label{assumption:pmin}
    The Markov chain induced by the exploratory policy has a uniform lower bound on the probability that the agent will visit any given state and action:
    \[\rho_{\pi_X}(s, a) > p_{\min}.\]
\end{assumption}
\begin{assumption}
    \label{assumption:kakade}
   The transition matrix obtained by following policy $\pi^*$ on the MDP, $P_{\pi^*}$ has $n$ distinct eigenvalues.
\end{assumption}
Assumption \ref{assumption:deterministic} cannot be relaxed as it is impossible to imitate a stochastic expert with a single call to the batch RL algorithm with a stationary reward function-optimisation may in general lead to a deterministic policy, as any stationary MDP has a deterministic optimal policy. Assumption \ref{assumption:ergodic} can, for the expert and imitation policies, possibly be relaxed to the periodic setting with appropriate attention to detail. However, in order to achieve appropriate sampling over the state space, this cannot be relaxed for the exploratory policy. Similar concerns suggest that Assumption \ref{assumption:pmin} cannot be relaxed without introducing other assumptions. Assumption \ref{assumption:kakade} is only needed to prove \cref{lemma:discounted_to_average_appendix}, and can, in principle, be relaxed.

\subsection{Theory of Phased Q-Learning}
We use the notation from \citet{kearns1998finite}:
\begin{itemize}
    \item $m$ is the number of samples from the parallel sampler,
    \item $\ell$ is the number of iterations of phased Q-learning, 
    \item  $s$ and $a$ are states and actions, and $P_{ss'}^a$ is the transition distribution,
    \item $r$ is a deterministic reward function,
    \item  $\hat{V}_i(\cdot), \hat{Q}_i(\cdot,\cdot)$ are our estimated value functions at iteration $i$, 
    \item  $\bar{V}_i(\cdot), \bar{Q}_i(\cdot,\cdot)$ are the true value functions of the policy greedy with respect to $\hat{Q}_i$,
    \item $V_i(\cdot), Q_i(\cdot,\cdot)$ is the output of $i$ rounds of policy iteration,
    \item  $V^*(\cdot), Q^*(\cdot,\cdot)$ are the optimal value functions.
\end{itemize}   

In addition, we use the following notation for different error variables, reproduced here for reference:
\begin{table}[H]
    \centering
    \begin{tabular}{|c|c|}
        \hline
        \textbf{Error Variable} & \textbf{Interpretation} 
        \\ \hline
        $\epsilon$ & Output error of \cref{algo:ilbrl} 
        \\ \hline
        $1 - \epsilon'$ &  Average intrinsic reward obtained by the learned policy
        \\ \hline
        $\epsilon''$ &  Reward construction error
        \\ \hline
        $1 - \nu$ &  Probability controllable intrinsic reward of expert policy.
        \\ \hline
        $\eta$ &  Discounted regret for phased Q-learning
        \\ \hline
        $\eta'$ &  Concentration error of sampled values
        \\ \hline
        
    \end{tabular}
    \caption{Error Variables}
    \label{tab:error_variables}
\end{table}
Following the intuition that phased Q-Learning is similar to value iteration, we make use of the following inequality--that our estimate of the bootstrapped value concentrates around the true bootstrap:
\begin{claim}
\label{claim:concentration}
For all $s, a$, and $i\leq\ell$, we have
\begin{align}
    \left\lvert 
        \frac{1}{m}\sum_{k=1}^m \hat{V}_i(s'_k) 
        - \sum_{s'}P_{ss'}^a \hat{V}_i(s')
    \right\rvert
    \leq \eta',
    \label{eqn:val_concentration_appendix}
\end{align}
where $s'_k$ is the $k$th sampled subsequent state from the parallel sampler at $s,a$.
\end{claim}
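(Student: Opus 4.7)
The plan is to establish the inequality of Claim~\ref{claim:concentration} as a high-probability event, by applying Hoeffding's inequality pointwise in each triple $(s,a,i)$ and then a union bound over all such triples. First I would record two structural properties of phased Q-learning. Since rewards lie in $[0,1]$ and the iteration starts from $\hat{Q}_0 \equiv 0$, a short induction on the Bellman update shows $\hat{V}_i(s) \in \bigl[0,\sum_{j=0}^{i-1}\gamma^j\bigr] \subseteq \bigl[0,1/(1-\gamma)\bigr]$ uniformly in $s$ and $i$. Second, the parallel/phased sampling model draws fresh samples at every phase: the $m$ successor states $s'_1,\dots,s'_m$ used to update $\hat{Q}_{i+1}(s,a)$ are i.i.d.\ from $P(\cdot\mid s,a)$ and, crucially, independent of the function $\hat{V}_i$, which is measurable with respect to the samples from phases $1,\dots,i-1$ only.

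\textbf{Hoeffding plus union bound.} Fix $(s,a,i)$ with $i\leq\ell$ and condition on the samples from phases $1,\dots,i-1$. Under this conditioning, $\hat{V}_i$ is a deterministic function with range in $[0,1/(1-\gamma)]$ while $s'_1,\dots,s'_m$ remain i.i.d.\ from $P(\cdot\mid s,a)$. Hoeffding's inequality therefore gives
\[
\mathbb{P}\!\left(\left|\tfrac{1}{m}\sum_{k=1}^m \hat{V}_i(s'_k) - \sum_{s'} P^a_{ss'}\hat{V}_i(s')\right| > \eta' \,\Big|\, \text{phases } 1,\dots,i-1\right) \leq 2\exp\!\left(-2m\,\eta'^{\,2}(1-\gamma)^2\right).
\]
Because the right-hand side is independent of the conditioning, taking expectation preserves the bound. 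A union bound over the $SA\ell$ triples $(s,a,i)$ yields total failure probability at most $2SA\ell\exp(-2m\,\eta'^{\,2}(1-\gamma)^2)$, which is at most $\delta'$ whenever $m \geq \tfrac{1}{2\eta'^{\,2}(1-\gamma)^2}\log\!\bigl(2SA\ell/\delta'\bigr)$. This lower bound on $m$ dovetails with Lemma~\ref{lemma:value_gap} and Lemma~\ref{lemma:regret}, which relate $\eta'$ to the overall discounted regret $\eta$, and produces the expression for $m\ell$ appearing in Lemma~\ref{lemma:phased_q}.

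\textbf{Main obstacle.} The argument is short, and no inequality in it is especially sharp. The only point that requires care is the conditional-independence structure: the iterate $\hat{V}_i$ is a genuinely random, history-dependent function that depends on samples from every previous phase, so one cannot apply Hoeffding directly to a product of random functions and random samples. What rescues the argument is precisely the defining feature of phased Q-learning—phase $i$'s transitions are drawn afresh and are independent of $\hat{V}_i$—so that Hoeffding applies conditionally and the conditioning can be integrated out without loss. Once this is stated cleanly, the remaining marginalisation and union bound are routine.
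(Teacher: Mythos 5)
Your proposal is correct and follows essentially the same route as the paper: the paper states the claim as an assumption and then establishes it with high probability inside the proof of Lemma~\ref{lemma:phased_q_appendix}, via exactly the Hoeffding-plus-union-bound argument you give, with the same range $[0,1/(1-\gamma)]$ and the same resulting condition $m \geq \log(2\ell SA/\delta')/(2\eta'^2(1-\gamma)^2)$. Your handling of the conditioning (on the samples of phases $1,\dots,i-1$, under which $\hat{V}_i$ becomes deterministic and the phase-$i$ draws remain i.i.d.) is a slightly more careful statement of the step the paper phrases as ``conditioned on $s$ and $a$, $\hat{V}_i$ is a deterministic function,'' but it is the same idea.
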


We will later take $m$ large enough to demonstrate that Claim \ref{claim:concentration} will hold with high probability. This is done in the proof of \cref{lemma:phased_q_appendix}.

\subsubsection{Proof Of Lemma 1}
\begin{lemma}
    \label{lemma:value_gap_appendix}
Given \cref{eqn:val_concentration_appendix}, the difference between the value output by phased Q-learning after $\ell$ steps and the optimal value is given by
\begin{align*}
\zeta(s,a)
    := \left\lvert \hat{Q}_{\ell}(s, a)  - Q^*(s,a) \right\rvert
    \leq \frac{\eta' \gamma + \gamma^{\ell}}{1 - \gamma}
\qquad \forall s, a.
\end{align*}
\end{lemma}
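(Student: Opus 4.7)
The plan is to set up a recurrence on the worst-case pointwise error $\zeta_i := \max_{s,a} |\hat{Q}_i(s,a) - Q^*(s,a)|$ and then unroll it $\ell$ times.

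First I would write out both update equations side by side. The phased Q-learning update is $\hat{Q}_{i+1}(s,a) = r(s,a) + \gamma \frac{1}{m}\sum_k \hat{V}_i(s'_k)$, while the optimal Bellman equation reads $Q^*(s,a) = r(s,a) + \gamma \sum_{s'} P^a_{ss'} V^*(s')$. Subtracting and inserting the hybrid term $\gamma \sum_{s'} P^a_{ss'} \hat{V}_i(s')$ splits the difference into two pieces:
\[
\hat{Q}_{i+1}(s,a) - Q^*(s,a) = \gamma\!\left(\frac{1}{m}\sum_k \hat{V}_i(s'_k) - \sum_{s'} P^a_{ss'}\hat{V}_i(s')\right) + \gamma\sum_{s'} P^a_{ss'}\bigl(\hat{V}_i(s') - V^*(s')\bigr).
\]
The first term is bounded in absolute value by $\gamma\eta'$ by Claim \ref{claim:concentration}. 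For the second term, I would use the standard fact that $|\max_{a'} \hat{Q}_i(s',a') - \max_{a'} Q^*(s',a')| \leq \max_{a'}|\hat{Q}_i(s',a') - Q^*(s',a')|$ to pass from a max-over-actions to a pointwise bound on $Q$, and then bound the $P$-weighted average by the pointwise maximum $\zeta_i$. This yields the recurrence $\zeta_{i+1} \leq \gamma \eta' + \gamma \zeta_i$.

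Next I would handle the base case: since $r \in [0,1]$, both $\hat{Q}_0$ (taken to be identically $0$, as is standard for phased Q-learning) and $Q^*$ lie in $[0, 1/(1-\gamma)]$, giving $\zeta_0 \leq 1/(1-\gamma)$. Unrolling the recurrence $\ell$ times produces
\[
\zeta_\ell \leq \gamma \eta' \sum_{j=0}^{\ell-1} \gamma^j + \gamma^\ell \zeta_0 \leq \frac{\gamma \eta'}{1-\gamma} + \frac{\gamma^\ell}{1-\gamma} = \frac{\eta'\gamma + \gamma^\ell}{1-\gamma},
\]
which is exactly the stated bound. Since the argument controls $\max_{s,a}|\hat{Q}_\ell(s,a) - Q^*(s,a)|$, the pointwise statement $\forall s,a$ follows.

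I do not anticipate any serious obstacle: the proof is essentially a classical contraction argument adapted to the empirical Bellman operator, and the concentration assumption cleanly isolates the sampling error at each stage. The only subtlety worth being careful about is the $\max$-commutation step that turns the $\hat{V}$ difference into a bound on $\zeta_i$, which needs the observation that $\hat{V}_i(s) = \max_{a'} \hat{Q}_i(s,a')$ and similarly for $V^*$; this is routine but should be spelled out so the recurrence is rigorous.
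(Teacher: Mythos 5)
Your proof is correct, and it reaches the stated bound by a slightly different route than the paper. You run a single recurrence directly on $\zeta_i = \max_{s,a}\lvert \hat{Q}_i(s,a) - Q^*(s,a)\rvert$ against the fixed point $Q^*$, absorbing the $\gamma^\ell/(1-\gamma)$ term through the nonzero base case $\zeta_0 \le 1/(1-\gamma)$. The paper instead inserts the exact value-iteration iterates $Q_i$ as an intermediate: it proves the recurrence for $\lvert\hat{Q}_i - Q_i\rvert$ (which has base case zero since both algorithms are initialised identically, yielding $\eta'\gamma/(1-\gamma)$), and then cites Puterman's Theorem 6.3.3 for $\lvert Q_\ell - Q^*\rvert \le \gamma^\ell/(1-\gamma)$, combining the two by the triangle inequality. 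The core mechanics are the same in both arguments --- the hybrid-term insertion, the concentration claim contributing $\gamma\eta'$ per step, and the $\max$-commutation $\lvert\max_{a'}\hat{Q}_i - \max_{a'}Q\rvert \le \max_{a'}\lvert\hat{Q}_i - Q\rvert$ --- so the difference is one of packaging. Your version is fully self-contained (no external citation needed) at the cost of making the initialisation assumption $\hat{Q}_0 \equiv 0$ explicit; the paper's version isolates the sampling error from the finite-horizon truncation error, which makes the two sources of the bound slightly more legible. Both are sound and give identical constants.
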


\begin{proof}
We begin by bounding the gap between value estimated by phased-Q iteration and the value output by policy iteration. Recall the phased Q-learning update, given by \[\hat{Q}_{i+1}(s, a) = r(s,a) + \gamma \frac{1}{m}\sum_{k=1}^m \hat{V}_i(s'_k).\]
For any $s,a$ pair, this means we have
\begin{align*}
&\left\lvert\hat{Q}_{i+1}(s, a)  - Q_{i+1}(s,a) \right\rvert \\
&\qquad = \Big\lvert r(s,a) + \gamma \frac{1}{m}\sum_{k=1}^m \hat{V}_i(s'_k)  - r(s,a) - \gamma \sum_{s'}P_{ss'}^a V_i(s') \Big\rvert \\
&\qquad \leq \gamma\Big\lvert
    \sum_{s'}P_{ss'}^a  \left(\hat{V}_i(s') - V_i(s')\right)
    \Big\rvert + \gamma \eta' \\
&\qquad \leq \gamma \max_{s'} \left\lvert 
        \hat{V}_i(s') - V_i(s')
    \right\rvert + \gamma \eta' \\
&\qquad \leq \gamma \max_{s'} \left\lvert 
         \max_{a'} \hat{Q}_{i}(s', a') - \max_{a '}Q_{i}(s', a')
    \right\rvert + \gamma \eta' \\
&\qquad \leq \gamma \max_{s', a'} \left\lvert 
         \hat{Q}_{i}(s', a') - Q_{i}(s', a')
    \right\rvert + \gamma \eta',
\end{align*}
where the first inequality makes use of our claim in (\ref{eqn:val_concentration_appendix}), the second uses the fact that we have a convex combination of the value gap over states, and the last from the max of a difference being greater than a difference of max.
Starting from $Q_0(s,a) = \hat{Q}_0(s,a)$ and recursively applying this equation leads to the bound
\begin{equation}
    \left\lvert\hat{Q}_{\ell}(s, a)  - Q_{\ell}(s,a) \right\rvert \leq \frac{\eta'\gamma}{1 - \gamma},
\end{equation}
by the geometric series and the fact that both algorithms are initialised identically. 

\citet[Thm 6.3.3]{puterman2005markov} shows that $\left\lvert Q_{\ell}(s, a)  - Q^*(s,a) \right\rvert \leq \gamma^\ell / (1 - \gamma)$.
Combining with the above yields
\begin{align*}
\left\lvert\hat{Q}_{\ell}(s, a)  - Q^*(s,a) \right\rvert
    &\leq \frac{\eta'\gamma  + \gamma^\ell}{1 - \gamma}.
\qedhere
\end{align*}
\end{proof}

\subsubsection{Proof of Lemma 2}
Recall that the discounted regret is defined as
\[R_\gamma(i) := \mathbb{E}_{s_0}\left[\bar{V}^*(s_0) - \bar{V}_i(s_0) \right].\]

\begin{lemma}
    \label{lemma:regret_appendix}
    Given \cref{eqn:val_concentration_appendix}, the policy output by phased Q-learning after $l$ steps achieves discounted regret of at most:
    \[ R_\gamma(\ell) \leq \frac{2}{(1-\gamma)}
        \left(\frac{\eta' \gamma  + \gamma^l}{1 - \gamma}\right).\]
\end{lemma}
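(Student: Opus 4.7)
The plan is to bound the pointwise gap $\Delta(s) := V^{*}(s) - \bar{V}_\ell(s)$ by a standard ``greedy policy loss'' argument that turns a uniform $Q$-error bound into a value-loss bound, then take expectation over the initial state distribution. Let $\zeta := (\eta'\gamma + \gamma^\ell)/(1-\gamma)$, so Lemma~\ref{lemma:value_gap_appendix} gives $|\hat{Q}_\ell(s,a) - Q^{*}(s,a)| \leq \zeta$ uniformly in $s,a$. Write $\pi^{*}$ for the optimal policy and $\bar\pi_\ell(s) = \argmax_a \hat{Q}_\ell(s,a)$ for the greedy policy with respect to the phased-Q iterate, whose true value is $\bar{V}_\ell$.

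The first step is the pointwise decomposition. Fix $s$ and insert $\hat{Q}_\ell(s,\pi^{*}(s))$ and $\hat{Q}_\ell(s,\bar\pi_\ell(s))$ as bridges:
\begin{align*}
V^{*}(s) - \bar{V}_\ell(s)
    &= Q^{*}(s,\pi^{*}(s)) - \bar{Q}_\ell(s,\bar\pi_\ell(s)) \\
    &\leq \bigl[Q^{*}(s,\pi^{*}(s)) - \hat{Q}_\ell(s,\pi^{*}(s))\bigr]
        + \bigl[\hat{Q}_\ell(s,\bar\pi_\ell(s)) - \bar{Q}_\ell(s,\bar\pi_\ell(s))\bigr],
\end{align*}
where the inequality uses greediness, $\hat{Q}_\ell(s,\pi^{*}(s)) \leq \hat{Q}_\ell(s,\bar\pi_\ell(s))$. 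The first bracket is $\leq \zeta$ directly by Lemma~\ref{lemma:value_gap_appendix}.

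The second bracket is where the main care is needed, because $\bar{Q}_\ell$ is the \emph{true} value function of the greedy policy in the MDP, a different object from the phased-Q iterate $\hat{Q}_\ell$. I would split it further through $Q^{*}(s,\bar\pi_\ell(s))$: the piece $\hat{Q}_\ell(s,\bar\pi_\ell(s)) - Q^{*}(s,\bar\pi_\ell(s))$ is again at most $\zeta$ by Lemma~\ref{lemma:value_gap_appendix}, while $Q^{*}(s,\bar\pi_\ell(s)) - \bar{Q}_\ell(s,\bar\pi_\ell(s)) = \gamma \sum_{s'} P_{ss'}^{\bar\pi_\ell(s)} \bigl(V^{*}(s') - \bar{V}_\ell(s')\bigr) \leq \gamma \max_{s'} \Delta(s')$. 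Combining everything yields the self-referential inequality
\[
\Delta(s) \;\leq\; 2\zeta + \gamma \max_{s'} \Delta(s') \qquad \forall s.
\]

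Taking the supremum over $s$ on the left-hand side and solving the resulting scalar inequality gives $\max_s \Delta(s) \leq 2\zeta/(1-\gamma)$. Since this is a uniform pointwise bound, taking expectation over the initial-state distribution yields
\[
R_\gamma(\ell) \;=\; \mathbb{E}_{s_0}\bigl[V^{*}(s_0) - \bar{V}_\ell(s_0)\bigr]
    \;\leq\; \frac{2\zeta}{1-\gamma}
    \;=\; \frac{2}{1-\gamma}\left(\frac{\eta'\gamma + \gamma^\ell}{1-\gamma}\right),
\]
as claimed. The only subtle point is keeping $\hat{Q}_\ell$ and $\bar{Q}_\ell$ distinct in the bookkeeping; the rest is the textbook policy-loss-from-$Q$-error argument, so no new probabilistic input beyond Lemma~\ref{lemma:value_gap_appendix} is required.
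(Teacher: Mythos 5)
Your proof is correct and follows essentially the same route as the paper's: both arguments reduce the value loss of the greedy policy to an action-gap term bounded by $2\zeta$ via the greediness sandwich through $\hat{Q}_\ell$, plus a $\gamma$-discounted recursive term that is resolved into the $1/(1-\gamma)$ factor. The only difference is organizational --- you solve a single self-referential inequality $\Delta(s) \leq 2\zeta + \gamma \max_{s'} \Delta(s')$, whereas the paper first unrolls the Bellman recursion to get $\frac{1}{1-\gamma}\max_s \lvert Q^*(s,\pi_\ell(s)) - V^*(s)\rvert$ and then bounds the action gap --- and you correctly keep $\hat{Q}_\ell$ and $\bar{Q}_\ell$ distinct throughout.
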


\begin{proof}
We begin by bounding the statewise suboptimality of the learned policy using the suboptimality of a single action. Let $\pi_\ell(s) := \textrm{argmax}_a \hat{Q}_\ell(s,a)$ give the policy output after $\ell$ steps of phased Q-learning.\
We abuse notation slightly here by writing $a := \pi_\ell(s)$, since context is clear. Then we have
\begin{align*}
&\left \lvert \bar{V}_\ell(s) - V^{*}(s) \right \rvert = \Big \lvert
    r(s, a) + \gamma \sum_{s'}P_{ss'}^a \bar{V}_\ell(s') - V^{*}(s)
    \Big \rvert, \\
&\qquad = \Big \lvert
        \gamma \sum_{s'}P_{ss'}^a \big[ \bar{V}_\ell(s') - V^{*}(s') \big]
        + Q^*(s, a) - V^{*}(s)
    \Big \rvert, \\
&\qquad \leq
    \gamma \Big \lvert \max_{s'} \left[ \bar{V}_\ell(s') - V^{*}(s') \right] \Big \rvert
    + \Big\lvert Q^*(s, a) - V^{*}(s) \Big \rvert.
\end{align*}
Unrolling this gives rise to a geometric series in $\gamma$:
\begin{align}
\label{eqn:single_action_gap}
\begin{split}
&\lvert \bar{V}_\ell(s) - V^{*}(s) \rvert  \leq \frac{1}{1-\gamma} \max_s \left\lvert Q^*(s, \pi_{\ell}(s)) - V^{*}(s) \right\rvert.
\end{split}
\end{align}
Let $\pi^*(s) := \argmax_a Q^*(s,a)$ be the optimal policy. We bound this gap for $a = \pi_\ell(s) \neq \pi^*(s) = a^*$, since otherwise it is zero.
Since, by construction, $\hat{Q}_{\ell}(s, a^*) \leq  \hat{Q}_{\ell}(s, a)$, the gap is maximised when we are both underestimating the optimal value of the optimal action and overestimating the optimal value of a suboptimal action:
\begin{align*}
\big\lvert Q^*(s, a) - V^{*}(s) \big\rvert
&=
     Q^{*}(s, a^*) - Q^*(s, a),
\\
&= 
        Q^{*}(s, a^*)
        -\hat{Q}_{\ell}(s, a^*)
        +\hat{Q}_{\ell}(s, a^*)
        -\hat{Q}_{\ell}(s, a)
        +\hat{Q}_{\ell}(s, a)
        - Q^*(s, a),
\\
&\leq
        \lvert Q^{*}(s, a^*)
        -\hat{Q}_{\ell}(s, a^*) \rvert
        +\hat{Q}_{\ell}(s, a^*)
        -\hat{Q}_{\ell}(s, a)
        + \lvert \hat{Q}_{\ell}(s, a)
        - Q^*(s, a) \rvert,
\\
&\leq 2 \left(\frac{\eta'\gamma  + \gamma^\ell}{1 - \gamma}\right) + \underbrace{\hat{Q}_{\ell}(s,a^*) - \hat{Q}_{\ell}(s, a)}_{\le 0},
\end{align*}
where the final line is an application of \cref{lemma:value_gap_appendix}.
Since the both this and \cref{eqn:single_action_gap} apply across all states, combining this with \cref{eqn:single_action_gap} gives the bound on our regret.
\end{proof}

\subsubsection{Proof of Lemma 3}
We recall Hoeffding's Inequality:
\begin{exlemma}[Hoeffding's Inequality]
    \label{exlemma:hoeffding}
    Let $X_j$ be a sequence of i.i.d. random variables uniformly bounded by $0 \leq X_j \leq a$. Define the sample mean of the sequence of length $m$ as $\bar{X}_m := \frac{1}{m}\sum_{j=1}^m X_j$. Then we have
    \[
    P\left(
        \left \lvert \bar{X}_m 
        - \mathbb{E}[X_1]\right \rvert \geq t
    \right)
    \leq 2\exp\left(
        -2\frac{t^2 m}{a^2}
    \right).\]
\end{exlemma}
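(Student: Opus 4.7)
The plan is to prove Hoeffding's inequality via the classical Chernoff-bounding argument: reduce the tail probability to a bound on the moment generating function (MGF) of the centered sample mean, control that MGF via Hoeffding's lemma on individual bounded variables, and then optimize the free Chernoff parameter. Concretely, I would center the variables by setting $Y_j := X_j - \mathbb{E}[X_j]$, so that $Y_j$ has mean zero and is supported in an interval of length $a$ (namely $[-\mathbb{E}[X_1],\, a - \mathbb{E}[X_1]]$).

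For any $s > 0$, Markov's inequality applied to the nonnegative random variable $\exp(s \sum_{j=1}^m Y_j / m)$ gives
\[
P\!\left(\bar{X}_m - \mathbb{E}[X_1] \geq t\right) \;\leq\; e^{-st}\, \mathbb{E}\!\left[\exp\!\left(\tfrac{s}{m}\sum_{j=1}^m Y_j\right)\right].
\]
By independence of the $X_j$, the expectation on the right factorizes as $\prod_{j=1}^m \mathbb{E}[\exp(sY_j/m)]$, so it suffices to control the MGF of a single centered bounded variable.

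The main obstacle, and the only substantive technical step, is establishing Hoeffding's lemma: for any zero-mean random variable $Y$ supported in an interval $[\alpha,\beta]$ of length $a = \beta - \alpha$, one has $\mathbb{E}[e^{\lambda Y}] \leq \exp(\lambda^2 a^2 / 8)$. I would prove this by exploiting the convexity of $y \mapsto e^{\lambda y}$, bounding it above on $[\alpha,\beta]$ by its secant line, and taking expectations to obtain a closed-form upper bound $\varphi(\lambda)$ depending only on $\lambda$, $\alpha$, $\beta$. Writing $\psi(\lambda) := \log \varphi(\lambda)$, a direct computation shows $\psi(0) = \psi'(0) = 0$ and that $\psi''(\lambda)$ is the variance of a random variable supported in $[\alpha,\beta]$, hence bounded by $a^2/4$ via Popoviciu's inequality. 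A second-order Taylor expansion then yields $\psi(\lambda) \leq \lambda^2 a^2 / 8$.

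Applying Hoeffding's lemma to each factor with $\lambda = s/m$ gives $\mathbb{E}[\exp(sY_j/m)] \leq \exp(s^2 a^2/(8m^2))$, and the product over $j$ collapses to $\exp(s^2 a^2/(8m))$. Substituting back and optimizing over $s > 0$ (the minimum of $-st + s^2 a^2/(8m)$ is attained at $s^* = 4mt/a^2$) produces the one-sided bound $P(\bar{X}_m - \mathbb{E}[X_1] \geq t) \leq \exp(-2mt^2/a^2)$. The matching lower-tail bound follows by applying the same argument to the sequence $-X_j$ (which is supported in $[-a,0]$, an interval of the same length), and a union bound over the two tails contributes the factor of $2$ in the final inequality.
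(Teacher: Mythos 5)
Your proof is correct: it is the classical Chernoff--Hoeffding argument (centering, MGF factorization by independence, Hoeffding's lemma via convexity and the variance bound $\psi'' \leq a^2/4$, optimization at $s^* = 4mt/a^2$, and a union bound for the two tails), and the resulting constant $\exp(-2mt^2/a^2)$ matches the statement. Note that the paper itself offers no proof of this lemma --- it simply recalls Hoeffding's inequality as a known result before applying it in the proof of Lemma~\ref{lemma:phased_q_appendix} --- so there is nothing in the paper to diverge from; your argument is the standard one and fills in that omission correctly.
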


Using Hoeffding's Inequality, once we have solved for the minimum acceptable concentration error $\epsilon'$, we can solve for $m$, the number of samples we need for \cref{eqn:val_concentration_appendix} to hold with high probability. This is accomplished in the following lemma.

\begin{lemma}
\label{lemma:phased_q_appendix}
Taking $m \ell$ samples from the parallel model for phased Q-learning, such that
\begin{align*}
m \ell
    \geq \log\left(\frac{2 \ell SA}{\delta'}\right)\frac{2 \gamma^2 \ell}{(1-\gamma)^2(\eta(1 - \gamma)^2 - 2\gamma^l)^2},
\end{align*}
we obtain discounted regret less than $\eta$ with probability at least $1 - \delta'$.
\end{lemma}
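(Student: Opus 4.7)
My plan is to chain the discounted‑regret bound from \cref{lemma:regret_appendix} backwards through Hoeffding's inequality to extract the required per‑phase sample count, then aggregate via a union bound over phases and state--action pairs.

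First I would translate the regret target into a concentration target. From \cref{lemma:regret_appendix} we have $R_\gamma(\ell) \leq \tfrac{2}{1-\gamma}\cdot\tfrac{\eta'\gamma+\gamma^\ell}{1-\gamma}$, so demanding this be at most $\eta$ and solving for $\eta'$ yields
\[
\eta' \;\leq\; \frac{\eta(1-\gamma)^2 - 2\gamma^{\ell}}{2\gamma}.
\]
(Implicitly we need $\ell$ large enough that the right side is positive; this is handled when the overall sample complexity is assembled in \cref{thm:regret_main}.) This reduces the problem to ensuring that \cref{eqn:val_concentration_appendix} holds at this value of $\eta'$, uniformly over all states, actions, and phases, with failure probability at most $\delta'$.

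Next I would apply Hoeffding's inequality (\cref{exlemma:hoeffding}) at a single $(s,a,i)$. The key observation is that in phased Q‑learning each phase draws a \emph{fresh} batch of $m$ transitions per state--action pair, so conditionally on $\hat V_i$ the samples $\hat V_i(s'_1),\dots,\hat V_i(s'_m)$ are i.i.d.\ with mean $\sum_{s'} P^a_{ss'}\hat V_i(s')$. Since rewards lie in $[0,1]$ and the horizon is geometric, $\hat V_i$ takes values in $[0,\tfrac{1}{1-\gamma}]$, so Hoeffding gives
\[
\Pr\!\left(\left\lvert \tfrac{1}{m}\sum_{k=1}^m \hat V_i(s'_k) - \sum_{s'}P^a_{ss'}\hat V_i(s')\right\rvert \geq \eta'\right)
\;\leq\; 2\exp\!\left(-2 m\,\eta'^{2}(1-\gamma)^2\right).
\]
I would take a union bound over the $\ell$ phases and the $SA$ state--action pairs, setting the resulting $2\ell SA \exp(-2m\eta'^2(1-\gamma)^2) \leq \delta'$, then solve for $m$:
\[
m \;\geq\; \frac{\log(2\ell SA/\delta')}{2(1-\gamma)^2\,\eta'^{2}}.
\]

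Finally I would substitute the value of $\eta'$ derived in the first step, so $\eta'^{2} = (\eta(1-\gamma)^2-2\gamma^\ell)^2/(4\gamma^2)$, which gives $m \geq \gamma^2 \log(2\ell SA/\delta')\big/\bigl((1-\gamma)^2(\eta(1-\gamma)^2-2\gamma^\ell)^2/2\bigr)$, and multiply through by $\ell$ to obtain the stated bound on $m\ell$. The one subtlety I want to be careful about is the dependence between phases: because $\hat V_i$ is built from earlier samples, Hoeffding applies only \emph{conditionally} on the filtration through phase $i-1$; the union bound argument then works because the failure event at phase $i$ is a statement about the fresh batch at that phase alone, and the bound holds uniformly over all possible realisations of $\hat V_i$ via the $[0,1/(1-\gamma)]$ range constraint. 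That step, plus tracking that the algebra after substituting $\eta'$ actually matches the expression stated in the lemma, is where I would concentrate attention; the rest is essentially an exercise in inequality bookkeeping.
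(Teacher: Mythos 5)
Your proposal is correct and follows essentially the same route as the paper's proof: the same choice $\eta' = \bigl(\eta(1-\gamma)^2 - 2\gamma^{\ell}\bigr)/(2\gamma)$ extracted from \cref{lemma:regret_appendix}, the same Hoeffding application with range $[0,(1-\gamma)^{-1}]$, the same union bound over $s$, $a$, and the $\ell$ phases, and the same final algebra. Your explicit remark about conditioning on the filtration through phase $i-1$ (so that Hoeffding applies to the fresh batch with $\hat V_i$ treated as fixed) is a point the paper states only briefly, but it is the same argument.
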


\begin{proof}
In order to bound the overall error by $\eta$, we choose the phased Q-learning concentration error as

\begin{align*}
\eta' = \left[
    \eta(1 - \gamma)^2 / 2 - \gamma^\ell
\right] / \gamma, 
\end{align*}
with the additional condition that:
\begin{align}
\label{eqn:ell}
\ell \geq \frac{\log \eta + 2 \log (1 - \gamma) - \log 2}{\log \gamma},
\end{align}
to ensure that $\eta'$ is not negative.

Substituting our choice of $\eta'$ into Lemma~\ref{lemma:regret_appendix}, we see that regret obeys
 \[ R_\gamma(\ell) \leq \frac{2}{(1-\gamma)}
        \left(\frac{\eta' \gamma  + \gamma^l}{1 - \gamma}\right) = \frac{2}{(1-\gamma)}
        \left(\frac{
            \frac{
                \eta(1 - \gamma)^2 / 2 - \gamma^\ell
            }{\gamma}
            \gamma  + \gamma^l}{1 - \gamma}\right)
        = \eta.
\]

With our error bounded, we proceed to bound the probability with which we fail to achieve this error, which will lead to our overall bound on $m$. We define the concentration error as
\begin{align*}
    \Delta_i(s, a) := \left\lvert 
        \frac{1}{m}\sum_{k=1}^m \hat{V}_i(s'_k) 
        - \sum_{s'}P_{ss'}^a \hat{V}_i(s')
    \right\rvert.
\end{align*}
Next, we apply Hoeffding's Inequality for each $s, a, i$, substituting $X_j = \hat{V}_i(s'_k)$ and $t=\eta'$. At step $i$, conditioned on $s$ and $a$, $\hat{V}_i$ is a deterministic function with range $[0, (1-\gamma)^{-1}]$ and the $s'_k$ are independent draws from the MDP transition function at $s, a$, so we obtain
\begin{align*}
&P\left(\Delta_i(s,a) \geq \eta'\right) \leq 2 \exp \left \{
    -\frac{
        2m\left([\eta(1 - \gamma)^2 / 2 - \gamma^\ell] / \gamma \right)^2
    }{
        (1-\gamma)^{-2}
    }
\right \}.
\end{align*}

We take a union bound over state, action, and algorithm step, since the overall bound needs to hold for all of the above. Then, bounding the resulting failure probability by $\delta'$ gives
\begin{align*}
2 S A \ell \exp \left \{
    -\frac{
        2m\left([\eta(1 - \gamma)^2 / 2 - \gamma^\ell] / \gamma \right)^2
    }{
        (1-\gamma)^{-2}
    }
\right \}
\leq
\delta'.
\end{align*}
Solving for $m$ leads to
\begin{align*}
m \geq \log\left(\frac{2 \ell S A}{\delta'}\right)\frac{2\gamma^2}{(1-\gamma)^2(\eta(1 - \gamma)^2 - 2\gamma^l)^2}
\end{align*}
and the claim follows by multiplying both sides by $\ell$.
\end{proof}

\begin{remark}
Observing that, for fixed $\gamma$, $\ell = O( \log \eta)$, we have $m\ell = O(\eta^{-2} [\log(SA / \delta') +\log (1 / \eta) + \log \log(1 / \eta)])$, matching the bound of \citet[Theorem. 1]{kearns1998finite} up to an additive term $\eta^{-2} \log A$.
\end{remark}

\subsubsection{Proof of Lemma 4}
\cref{lemma:lucas_lemma_appendix} combined with \cref{lemma:phased_q_appendix} give the overall sample complexity of the phased-Q learning stage of our algorithm.
\begin{lemma}
\label{lemma:lucas_lemma_appendix}
The number of sample transitions from the exploration policy needed to perfectly simulate the parallel sampling model with failure probability less than $\delta''$ is upper bounded by
\begin{align*}
\frac{2 t^{\pi_X}}{\log(2) p_{\rm min}}  \log \left( \frac{2}{p_{\rm min}} \right)  \log \left( \frac{SA}{\delta''} \right).
\end{align*}
\end{lemma}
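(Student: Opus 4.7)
The plan is to turn the exploration policy's single long trajectory into a collection of samples that jointly cover every $(s,a)$ pair at least once, by spacing the recorded samples far enough apart in time that each is (a) close to stationarily distributed and (b) effectively independent of its predecessors. The argument combines a mixing-time burn-in with a coupon-collector style bound.

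First, I would use the standard submultiplicativity of total variation for ergodic Markov chains (e.g.\ Levin--Peres--Wilmer): since $t^{\pi_X}$ is the mixing time to TV distance $1/4$, after $k\cdot t^{\pi_X}$ steps the distribution of the current state is within TV distance at most $2^{-k}$ of $\rho_{\pi_X}$, regardless of the initial state. Choosing $k=\lceil \log_2(2/p_{\min})\rceil$ forces this TV gap to be at most $p_{\min}/2$, so that the probability of being at any chosen pair $(s,a)$ is at least $\rho_{\pi_X}(s,a)-p_{\min}/2\ge p_{\min}/2$ by Assumption~\ref{assumption:pmin}. This fixes the burn-in window length $T=\lceil \log_2(2/p_{\min})\rceil\, t^{\pi_X}\le \tfrac{1}{\log 2}\log(2/p_{\min})\,t^{\pi_X}$.

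Second, I would partition the trajectory into $N$ consecutive windows of length $T$ and designate the last transition of each window as a sample. I would then apply the strong Markov property window by window: conditioned on the entire past through the start of a window, the marginal distribution at its end is within TV $p_{\min}/2$ of $\rho_{\pi_X}$, so the conditional probability of \emph{missing} a fixed target pair $(s,a)$ in that window is at most $1-p_{\min}/2$. Iterating over windows gives probability at most $(1-p_{\min}/2)^N\le\exp(-Np_{\min}/2)$ of never hitting that pair, and a union bound over all $SA$ pairs yields overall failure probability at most $SA\exp(-Np_{\min}/2)$. Setting this quantity to $\delta''$ gives $N\ge (2/p_{\min})\log(SA/\delta'')$, and the total number of transitions required is $N\cdot T$, which after substitution matches the stated bound $\tfrac{2t^{\pi_X}}{p_{\min}\log 2}\log(2/p_{\min})\log(SA/\delta'')$.

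The main obstacle is the second step: the window-end samples are drawn from a single trajectory and are therefore not literally independent, so the textbook coupon-collector bound cannot be invoked directly. The cleanest way around this is exactly the conditional argument sketched above, which replaces independence with a uniform lower bound on the conditional one-step marginal after burn-in and multiplies these conditional success probabilities along the trajectory. A secondary bookkeeping annoyance is pinning down the exact constant in the submultiplicativity claim (whether the exponent after $k$ mixing times is $k$, $k+1$, or $k-1$); any reasonable choice changes only universal constants, and the ceiling in $k$ together with the $\log 2$ in the denominator absorbs such slack so that the stated expression is recovered exactly.
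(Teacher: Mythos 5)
Your proposal is correct and follows essentially the same route as the paper's proof: a burn-in of $T \approx t^{\pi_X}\log(2/p_{\min})/\log 2$ steps between recorded samples so that each sample's marginal is within $p_{\min}/2$ of $\rho_{\pi_X}$, followed by bounding the miss probability of a fixed pair by $(1-p_{\min}/2)^N \le \exp(-Np_{\min}/2)$, a union bound over $SA$ pairs, and multiplying $N$ by $T$. Your explicit conditioning on the past via the Markov property to justify multiplying the per-window miss probabilities is in fact slightly more careful than the paper, which writes $(1-p_{\min}+\tau)^N$ without remarking on the dependence between successive window endpoints.
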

\begin{proof}
Let $\rho^T$ be the distribution after rolling out the policy for $T$ steps from any starting distribution.
By \citet[Sec. 4.5]{levin2017markov}, we know that
$T \ge  t^{\pi_X} \log(1 / \tau) / \log(2)$ implies
$\lVert \rho^T - \rho^{\pi_X}\rVert_{TV} \leq \tau$.
We choose
\begin{align*}
T \ge t^{\pi_X} \log(2 / p_{\rm min}) / \log(2),
\end{align*}
such that $\tau \le p_{\rm min} / 2$.
Assume that we collect a sample every $T$ steps.
After collecting $N$ samples, the probability that a given state-action pair is never observed is bounded from above by $(1 - p_{\rm min} + \tau)^N \le \exp(-N p_{\rm min} / 2)$.
Bounding this probability by $\delta''$, taking a union bound over all state-action pairs and solving for $N$ yields
\begin{align*}
N \ge 2 / p_{\rm min} \log(SA / \delta'').
\end{align*}
The claim is obtained by multiplying $N$ and $T$.
\end{proof}

\subsection{Bounding Total Variation Distance Between Expert and Imitator}
This section contains the proof of  Theorem 1, as well as the needed auxiliary lemmas.
\subsubsection{Proof of Lemma 5}

\begin{lemma}[Relaxation of Theorem 1 from \citet{kakade2001optimizing}]
    \label{lemma:discounted_to_average_appendix}
    Suppose policy $\pi$ is $\epsilon$-suboptimal or better at each state with respect to the discounted return. Let $\pi^*$ be the optimal policy under the average reward criterion.  Taking $\Sigma$ to be the matrix of right eigenvectors of $P_{\pi^*}$ with corresponding eigenvalues: $\lambda_1  = 1 > ... \geq \vert \lambda_n \vert$ Then, under Assumption \ref{assumption:kakade}, we have
    \[
        \mu^{\pi} \geq \mu^{\pi^*} - \kappa(\Sigma)\lVert \mathbf{r} \rVert \frac{1 - \gamma}{1 - \gamma \lvert \lambda_2 \rvert} - (1 - \gamma) \epsilon,
    \]
    where $\kappa(\Sigma) := \lVert \Sigma \rVert_2 \lVert \Sigma^{-1} \rVert_2$ is the condition number of a matrix, and $\mathbf{r}$ is the vector with entries $r(s, \pi^{*}(s))$ for each state.
\end{lemma}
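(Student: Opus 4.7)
The plan is to follow the structure of Kakade's original Theorem 1, adapted to accommodate our $\epsilon$-suboptimality hypothesis. The central identity, already recalled in the preliminaries, is $\mu^\pi = (1-\gamma)\sum_s \rho_\pi(s) V^\pi(s)$. I will first convert the stated per-state discounted suboptimality into a statement involving $\pi^*$ (the average-reward optimum), then analyze $V^{\pi^*}$ via the eigendecomposition of $P_{\pi^*}$, then combine.

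For the first step, let $\pi^\star_\gamma$ denote the discounted-optimal policy. By hypothesis $V^\pi(s) \geq V^{\pi^\star_\gamma}(s) - \epsilon$ for every $s$, and since $\pi^\star_\gamma$ is by definition discounted-optimal we have $V^{\pi^\star_\gamma}(s) \geq V^{\pi^*}(s)$, so $V^\pi(s) \geq V^{\pi^*}(s) - \epsilon$. Taking expectation against $\rho_\pi$ and multiplying by $(1-\gamma)$ gives
\[
\mu^\pi \;\geq\; (1-\gamma)\sum_s \rho_\pi(s) V^{\pi^*}(s) \;-\; (1-\gamma)\epsilon.
\]
This already exhibits the $(1-\gamma)\epsilon$ term in the claim; what remains is to show that the first quantity on the right is at least $\mu^{\pi^*} - \kappa(\Sigma)\lVert \mathbf{r}\rVert (1-\gamma)/(1-\gamma|\lambda_2|)$.

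For the second step, I write $V^{\pi^*} = (I - \gamma P_{\pi^*})^{-1}\mathbf{r}$ and substitute the eigendecomposition $P_{\pi^*} = \Sigma \Lambda \Sigma^{-1}$ (available because of Assumption~\ref{assumption:kakade}), obtaining $V^{\pi^*} = \Sigma\,\mathrm{diag}\bigl((1-\gamma\lambda_i)^{-1}\bigr)\,\Sigma^{-1}\mathbf{r}$. Since $P_{\pi^*}$ is a row-stochastic matrix induced by an ergodic chain, the dominant eigenvalue $\lambda_1 = 1$ is simple with right eigenvector $\mathbf{1}$ and corresponding row of $\Sigma^{-1}$ equal (up to the normalization built into $\Sigma$) to the stationary distribution $\rho_{\pi^*}$. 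Peeling this rank-one contribution off yields
\[
V^{\pi^*} \;=\; \frac{\mu^{\pi^*}}{1-\gamma}\mathbf{1} \;+\; h, \qquad h \;:=\; \sum_{i=2}^{S} \frac{1}{1-\gamma\lambda_i}\, v_i u_i^{\top} \mathbf{r},
\]
where $v_i$ are the columns of $\Sigma$ and $u_i^{\top}$ the rows of $\Sigma^{-1}$. Bounding $h$ by factoring the operator norm of a diagonal matrix whose entries are all of magnitude at most $(1-\gamma|\lambda_2|)^{-1}$, and applying submultiplicativity, gives $\lVert h\rVert_\infty \leq \lVert h\rVert_2 \leq \kappa(\Sigma)\lVert \mathbf{r}\rVert / (1-\gamma|\lambda_2|)$.

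Finally I combine: $(1-\gamma)\sum_s \rho_\pi(s) V^{\pi^*}(s) = \mu^{\pi^*} + (1-\gamma)\sum_s \rho_\pi(s) h(s) \geq \mu^{\pi^*} - (1-\gamma)\lVert h\rVert_\infty$, which, chained with the inequality from the first step, yields the stated bound. The main technical obstacle is the eigenvalue-one step: identifying that the $\lambda_1=1$ component of the expansion really recovers the constant vector $\mu^{\pi^*}\mathbf{1}$ (rather than some other multiple), which requires being careful about how $\rho_{\pi^*}$ is embedded as a row of $\Sigma^{-1}$ under the chosen normalization of the right eigenvectors, and then propagating that normalization into the $\lVert\Sigma\rVert_2\lVert\Sigma^{-1}\rVert_2$ bound on the transient part without losing a stray factor.
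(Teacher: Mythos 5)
Your proof is correct and takes essentially the same route as the paper's: reduce the hypothesis to $V^\pi(s)\ge V^{\pi^*}(s)-\epsilon$ via the discounted-optimal policy, apply the identity $\mu^\pi=(1-\gamma)\sum_s\rho_\pi(s)V^\pi(s)$, and then bound $(1-\gamma)\sum_s\rho_\pi(s)V^{\pi^*}(s)$ below by $\mu^{\pi^*}-\kappa(\Sigma)\lVert\mathbf{r}\rVert\frac{1-\gamma}{1-\gamma\lvert\lambda_2\rvert}$ using Kakade's eigendecomposition argument. The only difference is that the paper cites \citet{kakade2001optimizing} verbatim for that final inequality, whereas you reproduce the rank-one peeling and condition-number bound explicitly; your handling of the normalization (the rank-one term $\mathbf{1}\rho_{\pi^*}^{\top}\mathbf{r}=\mu^{\pi^*}\mathbf{1}$ is invariant to how the eigenvectors are scaled, since $u_1^{\top}v_1=1$ is forced by $\Sigma^{-1}\Sigma=I$) is sound.
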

\begin{proof}
Our proof closely follows that of \citet{kakade2001optimizing}. Let $\pi^{\gamma^*}$ be the optimal policy under the discounted return criterion. Then we have, for all $s$
\[V^{\pi^{\gamma^*}}(s) \geq V^{\pi^*}(s).\]
Subtracting $\epsilon$ from both sides gives
\begin{align*}
    V^{\pi}(s) \geq V^{\pi^{\gamma^*}}(s) - \epsilon \geq V^{\pi^*}(s) - \epsilon,
\end{align*}
for all states, where the first inequality follows from the assumption that $\pi$ is $\epsilon$-suboptimal or better. From here we make use of the relationship between average reward and discounted reward:
\begin{align*}
    \mu^{\pi} & = (1 - \gamma)\sum_s \rho_{\pi}V^{\pi}(s),
    \\
    & \geq (1 - \gamma)\sum_s \rho_{\pi}(s)( V^{\pi^*}(s) - \epsilon),
    \\
    & \geq (1 - \gamma)\sum_s \rho_{\pi}(s)( V^{\pi^*}(s)) -  (1 - \gamma)\epsilon,
    \\
    &\geq \mu^{\pi^*} - \kappa(\Sigma)\lVert \mathbf{r} \rVert \frac{1 - \gamma}{1 - \gamma \lvert \lambda_2 \rvert} - (1 - \gamma) \epsilon.
\end{align*}
The second inequality comes from the fact that $\rho_{\pi}$ sums to one across states. The final inequality follows exactly from the proof in \citet{kakade2001optimizing}.
\end{proof}

\subsubsection{Proof of Theorem 1}

\hfill

First we reproduce two key lemmas from \citet{ciosek2022imitation} for convenience.
\begin{lemma}[Lemma 5, \citet{ciosek2022imitation}]
\label{lemma:ciosek_5_appendix}
 Given an expert dataset consisting of $\lvert D_E \rvert$ points, 
 a policy, $\pi$ which maximises the average intrinsic reward achieves an expected average intrinsic reward of at least
 \[ \mu_{\textrm{int}}^\pi := 1 - \nu - \sqrt{\frac{8St^{\pi_E}}{\lvert D_E \rvert}},\]
 for all error terms $\nu > 0$, with probability at least
 \[1 - \delta''' := 1 - 2 \exp\left(-\frac{\nu^2 \lvert D_E \rvert}{4.5 t^{\pi_E}}\right).\]
 
\end{lemma}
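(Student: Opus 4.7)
}
The plan is to reduce the bound on the optimal intrinsic reward to a concentration statement about how well $D_E$ covers the expert's stationary distribution. Since $\pi$ maximises $\mu_{\textrm{int}}$ by assumption, it must dominate the expert on the same objective: $\mu_{\textrm{int}}^\pi \geq \mu_{\textrm{int}}^{\pi_E}$. Because $\pi_E$ is deterministic and $\hat{r}(s,a) = \mathbf{1}_{D_E}((s,a))$, every visit by $\pi_E$ to a state $s \in S_{D_E} := \{s : s \textrm{ appears in } D_E\}$ contributes $\hat{r}(s, \pi_E(s)) = 1$, while visits to uncovered states contribute $0$. Hence
\[
\mu_{\textrm{int}}^{\pi_E} = \mathbb{E}_{s \sim \rho_{\pi_E}}\left[\mathbf{1}\{s \in S_{D_E}\}\right] = \rho_{\pi_E}(S_{D_E}),
\]
and it suffices to lower-bound this covered mass under a trajectory of $|D_E|$ samples.

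Next, I would relate the uncovered mass to a total variation distance. Let $\hat{\rho}$ denote the empirical state distribution over $D_E$; by construction $\hat{\rho}(s) = 0$ whenever $s \notin S_{D_E}$, so summing the pointwise discrepancies only over uncovered states gives
\[
1 - \mu_{\textrm{int}}^{\pi_E} = \sum_{s \notin S_{D_E}} \rho_{\pi_E}(s) = \sum_{s \notin S_{D_E}} |\rho_{\pi_E}(s) - \hat{\rho}(s)| \leq \sum_{s} |\rho_{\pi_E}(s) - \hat{\rho}(s)| = 2\lVert \rho_{\pi_E} - \hat{\rho} \rVert_{TV}.
\]
The task thus reduces to showing that $2\lVert \rho_{\pi_E} - \hat{\rho} \rVert_{TV} \leq \nu + \sqrt{8St^{\pi_E}/|D_E|}$ with failure probability at most $2\exp(-\nu^2 |D_E|/(4.5\,t^{\pi_E}))$.

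For that final step I would invoke a concentration inequality for functionals of an ergodic Markov chain. Viewed as a function of the $|D_E|$ sampled states, the empirical TV distance has bounded differences: swapping a single sample perturbs $\hat{\rho}$ by at most $1/|D_E|$ in total variation, so the Hoeffding/McDiarmid-type inequality for Markov chains (e.g.\ Paulin, 2015) yields a sub-Gaussian deviation around the mean with rate $\exp(-c\,\nu^2 |D_E|/t^{\pi_E})$. Coupled with a standard mean bound $\mathbb{E}[\lVert \rho_{\pi_E} - \hat{\rho}\rVert_{TV}] \leq \sqrt{St^{\pi_E}/(2|D_E|)}$, obtained via a $\chi^2$-to-TV conversion summed over the $S$ cells with a mixing-time correction to account for the dependence between consecutive samples, the two ingredients combine to give exactly the stated bound. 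The principal obstacle will be pinning down the specific constants $8$ and $4.5$: these depend delicately on which Markov-chain concentration inequality is invoked and on the precise $1/4$-mixing-time convention used in the preliminaries, so the bookkeeping of constants through the bounded-differences step and the mean-TV estimate is where I expect most of the work to lie.
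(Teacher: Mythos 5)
This lemma is not proved in the paper at all: it is quoted verbatim from \citet{ciosek2022imitation} (their Lemma 5) and used as a black box, so there is no in-paper proof to compare your attempt against. Judged on its own terms, your reduction is the right shape and almost certainly mirrors the structure of the cited proof: the maximiser dominates the expert on the intrinsic objective, the expert's own average intrinsic reward equals one minus the missing mass $\sum_{s \notin S_{D_E}} \rho_{\pi_E}(s)$ (using determinism of $\pi_E$ so that state coverage implies state--action coverage), and the missing mass is controlled by how well a length-$\lvert D_E\rvert$ trajectory of the expert chain covers its stationary distribution, with the mixing time $t^{\pi_E}$ entering through a blocking or Markov-chain McDiarmid argument. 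The two-sided factor of $2$ in the failure probability and the $\nu$-versus-$\sqrt{\cdot}$ split of the error are exactly what a mean-plus-deviation decomposition of that form produces.

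The gap is that your proposal stops precisely where the lemma's content lives. Both quantitative ingredients --- the deviation bound $\exp(-c\,\nu^2\lvert D_E\rvert / t^{\pi_E})$ with $c = 1/4.5$, and the mean bound $\mathbb{E}\bigl[\lVert \rho_{\pi_E} - \hat{\rho}\rVert_{TV}\bigr] \leq \sqrt{S t^{\pi_E}/(2\lvert D_E\rvert)}$ --- are invoked by name rather than derived, and you yourself flag that the constants $8$ and $4.5$ are open. Since the lemma is a fully quantitative statement, a proof that leaves those constants to ``bookkeeping'' has not yet proved the lemma; in particular the $4.5 = 9/2$ arises from a specific choice of block length relative to the $1/4$-mixing convention, and a different (equally reasonable) choice would yield a different constant that no longer matches the statement. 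Two smaller points: your chain $\sum_{s \notin S_{D_E}}\lvert\rho_{\pi_E}(s) - \hat{\rho}(s)\rvert \leq 2\lVert\rho_{\pi_E}-\hat{\rho}\rVert_{TV}$ is valid but loses a factor of $2$ (the missing mass is a one-sided sum, hence already bounded by the TV distance itself, i.e.\ by half of the $L_1$ norm), which eats into the slack you need to land on $\sqrt{8St^{\pi_E}/\lvert D_E\rvert}$; and the McDiarmid step must be applied to the trajectory of a Markov chain started from an arbitrary (not stationary) initial state, which is where an additional burn-in or non-stationarity correction typically enters. None of these is a wrong turn, but as written the argument is a plan for a proof rather than a proof.
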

 
\begin{lemma}[Lemma 7, \citet{ciosek2022imitation}]
\label{lemma:ciosek_7_appendix}
An agent which achieves an average reward of $1 - \epsilon'$ on the intrinsic reward MDP also achieves average reward of
\[(1 - \epsilon')\mu^{\pi_E} - 4 t^{\pi_E} \epsilon,\]
on the true MDP.
\end{lemma}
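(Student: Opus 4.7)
The plan is to lower bound the true average reward $\mu^{\pi}$ by decomposing the expectation $\mathbb{E}_{\rho_{\pi}}[r(s,a)]$ over state-action pairs inside and outside $D_{E}$, and then relating the $D_{E}$-portion to $\mu^{\pi_{E}}$ via a coupling that exploits the deterministic structure of both policies together with the mixing time of $\pi_{E}$'s chain. Because $r \in [0,1]$, the contribution from $D_{E}^{c}$ is non-negative and may be discarded, so it suffices to show that $\mathbb{E}_{\rho_{\pi}}[r(s,a)\mathbf{1}_{D_{E}}(s,a)] \geq (1-\epsilon')\mu^{\pi_{E}} - 4 t^{\pi_{E}} \epsilon'$.

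The first step I would take uses the determinism of both policies (Assumption 1). Since $\pi_{E}$ is deterministic, every pair in $D_{E}$ has the form $(s, \pi_{E}(s))$; hence a deterministic $\pi$ that attributes intrinsic reward to the pair $(s,\pi(s))$ at state $s$ must actually pick the expert action, i.e.\ $\pi(s) = \pi_{E}(s)$ at every such $s$. This means that whenever $\pi$'s chain is in $D_{E}$, its one-step transitions are identical in distribution to $\pi_{E}$'s. I can therefore view $\pi$'s trajectory as $\pi_{E}$'s trajectory punctuated by excursions into $D_{E}^{c}$, and the stationary mass of those excursions is exactly $\epsilon'$.

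The central technical step is to bound the discrepancy between $\rho_{\pi}$ restricted to $D_{E}$ and the scaled expert stationary measure $(1-\epsilon')\rho_{\pi_{E}}$. The natural route is a coupling argument: after each return of $\pi$'s chain to $D_{E}$, following the shared $\pi_{E}$-dynamics for $t^{\pi_{E}}$ steps contracts the TV distance to $\rho_{\pi_{E}}$ by at least a factor of $1/4$ by the mixing-time definition, and iteration of this contraction yields geometric decay. Since excursions to $D_{E}^{c}$ inject TV mass at rate $\epsilon'$ per stationary unit of time, summing the resulting geometric series gives $\lVert \rho_{\pi}\vert_{D_{E}} - (1-\epsilon')\rho_{\pi_{E}}\rVert_{TV} \leq 4 t^{\pi_{E}} \epsilon'$. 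Integrating the $[0,1]$-bounded true reward against this TV estimate and adding back the $(1-\epsilon')\mu^{\pi_{E}}$ baseline recovers the claimed inequality.

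The main obstacle is making this coupling clean enough to produce the explicit constant $4 t^{\pi_{E}}$ rather than an unspecified $O(t^{\pi_{E}})$. Concretely, I would need to verify that (i)~the aggregate escape rate from $D_{E}$ into $D_{E}^{c}$ under steady state equals exactly $\epsilon'$, (ii)~recovery after a single excursion incurs cumulative TV cost at most $4$ times the mixing time (the factor $4$ being the reciprocal of the $1/4$ threshold in the mixing-time definition, carried through the geometric mixing bound), and (iii)~these contributions aggregate additively because the reward is bounded in $[0,1]$. Once (i)--(iii) are in place, the inequality on $\mu^{\pi}$ follows by direct substitution into the initial decomposition.
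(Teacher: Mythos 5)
First, note that the paper itself does not prove this statement: it is imported verbatim as Lemma~7 of \citet{ciosek2022imitation} and used as a black box, so the only thing to compare your argument against is the proof in that cited source. Your high-level strategy does match it: determinism of both policies forces $\pi(s)=\pi_E(s)$ on every state appearing in $D_E$, the trajectory of $\pi$ is viewed as expert-following segments punctuated by excursions of total stationary mass $\epsilon'$, and the loss is charged at a cost of order $t^{\pi_E}$ per excursion. That is the right skeleton.

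However, the central quantitative step is asserted rather than derived, and the particular form you chose for it is problematic. You reduce everything to $\lVert \rho_{\pi}\vert_{D_E} - (1-\epsilon')\rho_{\pi_E}\rVert_{TV} \leq 4t^{\pi_E}\epsilon'$, but $\rho_{\pi_E}$ generally places mass on state--actions \emph{outside} the finite dataset $D_E$; that deficit is controlled by $\lvert D_E\rvert$ and the expert's mixing time (this is exactly the content of \cref{lemma:ciosek_5_appendix}), not by $\epsilon'$, so the inequality you need cannot hold in general without importing that separate coverage argument. The route that actually yields the stated constant avoids comparing stationary distributions at all: one uses a finite-horizon mixing bound of the form ``the expected reward of the expert chain over $N$ steps from an arbitrary start is at least $N\mu^{\pi_E} - O(t^{\pi_E})$,'' applies it to each expert-following segment, and observes that in steady state the number of segments per unit time is at most $\epsilon'$ (each excursion lasts at least one step), giving $(1-\epsilon')\mu^{\pi_E} - 4t^{\pi_E}\epsilon'$ directly. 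Separately, your reading of the mixing-time definition as a ``contraction by a factor of $1/4$ per $t^{\pi_E}$ steps'' is not what the definition provides --- it gives an absolute bound of $1/4$ after $t^{\pi_E}$ steps from any start, and geometric decay then comes from submultiplicativity with ratio $1/2$ per block --- and since the whole point of the lemma is the explicit constant $4$, these constant-factor slips are not cosmetic. As written, items (i)--(iii) in your last paragraph are precisely the proof, and none of them is yet established.
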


Recall that $\mu^{\pi}$ is the average per-step reward achieved by a policy $\pi$ on the true MDP. Per-step imitation regret with respect to an expert policy $\pi_E$ is then defined as
\[R_{\pi_E}(\pi) = \mu^{\pi_E} - \mu^{\pi},\]
and the steady state distribution of a policy, $\pi$, over state, action pairs is given by $\rho_\pi$. $\mu_{\textrm{int}}^{\pi}$ gives the average reward obtained on the intrinsic reward MDP by policy $\pi$.
\begin{theorem}
\label{thm:regret}
Consider an instance of algorithm 1. Under assumptions \ref{assumption:deterministic}-\ref{assumption:kakade},
in order to achieve regret on the average reward problem of less than $\epsilon$, or total variation distance between the expert and the imitation policy less than $\epsilon$, with probability $1 - \delta$, we need to sample 
\[\max\left\{\frac{1}{\epsilon^2}32St^{\pi_E}(1+4t^{\pi_E})^2, \frac{4.5t^{\pi_E}16(1 +  4 t^{\pi_E})^2 \log(4 / \delta)}{\epsilon^2}\right\}\]
transitions from the expert policy, and 
\[\mathcal{O}\left(
\frac{t^{\pi_X}}{p_{\rm min}}  \log \left( \frac{1}{p_{\rm min}} \right)  \log^2 \left( \frac{SA}{\delta} \right)
    \log\left(
        \frac{t^{\pi_E}\beta}{\epsilon(1 - \lambda)}
    \right)
    \frac{(t^{\pi_E})^8\beta^6}{\epsilon^8 (1 - \lambda)^6}
\right)\]
from the exploratory policy, where $\beta =  \kappa(\Sigma)\lVert \mathbf{r} \rVert$, as described in \cref{lemma:discounted_to_average_appendix}.
\end{theorem}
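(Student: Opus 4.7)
The plan is to work top-down on the error budget $\epsilon$, splitting it among the three approximation layers in \cref{algo:ilbrl}: (i) the finite-expert-data gap in the best achievable intrinsic reward, (ii) the intrinsic-to-extrinsic reward translation, and (iii) the suboptimality of phased Q-learning relative to the intrinsic-optimal policy. Each layer is controlled by a lemma already proved in \cref{sec:theory}, so the argument reduces to assigning constant fractions of $\epsilon$ to each source, inverting the relevant lemma to solve for the required sample size or algorithm parameter, and taking a union bound over the failure events $\delta', \delta'', \delta'''$ (each allocated, say, $\delta/4$, with the remaining budget absorbed into constants in \cref{lemma:discounted_to_average}, which is deterministic).

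For the expert data bound I would first apply \cref{lemma:ciosek_7} to convert an intrinsic average-reward gap of $\epsilon'$ into an extrinsic gap of at most $(1 + 4t^{\pi_E})\epsilon'$, so the requirement becomes $\epsilon' \lesssim \epsilon / (1 + 4t^{\pi_E})$. \cref{lemma:ciosek_5} decomposes $\epsilon'$ as $\nu + \sqrt{8St^{\pi_E}/|D_E|}$; equating each summand to half of the target and solving for $|D_E|$ yields the two branches of the $\max$ in the expert-sample bound. The $\log(4/\delta)$ factor comes from inverting the sub-Gaussian failure probability $2\exp(-\nu^2 |D_E|/(4.5 t^{\pi_E})) \le \delta/4$.

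For the exploratory data bound, note that the policy returned by phased Q-learning is only $\eta$-discounted-suboptimal on the intrinsic MDP. Applying \cref{lemma:discounted_to_average} to this MDP turns the discounted guarantee into an average-intrinsic gap of at most $\beta(1-\gamma)/(1-\gamma|\lambda_2|) + (1-\gamma)\eta$, where $\beta = \kappa(\Sigma)\lVert \mathbf{r}\rVert$. I would choose $1-\gamma$ proportional to $\epsilon(1 - |\lambda_2|)/(\beta\, t^{\pi_E})$ to kill the first term and $\eta$ proportional to $\epsilon/(t^{\pi_E}(1-\gamma))$ to kill the second, both matched to the same target $\epsilon/(1 + 4t^{\pi_E})$ up to constants. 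Substituting these into \cref{lemma:phased_q}, with $\ell$ forced by \eqref{eqn:ell} to be $\mathcal{O}(\log(1/\eta)/\log(1/\gamma))$, produces the required number $m\ell$ of parallel samples. Multiplying by the overhead of \cref{lemma:lucas_lemma}, namely $(t^{\pi_X}/p_{\min}) \log(1/p_{\min}) \log(SA/\delta'')$, converts this into a bound on exploratory trajectory steps, and collecting the resulting polynomials in $t^{\pi_E}, \beta, 1/\epsilon$ and $1/(1-\lambda)$ produces the advertised $\mathcal{O}(\cdot)$ expression.

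The TV distance claim is essentially free: by the reduction noted after \cref{eqn:imlearning}, controlling $\mathbb{E}_{\rho_{\pi_E}}[f] - \mathbb{E}_{\rho_\pi}[f]$ uniformly over $[0,1]$-valued $f$ is equivalent, up to a factor of two, to bounding the TV between $\rho_{\pi_E}$ and $\rho_{\pi}$, and the chain of lemmas above already produces such a uniform bound. The step I expect to be the main obstacle is the bookkeeping in the exploratory paragraph: with six interdependent quantities ($\epsilon, \epsilon', \nu, \eta, \eta', \gamma$) plus $\ell$ composing through four nested lemmas, one has to be careful that the final exponents $(t^{\pi_E})^8$, $\beta^6$, $(1-\lambda)^{-6}$, $\epsilon^{-8}$ and the doubly-logarithmic $\log(t^{\pi_E}\beta / [\epsilon(1-\lambda)])$ factor in the theorem match exactly, rather than accumulating stray polynomial factors from the compounding of \cref{lemma:phased_q} and \cref{lemma:discounted_to_average}.
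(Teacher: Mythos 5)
Your plan follows the paper's proof essentially step for step: the same chain of lemmas (\cref{lemma:ciosek_7_appendix} to reduce to an intrinsic-reward gap $\epsilon'(1+4t^{\pi_E})\le\epsilon$, then \cref{lemma:ciosek_5_appendix} and \cref{lemma:discounted_to_average_appendix} to decompose $\epsilon'$ into data, discounting and optimisation errors, then \cref{lemma:phased_q_appendix} and \cref{lemma:lucas_lemma_appendix} for the exploratory sample count), the same uniform split of the error budget across the resulting terms, the same choices of $1-\gamma$ and $(1-\gamma)\eta$, the same union bound over $\delta',\delta'',\delta'''$ (with the second expert-data branch coming from the failure probability of \cref{lemma:ciosek_5_appendix}, as you note), and the same TV reduction at the end. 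The only work you defer --- the exponent bookkeeping for $m$, $\ell$ and $N$ --- is exactly what the remainder of the paper's proof carries out.
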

\begin{proof}
Let
\begin{equation}
    1 - \epsilon' := \mathbb{E}_{s,a \sim \rho_{\pi}}[\hat{r}(s, a)] = \mu_{\textrm{int}}^\pi
    \label{eqn:mu_internal}
\end{equation}represent the average reward achieved by the policy learned on the intrinsic reward problem. We begin by directly evoking \cref{lemma:ciosek_7_appendix}. This leads to our overall regret on the true problem, given by
\begin{align*}
    \mu^{\pi_E} - \mu^{\pi} & \leq \epsilon'\mu^{\pi_E} + 4 t^{\pi_E} \epsilon',
    \\
    & \leq \epsilon' \left(1 + 4 t^{\pi_E}\right).
\end{align*}
We then wish to bound the number of samples needed to achieve $\epsilon'$ small enough such that the overall error obeys
\begin{align}
    \epsilon' \left(1 + 4 t^{\pi_E}\right) 
    \leq \epsilon,
    \label{eqn:desired_inequality}
\end{align}
with probability $\delta$.

Lemma \ref{lemma:discounted_to_average_appendix} gives us the means to decompose $\epsilon'$ in terms of the error incurred by batch learning and construction of the intrinsic reward problem in our algorithm. Let the policy learned by phased Q-learning be $\eta$-suboptimal with respect to the discounted intrinsic return across all states. Lemma \ref{lemma:discounted_to_average_appendix} allows us to move to the intrinsic reward setting, by bounding the distance between the average intrinsic reward obtained by our learned policy and the optimal intrinsic average reward using this quantity. However the optimal intrinsic average reward itself is a function of the amount of data we have. Lemma \ref{lemma:ciosek_5_appendix} bounds this additional error incurred from a potential lack of expert data. Specifically, we define the reward construction error as
\begin{align*}
\epsilon'' := \nu + \sqrt{\frac{8St^{\pi_E}}{\lvert D_E \rvert}}.
\end{align*}

From \cref{lemma:ciosek_5_appendix}, we have $\mu_{\textrm{int}}^{\pi^*} \geq 1 - \epsilon''$, with some probability $1 - \delta'''$, which we will return to bound, once $\epsilon''$ has been bounded. Substituting $\mu_{\textrm{int}}^{\pi^*} = 1 - \epsilon''$ and $\epsilon = \eta$ in \cref{lemma:discounted_to_average_appendix} we have:
\begin{align*}
\mu_{\textrm{int}}^\pi \geq 1 - \nu - \sqrt{\frac{8St^{\pi_E}}{\lvert D_E \rvert}} - \kappa(\Sigma)\lVert \mathbf{r} \rVert \frac{1 - \gamma}{1 - \gamma \lvert \lambda_2 \rvert} - (1 - \gamma) \eta.
\end{align*}
From the definition of $\epsilon'$ in \cref{eqn:mu_internal},  this leads to the following bound
\begin{align*}
\epsilon' \leq  
    (1-\gamma)
    \frac{\kappa(\Sigma) \lVert \mathbf{r} \rVert}
         {1 - \gamma \lvert \lambda_2 \rvert}
    + (1-\gamma)\eta
 + \nu + \sqrt{\frac{8St^{\pi_E}}{\lvert D_E \rvert}},
\end{align*}
which we set to
\begin{align*}
    \underbrace{
        (1-\gamma)
        \frac{\kappa(\Sigma) \lVert \mathbf{r} \rVert}
             {1 - \gamma \lvert \lambda_2 \rvert}
    }_{\textrm{T1}}
    + \underbrace{
        (1-\gamma)\eta
    }_{\textrm{T2}}
    + \underbrace{
        \nu
    }_{\textrm{T3}} 
    +  \underbrace{
        \sqrt{\frac{8St^{\pi_E}}{\lvert D_E \rvert}}
    }_{\textrm{T4}} 
 \leq \frac{\epsilon}{(1 + 4 t^{\pi_E})},
\end{align*}
in order to satisfy \cref{eqn:desired_inequality}.

Since there are four terms here with largely independent algorithmic parameters, which we refer to as $\textrm{T1} - \textrm{T4}$, we allow error to be distributed evenly across all the terms, which allows us to bound the above expression termwise.

In order to control the first term, we need to set $\gamma$ large enough to satisfy
\[\textrm{T1} \leq \frac{1}{4} \frac{\epsilon}{(1 + 4 t^{\pi_E})}. \]
For convenience we define $\alpha := 4(1 + 4 t^{\pi_E}) / \epsilon$ and $\beta := \kappa(\Sigma) \lVert \mathbf{r} \rVert$. This leads us to
\begin{align}
\frac{(1-\gamma)\beta}{1 - \gamma \lvert \lambda_2 \rvert} \leq \frac{1}{\alpha}.
\label{eqn:gamma_boundary}
\end{align}
We set
\begin{align}
\gamma = \frac{\alpha \beta - 1}{\alpha \beta - \lvert \lambda_2 \rvert}, 
\label{eqn:gamma_constraint}
\end{align}
or, equivalently
\begin{align*}
(1 - \gamma) = \frac{1 - \lvert \lambda_2 \rvert}{\alpha \beta - \lvert \lambda_2 \rvert}.
\end{align*}
in order to satisfy \cref{eqn:gamma_boundary}.
Setting $\gamma$ in this way thus allows us to minimise the error incurred by transferring from the discounted to the average reward setting, by increasing the effective horizon of the discounted problem.

Moving on to the second term, directly choose
\begin{align*}
(1 - \gamma)\eta = \frac{1}{\alpha},
\end{align*}
to satisfy $\textrm{T2} \leq 1/\alpha$.
We leave the error in this form, since $\eta$ is multiplied by a factor of $1 - \gamma$ in \cref{lemma:phased_q_appendix}. Intuitively, this corresponds with the scaling of average per step rewards in the discounted problem by a factor of $1 / (1 - \gamma)$. 

Continuing to the terms corresponding to the definition of our intrinsic reward problem, we choose
\begin{align*}
\nu = \frac{1}{\alpha},
\end{align*}
to satisfy $\textrm{T3} \leq 1/\alpha$.
Finally, we choose
\begin{align}
\lvert D_E \rvert = \left\lceil \frac{1}{\epsilon^2}128St^{\pi_E}(1+4t^{\pi_E})^2 \right \rceil,
\label{eqn:expert_1}
\end{align}
where $\lceil{\cdot}\rceil$ is the ceiling function which chooses the smallest integer greater than the argument. This allows us to satisfy
\begin{align*}
\textrm{T4} = \sqrt{\frac{8St^{\pi_E}}{\lvert D_E \rvert}} \leq \frac{1}{\alpha}.
\end{align*}
This provides us with our first bound on the number of expert samples, and completes the bounding of overall error by the relevant factors. To complete our proof, we must ensure that these errors hold with overall probability $\delta$.

With the error of our algorithm controlled, we move on to bounding the failure probability of our algorithm. Let $\delta'''$  be the maximum probability of the expert policy failing to achieve an intrinsic average reward of $1 - \epsilon''$ as in \cref{lemma:ciosek_5_appendix}. From \cref{lemma:ciosek_5_appendix}, substituting $1/\alpha$ for $\nu$, we have
\[\delta''' \leq 2 \exp\left(
    -\frac{
        \left(\frac{\epsilon}{4(1 +  4 t^{\pi_E})}\right)^2\lvert D_E \rvert
    }{4.5t^{\pi_E}}
\right).\]
Setting this to be less than $\delta / 2$ and rearranging leads to the bound
\begin{align}
\lvert D_E \rvert \geq \frac{72t^{\pi_E}(1 +  4 t^{\pi_E})^2 \log(4 / \delta)}{\epsilon^2}.
\label{eqn:expert_2}
\end{align}
This is our second bound on $\lvert D_E \rvert$. Taking the maximum over \cref{eqn:expert_1} and \cref{eqn:expert_2} gives our overall bound on the number of expert samples needed.

To bound the failure probability of phased Q-learning, we first bound the probability that our simulation of the parallel sampler fails. Invoking \cref{lemma:lucas_lemma_appendix} with the substitution $\delta'' = \delta / 4$, we get
\begin{align}
N \geq \frac{2 t^{\pi_X}}{\log(2) p_{\rm min}}  \log \left( \frac{2}{p_{\rm min}} \right)  \log \left( \frac{SA}{\delta / 4} \right).
\label{eqn:explore_1}
\end{align}

With our remaining error budget, we can look to bound the probability that phased Q learning fails. Substituting $1/\alpha$ for $(1- \gamma)\eta$ and $\delta'=\delta/4$ in \cref{lemma:phased_q_appendix} gives
\begin{align}
    m\ell 
    \geq \log\left(\frac{2 \ell SA}{\delta / 4}\right)\frac{2 \gamma^2 \ell}{(1-\gamma)^2(\frac{1}{\alpha}(1 - \gamma) - 2\gamma^{\ell})^2}.
    \label{eqn:explore_2}
\end{align}
To complete the proof, we recall from \cref{eqn:ell} that
\begin{align}
    \ell 
    \geq \log_{\gamma}\left(\frac{\frac{1}{\alpha}(1 - \gamma)}{2}\right).
    \label{eqn:explore_3}
\end{align}
Together, \cref{eqn:expert_2,eqn:explore_1,eqn:explore_2,eqn:explore_3,eqn:gamma_constraint} give the sample complexity needed to bound the failure probability of our procedure, given by $\delta' + \delta'' + \delta'''$, below the global error $\delta$, with the overall number of expert samples given by the maximum over \cref{eqn:expert_1} and \cref{eqn:expert_2}, and the number of exploratory samples given by $\lvert D_X \rvert \geq N\ell m$.

While the above gives the most precise version of our bound, it is difficult to interpret the effect of changing $\epsilon, \delta$, and the parameters of the environment, largely due to the obscuring influence of $\gamma$ on $m$ and $\ell$.

To fully simplify, we will choose sensible values for $\ell$ and $\gamma$, which will lead to a bound with a simpler form. Since $\gamma < 1$, $\log_\gamma$ is a decreasing function, we choose
\begin{align*}
    \ell = \log_{\gamma}\left(\frac{\frac{1}{\alpha}(1-\gamma)}{4}\right).
\end{align*}
Substituting this for $\ell$ in \cref{eqn:explore_2} leads to the following bound on $m$:
\begin{align}
    m
    &\geq \log\left(
        \frac{2SA\log_{\gamma}\left(
            \frac{\frac{1}{\alpha}(1-\gamma)}{4}\
        \right)}{\delta / 4}
    \right)
    \frac{2 \gamma^2 }{(1-\gamma)^2\left(\frac{(1 - \gamma)}{\alpha} - \frac{(1-\gamma)}{2\alpha}\right)^2} \nonumber \\
    & \geq \log\left(
        \frac{2SA\log_{\gamma}\left(
            \frac{\frac{1}{\alpha}(1-\gamma)}{4}\
        \right)}{\delta / 4}
    \right)
    \left(\frac{8 \gamma^2 \alpha^2}{(1 - \gamma)^4}\right).
    \label{eqn:m_l_substitute}
\end{align}
To further simplify, we choose a sensible value for $\gamma$. Noticing that our bound on $\gamma$ based on \cref{eqn:gamma_boundary} is increasing in $\alpha\beta$, instead of \cref{eqn:gamma_constraint}, we choose
\begin{align*}
    \gamma = \frac{2\alpha\beta - 1}{2\alpha\beta - \lvert \lambda_2 \rvert}.
\end{align*}
Which, since $2\alpha\beta \geq 1$ and $0 \leq \lvert \lambda_2 \rvert \leq 1$, remains in $[0, 1)$, and satisfies \cref{eqn:gamma_boundary}.
Substituting this into our choice of $\ell$ and simplifying leads to
\begin{align*}
    \ell = \frac{
        \log\left(
            \frac{\epsilon^2 (1 - \lvert \lambda_2 \rvert)}
            {4(1 + 4 t^{\pi_E})(8(1 + 4 t^{\pi_E})\beta - \lvert \lambda_2 \rvert \epsilon)}
        \right)
    }{
         \log\left(
            \frac{8(1 + 4 t^{\pi_E})\beta - \epsilon}
            {8(1 + 4 t^{\pi_E})\beta - \lvert \lambda_2 \rvert \epsilon}
        \right)
    }.
\end{align*}
If we divide $\ell$ by
\[l = \left(\frac{4(1 + 4 t^{\pi_E})\beta}{\epsilon (1 - \lvert \lambda_2 \rvert)}\right)^2,\]
and take limits, through use of a limit solver, we find that multiple applications of l'Hopital's Rule give \[
\lim_{\alpha\to\infty, \beta\to\infty, \lambda\to 1} \frac{\ell}{l} = 0, 
\]
which implies that:
\[
\ell = \mathcal{O}\left(\left(\frac{4(1 + 4 t^{\pi_E})\beta}{\epsilon (1 - \lvert \lambda_2 \rvert)}\right)^2\right).
\]

All that is left then is to bound the second factor in \cref{eqn:m_l_substitute}. Substituting our values of $\gamma$, we have
\begin{align*}
    \left(\frac{8 \gamma^2 \alpha^2}{(1 - \gamma)^4}\right) &= 
    8\alpha^2\frac{(2\alpha\beta - 1)^2(2\alpha\beta - \lvert \lambda_2 \rvert)^2}{(1 - \lvert \lambda_2 \rvert)^4} \\
    & \leq 8\alpha^2\frac{(2\alpha\beta)^4}{(1 - \lvert \lambda_2 \rvert)^4},
\end{align*}
where the second inequality follows from the fact that $2 \alpha \beta \geq 1$.
Bringing this together with our first term, this gives an overall bound for $m$ as
\[m = \mathcal{O}\left(
    \log\left(
         \frac{SA\left(\frac{t^{\pi_E}\beta}{\epsilon(1 - \lvert \lambda \rvert)}\right)}{\delta}
    \right)
    \frac{(t^{\pi_E})^6\beta^4}{\epsilon^6 (1 - \lambda)^4}
\right)\]
Multiplying with our bounds for $\ell$ and $N$, we get the overall sample complexity of the exploratory dataset
\[\lvert D_X \rvert = Nm\ell = \mathcal{O}\left(
\frac{t^{\pi_X}}{p_{\rm min}}  \log \left( \frac{1}{p_{\rm min}} \right)  \log^2 \left( \frac{SA}{\delta} \right)
    \log\left(
        \frac{t^{\pi_E}\beta}{\epsilon(1 - \lambda)}
    \right)
    \frac{(t^{\pi_E})^8\beta^6}{\epsilon^8 (1 - \lambda)^6}
\right).\]

The bound on total variation  follows from the well-known identity
\[\mathbb{E}_{\rho_1}\left[f\right] - \mathbb{E}_{\rho_2}\left[f\right] \leq \epsilon \implies \left\lVert \rho_1 - \rho_2 \right\rVert_{TV} \leq \epsilon, \]
for two discrete distributions, $\rho_1, \rho_2$ and a function $f$ with range $[0,1]$. See e.g. \citet{ciosek2022imitation} for a proof. Substituting $\rho_{\pi_E}$ for $\rho_{1}$, $\rho_{\pi}$ for $\rho_{2}$, with $f = r$, gives
\begin{align}\mu_{\pi_E} - \mu_{\pi} \leq \epsilon \implies \lVert \rho_{\pi_E} - \rho_{\pi}\rVert_{TV} \leq \epsilon.\end{align}
Choosing $\lvert D_E \rvert$ and $\lvert D_X \rvert$ sufficiently high as described above leads to the left side holding true as discussed. This implies that with the same number of samples, the total variation distance between expert and imitator is less than $\epsilon$, with probability at least $1 - \delta$.
\end{proof}

\section{Details of Evaluation Protocol}

\subsection{Pseudocode}
Algorithm \ref{algo:evaluation_protocol} represents an abstract procedure by which offline RL practitioners can tune and evaluate agents that are trained on offline data, without access to the true environment. The protocol integrates an all-important phase in which the offline policy evaluation (OPE) algorithm itself is tuned, a practice which we found to be highly necessary, and absent from existing literature altogether.

We view an offline RL algorithm, $\mathcal{A}$ as a (stochastic) function which maps from a dataset, $D \in \mathcal{D}$, and hyperparameter configuration, $\phi \in \Phi$, to a policy $\pi \in \Pi$, where $\mathcal{D}$ is the space of all sets of $(s, a, r, s')$-tuples. Our protocol is then used to optimise over a discrete set of candidate hyperparameter configurations, returning those that lead to the highest expected performance on held-out data. Here we assume that datasets are indexed by trajectory in the first dimension, then state, though a shuffled dataset can be used, so long as initial and terminal states are labeled appropriately.

An OPE algorithm then maps from datasets and policies, as well as OPE hyperparameters $\varphi \in \varPhi$ to value functions $V \in \mathcal{V}$. Our criterion for selection of OPE hyperparameters is referred to as $\textrm{RankError}_D$, which takes a set of value functions learned by OPE, $V_k$, each of which evaluates a different policy, $\pi_k$, and compares how the ranking implied by $V_k$ compares to sample estimates of $V^{\pi_k}$
\[\textrm{RankError}_D(\{V_k\}, \{V^{\pi_k}\}) 
    = \sum_{k}\left\lvert 
        \textrm{Rank}\left[V_{k}(D)\right] 
        - \textrm{Rank}\left[V^{\pi_k}(D)\right]
    \right \rvert.
\]
In our case, rankings are performed over value functions that are averaged over several random seeds, since the OPE algorithm uses randomly shuffled data. In the case where OPE diverged, the corresponding term was set to the maximum $\lvert\{\pi_k\}\rvert$, with other rankings adjusted to minimise $\textrm{RankError}$. Ties were broken based on the total distance to the true values $\sum_k \lvert V_k(D) - V^{\pi_k}(D) \rvert$. $D_{0:i}$ refers to subset of $D$ with indices between 0 and $i$, ${D}_{l:\lvert D \rvert, 0}$ is the subset of $D$ with indices between $l$ and $\lvert D \rvert$ intersected with the subset of $D$ corresponding to start states.

\subsection{Tuning Data Composition for Policy Evaluation}

In our experiments, we found that tuning the policy evaluation hyperparameters played a vital role in ensuring good policy selection. Indeed, for many hyperparameter settings, policy evaluation diverged altogether. In our setting, since the data was derived from multiple policies, one key parameter was the relative mix of different sources on which to evaluate the policy. This is illustrated in Figure \ref{fig:p_eval}.

\subsection{Algorithmic Benefits of Tuning Protocol}
We note that an additional algorithmic benefit of this protocol is that, due to the instability of policy evaluation under distribution shift, our policy evaluation step represents a form of implicit regularisation on the final policy selected for deployment. Policies that lead to instability in evaluation are likely to diverge heavily from the distribution induced by the dataset, while policies that stay close to the data will lead to more stable evaluation. Since the limitation of distribution shift seems to be fundamental to current practice in batch RL, we propose that this is beneficial, especially in the context of imitation learning, in which we are looking to achieve a policy similar to the one used to generate the data.

\setcounter{algorithm}{1}
\begin{center}
\begin{minipage}{.8\linewidth}
\centering
\begin{algorithm}[H]
\caption{Offline Hyperparameter Tuning for RL \label{algo:evaluation_protocol}}
\begin{algorithmic}[1]
    \renewcommand{\algorithmicrequire}{\textbf{Input:}}
    \renewcommand{\algorithmicensure}{\textbf{Output:}}
    \REQUIRE \mbox{}\newline
        \vspace{-1.2em}
        \begin{description}
            \item[Data:] Dataset $D$, Training set fraction $d$, 
            \item[Offline RL:] Algorithm $\mathcal{A}:\mathcal{D} \times \Phi \to \Pi$,  Hyperparameters $\{\phi_n\}$, Random seeds $\{x_s\}$
            \item[OPE:] Algorithm $\mathcal{A}': \mathcal{D} \times \Pi \times\varPhi \to \mathcal{V}$, Training fraction $d'$, Known policies $\{\pi_k\}$, Hyperparameters $\{\varphi_j\}$
        \end{description} 
    \ENSURE Optimised hyperparameters, $\phi^*$, Performance of optimised algorithm $R_{\mathcal{A}(D, \phi^*)}$
    \Algphase{Phase 1: Data Splitting}
        \STATE $i \gets \textrm{int}(d\lvert D \rvert )$
        \STATE $D_T \gets D_{0:i}$ \hfill  $\triangleright$ Assign training set
        \STATE $D_V \gets D_{i:\lvert D \rvert}$ \hfill  $\triangleright$ Assign validation set
        \STATE $l \gets \textrm{int}( d'\lvert D_V \rvert )$
        \STATE $D_{V_{\textrm{PE}}} \gets {D_V}_{0:l}$ \hfill  $\triangleright$ Assign policy evaluation training set
        \STATE $D_{V_{F}} \gets {D_V}_{l:\lvert D_V \rvert, 0}$ \hfill $\triangleright$ Assign final performance validation set (initial states only)
    \Algphase{Phase 2: Offline Policy Evaluation Tuning}
        \FOR{$\varphi_j \in \{\varphi_j\}$}
            \FOR{$\pi_k \in \{\pi_k\}$}
                \STATE $V_{jk} \gets \mathcal{A}'(D_{V_{\textrm{PE}}}, \pi_k, \varphi_j)$ \hfill  $\triangleright$ Evaluate known policy on PE training set
            \ENDFOR
        \ENDFOR
        \STATE  $\varphi^* \gets \argmin_{j} \textrm{RankError}_{ D_{V_{F}}}(\{V_{jk}\}, \{V^{\pi_k}\})$  \hfill  $\triangleright$ Choose evaluation hparams from initial states
    \Algphase{Phase 3: Offline RL Training}
        \FOR{$\phi_n \in \{\phi_n\}$}
            \FOR{$x_s \in \{x_s\}$}
                \STATE $\pi_{ns} \gets \mathcal{A}(D_{T}, \phi_n)$ \hfill  $\triangleright$ Train policy via offline RL for several seeds
            \ENDFOR
        \ENDFOR
    \Algphase{Phase 4: Hyperparameter Selection}
        \FOR{$\pi_{ns} \in \{\pi_{ns}\}$}
            \STATE $V_{ns} \gets \mathcal{A}'(D_{V_{\textrm{PE}}}, \pi_{ns}, \varphi^*)$ \hfill  $\triangleright$ Evaluate policy with optimal PE hyperparameters
        \ENDFOR
        \STATE $\phi^* \gets \argmax_{n} \sum_{s} V_{ns}(D_{V_F})$ 
        \hspace{0.01cm} \hfill  $\triangleright$ Optimal hyperparameters maximise value of initial states
        \STATE $R_{\mathcal{A}(D, \phi^*)} \gets \textrm{EvalEnv}({\pi_{\phi^*}})$ \hfill  $\triangleright$ Evaluate optimal hyperparameters averaged across seeds
        \RETURN $\left(\phi^*, R_{\mathcal{A}(D, \phi^*)}\right)$
\end{algorithmic}
\end{algorithm}
\end{minipage}
\end{center}

\begin{figure}
    \centering
    \includegraphics[width=0.4\textwidth]{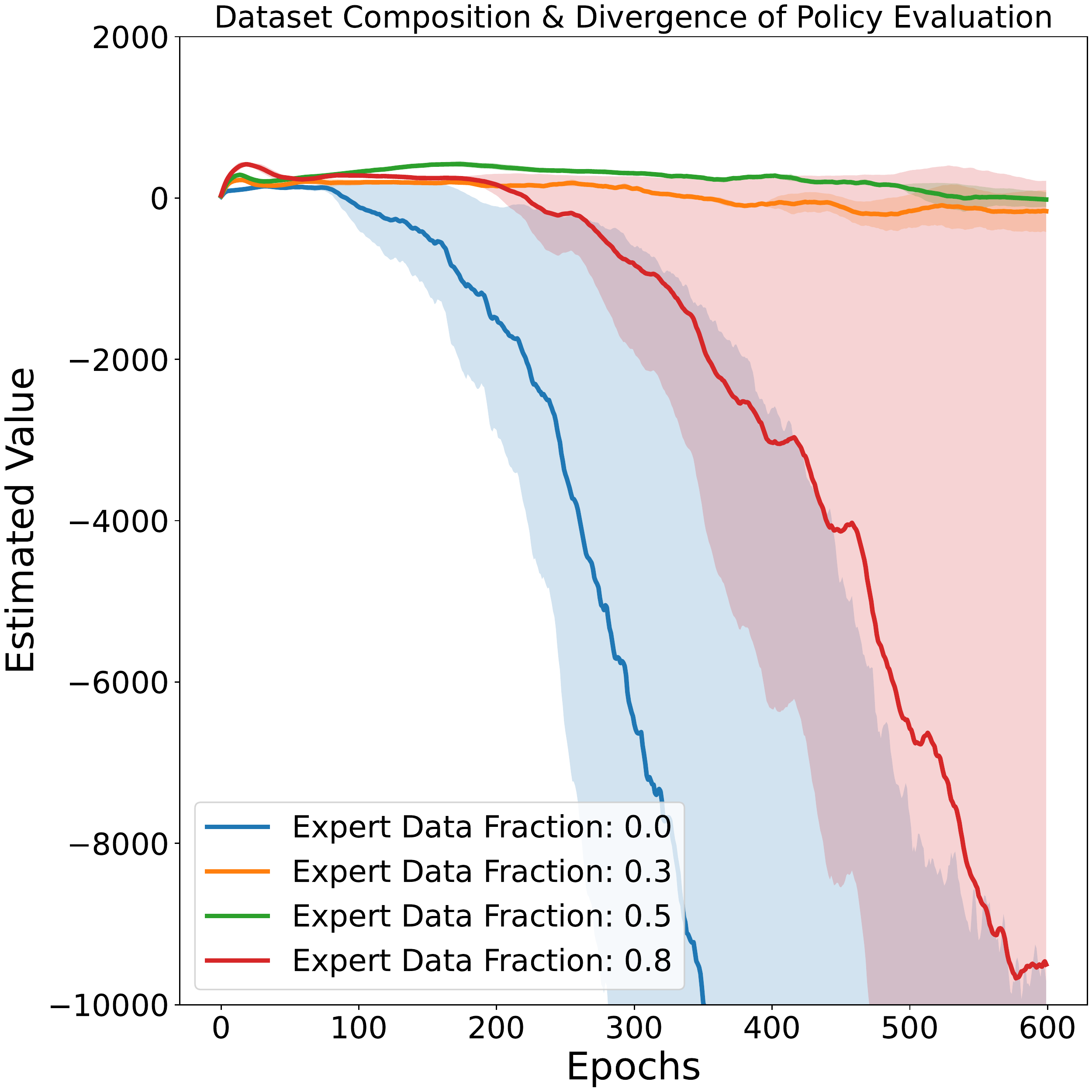}
    \caption{Policy Evaluation of a single policy on varying dataset compositions. We see that for more evenly mixed data, evaluation is more stable, while for less balanced data, divergence occurrs. This highlights the importance of tuning the policy evaluation protocol itself.}
    \label{fig:p_eval}
\end{figure}

\section{Details of Empirical Analysis}
Here we provide a complete overview of the empirical evaluation of our algoritm and the baselines. 
\subsection{Environments}
We make use of the OpenAI Gym MuJoCo \citep{mujoco} continuous control benchmarks. These locomotion tasks involve learning control policies for several robots with varying mythologies in a simulated physical environment. The agents, which are restricted to movement in a two dimensional plane, are optimised for forward velocity and maintaining their centre of mass in a prespecified vertical range (representing the agent maintaining an upright pose). Specifically, we make use of \texttt{Hopper-v2}, \texttt{Halfcheetah-v2}, \texttt{Walker2d-v2}, and \texttt{Ant-v2} as D4RL datasets \citep{fu2020d4rl} (described below) are available for these environments. We use \texttt{gym} version \texttt{0.24.1}, and MuJoCo version \texttt{2.1}. Additional details regarding reward function specifics, as well as state and action spaces can be found in the documentation at \href{https://github.com/openai/gym}{https://github.com/openai/gym}. A discount factor of $\gamma = 0.99$ was used across all experiments.
\subsection{Datasets}
We make use of the D4RL dataset suite \citep{fu2020d4rl} for training and offline policy evaluation. Seveal datasets are available for each of the four environments listed above, each generated by a policy of different quality: random, medium, or expert. Policies were trained via Soft Actor Critic (SAC) \citep{haarnoja2018soft} online on the environment. The ``medium'' datasets are taken from an intermediate, suboptimal policy which has not yet converged. SAC learns a stochastic policy from which the datasets are sampled. Each dataset represents $10^6$ sample transitions from an environment of the form $\langle s, a, r, s', t, d\rangle$, which is a normal state transition, plus binary indicators for reaching a terminal state as well as reaching $1000$ time steps, after which the episode is cut short artificially. In addition, each dataset provides the parameters of the policy used to generate the data. For imitation learning algorithms, we mask the true reward, save for the evaluation protocol described above. Specifically, we use the version of D4RL available at the commit: \href{https://github.com/rail-berkeley/d4rl/commit/d842aa194b416e564e54b0730d9f934e3e32f854}{https://github.com/rail-berkeley/d4rl/commit/d842aa194b416e564e54b0730d9f934e3e32f854}.
\subsubsection{Dataset Composition}
In order to simulate use of known, safe policies for realistic evaluation of offline IRL methods, we make use of the ``medium'' D4RL data for our exploratory dataset, and the ``expert'' D4RL data for our expert. Noting that a naive behavioral cloning (BC) baseline using only the expert data was able to achieve an expert-level policy for several environments when $100$ expert trajectories were used, we instead opted to use only $18$ expert trajectories, which we found to be the maximum number such that the BC baseline began to perform suboptimally. Our total training dataset size was then taken to be $6\textrm{e}5$ transitions, in order to leave enough data to be used for policy evaluation. Since each expert trajectory was approximately $1000$ timesteps long, this means that the training datasets were composed of only $3\%$ expert data. For policy evaluation, as described above, the dataset composition was tuned as a hyperparameter. We found that, from values we tried, a $50/50$ split was optimal. The overall size of the policy evaluation set was $7\textrm{e}5$ transitions, with $5\textrm{e}5$ used for training, and $2\textrm{e}5$ reserved for validation, of which only $208$ were the initial states ultimately used for validation in our experiments.
\subsection{Algorithms}
In this section we comprehensively cover the details of the algorithms used in our experiments. For baselines, we made use of lightly modified versions of implementations made available in association with \citet{ma2022smodice}, which can be found at: \href{https://github.com/JasonMa2016/SMODICE}{https://github.com/JasonMa2016/SMODICE}
\subsubsection{Architectures}
All policies and value functions across our method and the baselines shared a common architecture, taken from \citet{ma2022smodice}, but established as standard in several works prior. Policies and values were parameterised as multilayer perceptrons \citep{Goodfellow-et-al-2016}. Two hidden layers were used, each of size \texttt{256}. Rectified Linear Units (ReLU) \citep{Goodfellow-et-al-2016} were used as nonlinearities between layers. For policies, since MuJoCo environments have symmetrically bounded action spaces, the outputs of these networks were then passed through an elementwise \texttt{tanh} function scaled to match the environment bounds. The discriminator network used for SMODICE \citep{ma2022smodice} and ORIL \citep{zolna2020offline} baselines was an MLP of the same size as the policy and value networks, though here \texttt{tanh} nonlinearities were used between layers.

\subsubsection{Algorithm Implementation Details}
Both baselines and ILBRL were implemented using the PyTorch scientific computing library (\href{https://pytorch.org/}{https://pytorch.org/}), version \texttt{1.12.0}. Experiments were run on Google Cloud Compute \texttt{n1-standard-64} VM instances with \texttt{Intel Haswell} CPUs and the \texttt{Debian 10} operating system. GPU computing was not used for our experiments. Open source code will be made available upon publication.

As was suggested in \citet{fujimoto2021minimalist}, and used in \citet{ma2022smodice}, we employed dataset standardisation. Sample means, $\mu_{D_T}$ and standard deviations $\sigma_{D_T}$ were estimated for observations $o$ across the training sets, and all data passed through the networks, $s$, was transformed according to $s = (o - \mu_{D_T}) / \sigma_{D_T}$.

The Adam \citep{kingma2014adam} optimiser was used for optimisation across all learned functions. Parameters were kept at the defaults used by PyTorch version \texttt{1.12.0}, except for the learning rate (gain) which was tuned as a hyperparameter. The baselines made use of a reward scaling parameter, which was also tuned as a hyperparameter.

For ILBRL, we made use of TD3-BC \citep{fujimoto2021minimalist} as our offline RL algorithm, though in principle, other algorithms designed for a similar setting could work as well. TD3-BC is currently considered a state-of-the-art algorithm, and was chosen for its simplicity, as it only introduces one additional hyperparameter as compared to an online RL algorithm, the fact that it learns deterministic policies, and the demonstrated effectiveness of the method. We found that this combination made TD3-BC ideal for our setting, which demanded simplicity especially due to our multi-stage evaluation protocol described above. TD3-BC can be summarised as the policy update scheme:
\[
    \theta' = \theta + \alpha \nabla_{\theta} \left(
        \lambda Q(s,\pi_\theta{s}) 
        + \textrm{BC}(\pi_{\theta}(s), a)
    \right),
\]
where $s,a$ are the state and action in the dataset, $\alpha$ is a learning rate in $[0,1]$, and the $Q$ function is learned separately through Double-Q learning on the dataset, with clipped zero-mean noise added to the policy when performing the bootstrapped update. The BC term is simply the mean-squared-error between the action taken by the policy and the action in the dataset. We introduced one change to the TD3-BC algorithm, which was a re-weighting of the statewise behavioural cloning loss:
\[
    \textrm{BC}_{\textrm{\algo}}(s, a'):= \textrm{BC}(\pi(s), a)r_I(s,a),
\]
where $r_I$ is our intrinsic reward function. This re-weighting encourages imitation of the expert policy, and reduces the degree to which the exploration policy is copied. We found this led to higher performance based on our offline evaluation protocol. An ablation study can be seen below in Figure \ref{fig:ablation_plots}.

\subsubsection{Off Policy Evaluation Implementation Details}
All methods were evaluated using the Expected SARSA \citep{van2009theoretical} off-policy evaluation algorithm with target networks \citep{mnih2013playing}. Stochastic policies were evaluated according to the policy $\pi'(s,a) = \mathbb{E}\left[\pi(s,a)\right]$, which is common practice for Gaussian-parameterised policies such as the ones employed here \citep{haarnoja2018soft}. The expected SARSA bootstrapped update for deterministic policies with target networks is then given by:
\[
\theta' = \theta + \nabla_\theta \left[ 
    Q_{\theta}(s,a) - r(s,a) - \gamma Q_{t}\big(s', \pi(s')\big)
\right]^2,
\]
with target networks updated using the exponential moving average:
\[
    t' = (1 - \tau)t + \tau\theta
\]
In order to make policy evaluation more robust, updates were performed according to the mean bootstrap values of two separately parameterised value and target networks: in the equations above this corresponds to:
\[
\theta_i' = \theta_i + \nabla_{\theta_i} \left(
    Q_{\theta_i}(s,a) - r(s,a) - \frac{1}{2}\gamma \left[
        Q_{t_1}\big(s', \pi(s')\big) + Q_{t_2}\big(s', \pi(s')\big)
    \right]
\right)^2,
\]
for $i\in\{1,2\}$, with target networks updated as before in accordance with their corresponding value network.

\subsubsection{Evaluation Protocols}
In order to tune hyperparameters, we made use exclusively of the protocol described above, using only single evaluations on the environment after hyperparameter ranges had been decided in advance. Each offline RL algorithm was evaluated across 10 hyperparameter configurations, with 6 random \textit{policy seeds} per hyperparameter. Policy evaluation was then performed for each of the random seed-hyperparameter combinations. Since policy evaluation is itself a stochastic algorithm, evaluation of each configuration was tested across 3 \textit{evaluation seeds}. Amongst evaluations that converged, the hyperparameter configuration that was then chosen was the one that had the highest learned value on a held-out dataset consisting only of environment start states.

\paragraph{Reported Results}
The performance of the optimal configuration was evaluated for each of the \textit{policy seeds} as the mean return of of 30 rollouts of 1000 steps on the true environment, averaged across 10 different random \textit{environment seeds}. These means were taken to represent the true performance of the learned policy.

This gives $6$ \textit{policy performance} data points per algorithm, per environment (one for each policy seed). To aggregate this data, we follow the recommendations of \citet{agarwal2021deep}. We perform a stratified bootstrap, sampling $6$ points with replacement from the policy performance pool for each environment. To compute the IQM of a bootstrap, we discard samples in the bottom or top 25th percentile of the $6$ samples in each environment. The mean is then taken over the remaining data: across environments for our confidence interval plots, and per environment for the performance profile.

In order to compute confidence intervals, we repeat this process $10\ 000$ times, taking the lower bound of the interval to be the data point at the $2.5$th percentile of IQM samples, and the upper bound to be the data point at the $97.5$th percentile. This corresponds to the \textit{percentile bootstrap assumption}

The environmentwise results in \cref{tab:mean_table} were computed as the mean and 95\% CI scores of the three policy seeds per environment, using the more conventional \textit{normal assumption}.

\subsection{Additional Results}

In order to examine in more details the effects of our design decisions, we have conducted additional experiments. To demonstrate the importance of tuning the policy evaluation protocol, we performed policy evaluation on a randomly selected policy trained on our dataset using ILBRL in the \texttt{Walker2d} environment. Varying only the data composition of the policy evaluation process, we observed that when the data was less balanced between exploratory and expert data, policy evaluation was more likely to diverge altogether, while more evenly mixed data led to stable evaluation. These results can be found in Figure \ref{fig:p_eval}.

In addition, we performed an ablation study on our modified reward function. We found it to improve performance, though not across all tasks. A performance profile and IQM bootstrap CI can be found in figure \ref{fig:ablation_plots}. In addition, despite the fact that gains are not extreme, our policy evaluation procedure is able to select for the reward-weighted ILBRL over the ablation using only offline data, further evidencing the robustness and effectiveness of our evaluation procedure.

\begin{figure}[H]
    \centering
    \begin{subfigure}{0.3\textwidth}
        \centering
         \includegraphics[width=\textwidth]{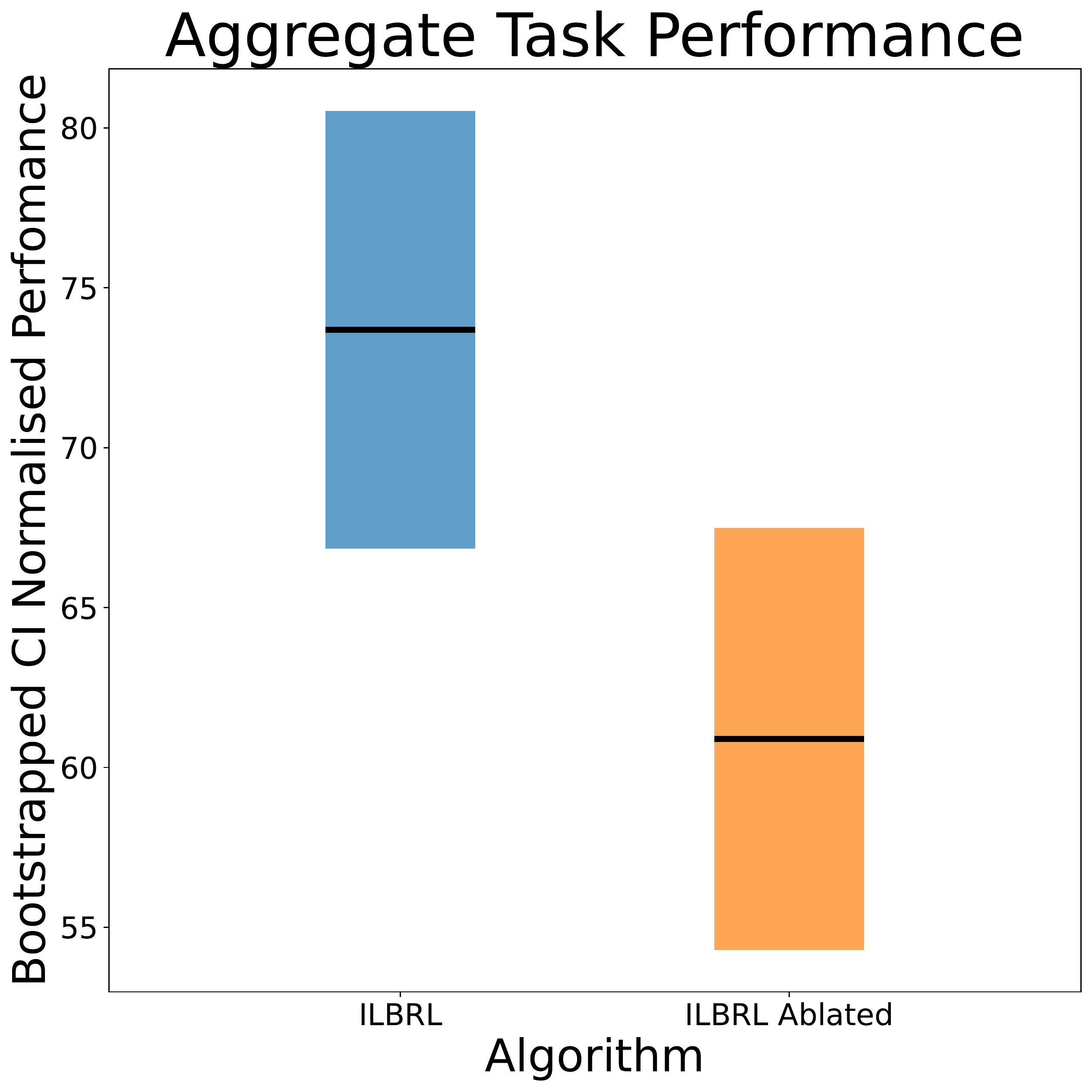}
    \end{subfigure}
    \hfill
    \begin{subfigure}{0.3\textwidth}
        \centering
         \includegraphics[width=\textwidth]{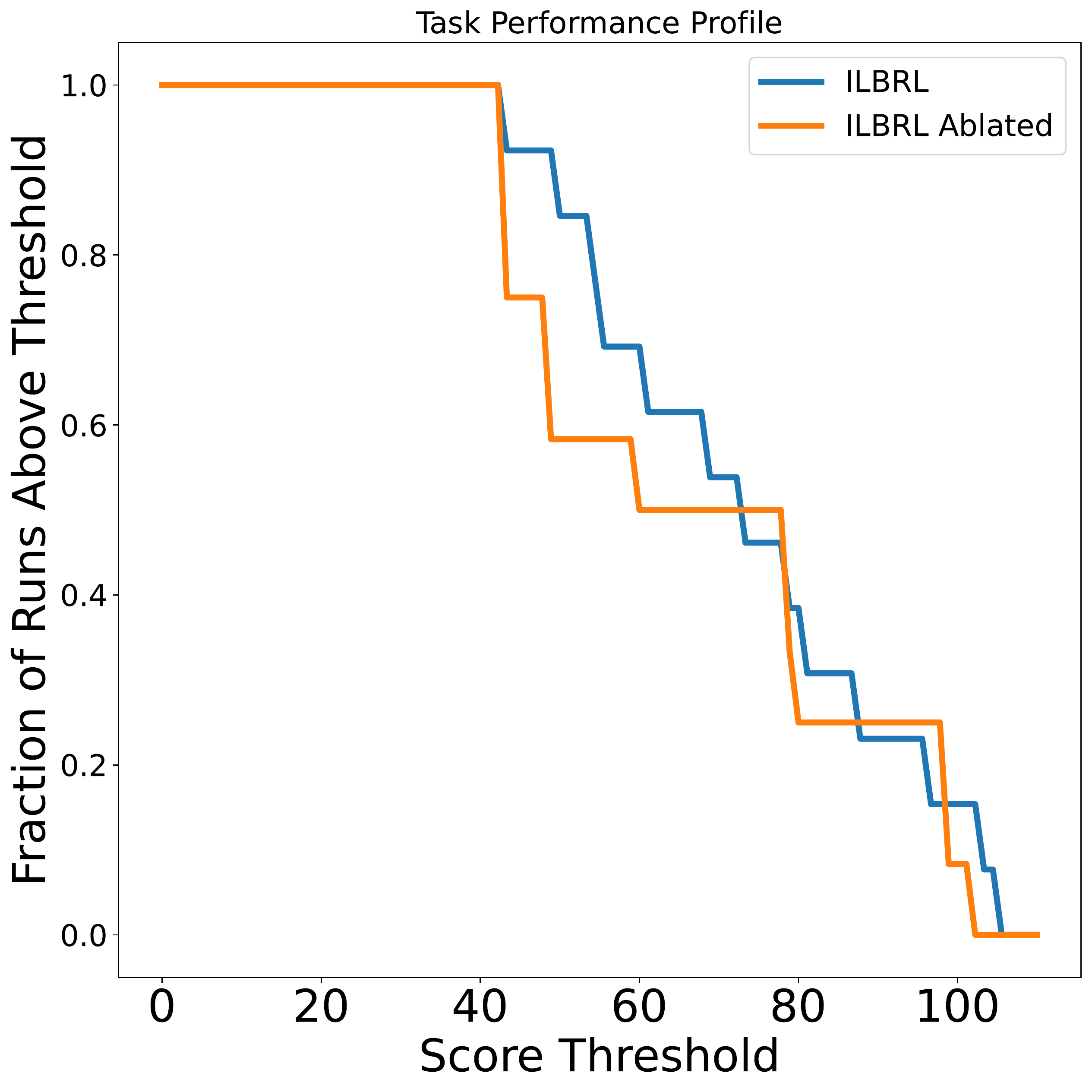}
    \end{subfigure}
    \hfill
    \begin{subfigure}{0.3\textwidth}
        \centering
         \includegraphics[width=\textwidth]{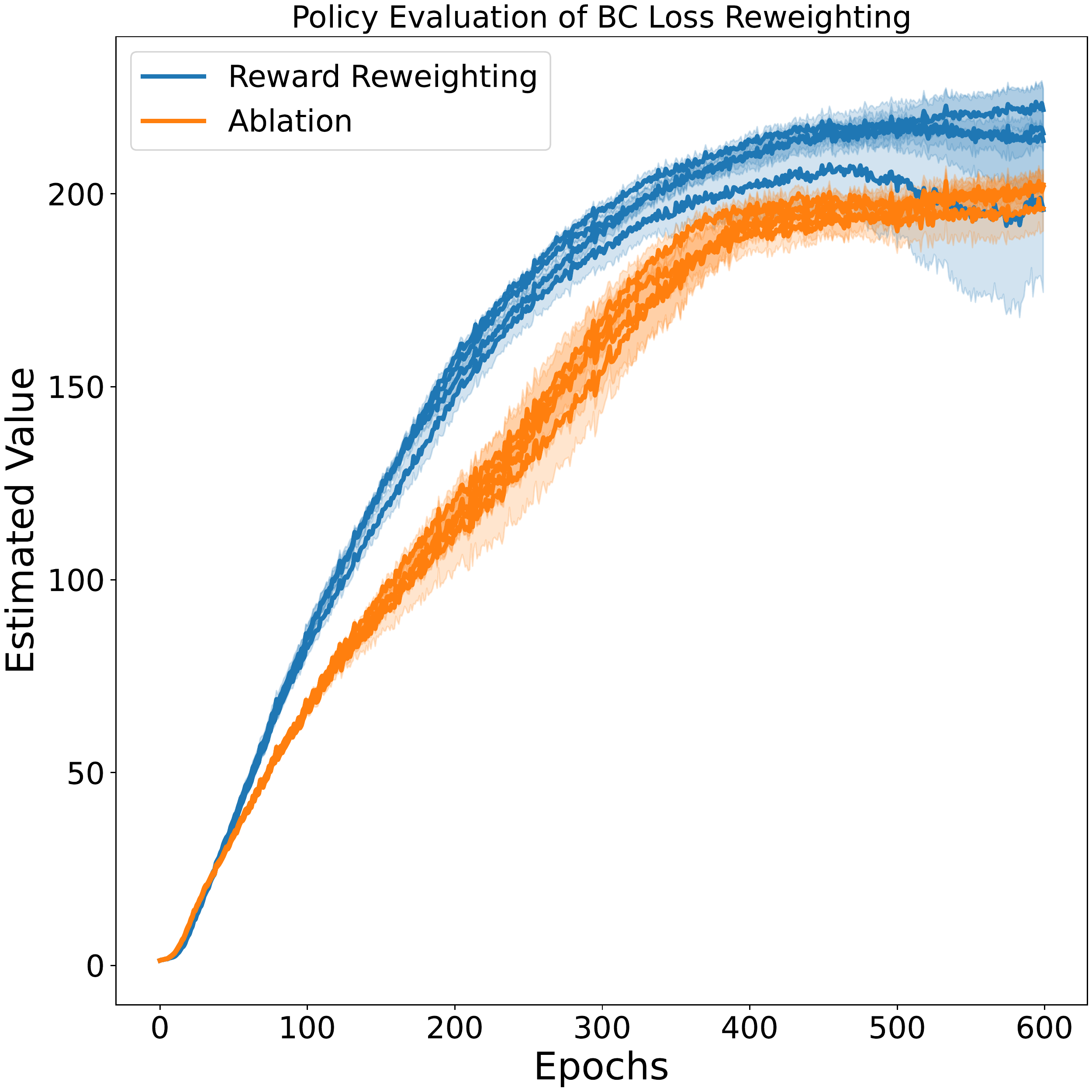}
    \end{subfigure}
  \caption{Performance of ILBRL with reward-weighted BC loss and without reward weighting (ablated). We see that reward weighting does provide some performance gains on the true environment, (left and centre figures), and that these gains are reflected in our offline policy evaluation procedure (right). \label{fig:ablation_plots}}
\end{figure}

\subsection{Hyperparameters}
\subsubsection{Tuning Ranges}
All hyperparameters were tuned using random search. In order to evaluate fairly against the baselines, given the novel evaluation protocol, we retuned baseline hyperparameters, with ranges near the defaults from \href{https://github.com/JasonMa2016/SMODICE}{https://github.com/JasonMa2016/SMODICE}. For ILBRL, hyperparameters that were not tuned were taken to be defaults from the TD3-BC implementation at \href{https://github.com/sfujim/TD3_BC}{https://github.com/sfujim/TD3\_BC}. We emphasise that policy evaluation was tuned using policies from the dataset, not from our method or the baselines.

\begin{table}[H]
    \centering
    \begin{tabular}{|l|c|}
        \hline
        \textbf{Hyperparameter} & \textbf{Range or Setting} \\ \hline
        $\lambda$ & uniform(1,4)\\
        Critic Learning Rate & uniform($10^{-5}, 10^{-4}$)\\
        Actor Learning Rate & uniform($10^{-5}, 10^{-4}$)\\
        Training Epochs & 400\\
        Minibatches Per Epoch & 2500\\
        Minibatch Size &  256\\
        \hline
    \end{tabular}
    \caption{ILBRL Hyperparameter Ranges}
    \label{tab:ILBRL_Ranges}
\end{table}
\begin{table}[H]
    \centering
    \begin{tabular}{|l|c|}
        \hline
        \textbf{Hyperparameter} & \textbf{Range or Setting} \\ \hline
        Reward Scale & uniform(0.05, 3)\\
        Discriminator Learning Rate & uniform($10^{-4}, 10^{-3}$)\\
        Actor Learning Rate & uniform($10^{-5}, 10^{-4}$)\\
        \hline
    \end{tabular}
    \caption{SMODICE \& ORIL Hyperparameter Ranges}
    \label{tab:SMODICE_Ranges}
\end{table}
\begin{table}[H]
    \centering
    \begin{tabular}{|l|c|}
        \hline
        \textbf{Hyperparameter} & \textbf{Range  or Setting} \\ \hline
        Learning Rate & loguniform($3 \times 10^{-4}, 3 \times 10^{-2}$)\\
        Training Epochs & 400\\
        Minibatches Per Epoch & 2500\\
        Minibatch Size &  256\\
        \hline
    \end{tabular}
    \caption{Behavioural Cloning Hyperparameter Ranges}
    \label{tab:BC_Ranges}
\end{table}
\begin{table}[H]
    \centering
    \begin{tabular}{|l|c|}
        \hline
        \textbf{Hyperparameter} & \textbf{Range} \\ \hline
        Expert Dataset Fraction & choice($[0, 0.3, 0.5, 0.8]$)\\
        Target Network Update Frequency & choice($[2, 10, 16]$)\\
        Value Learning Rate & choice($[3\times 10^{-6}, 3\times 10^{-5}, 3\times 10^{-4}, 3\times 10^{-3}]$)\\
        \hline
    \end{tabular}
    \caption{Off Policy Evaluation Hyperparameter Ranges}
    \label{tab:OPE_Ranges}
\end{table}
\subsubsection{Final Hyperparameters}
In this section we detail the hyperparameters selected for use by the policy evaluation process. We note that for reproducibility reasons, the random samples used during training were seeded identically, so the configurations for \texttt{Halfcheetah}, \texttt{Walker2d}, and \texttt{Ant} are identical for ILBRL, all environments ended up selecting the same hyperparameters for ORIL, and hyperparameters are identical for \texttt{Hopper} and \texttt{Walker2d} with BC.
\begin{table}[H]
    \centering
    \begin{tabular}{|l|c|}
        \hline
        \textbf{Hyperparameter} & \textbf{Value} \\ \hline
        Expert Dataset Fraction & $0.5$\\
        Target Network Update Frequency & $16$\\
        Value Learning Rate & $3\times 10^{-4}$\\
        \hline
    \end{tabular}
    \caption{Policy Evaluation Hyperparameters}
    \label{tab:OPE_hyperparameters}
\end{table}
\begin{table}[H]
    \centering
    \begin{tabular}{|l|c|}
        \hline
        \textbf{Hyperparameter} & \textbf{Value} \\ \hline
        $\lambda$ & $1.9898232924151003$\\
        Critic Learning Rate & $6.308344747206642\times 10^{-5}$\\
        Actor Learning Rate & $4.279938571320915\times 10^{-5}$\\
        \hline
    \end{tabular}
    \caption{ILBRL Hyperparameters: Hopper}
    \label{tab:ILBRL_Hopper}
\end{table}
\begin{table}[H]
    \centering
    \begin{tabular}{|l|c|}
        \hline
        \textbf{Hyperparameter} & \textbf{Value} \\ \hline
        $\lambda$ & $3.780707811143581$\\
        Critic Learning Rate & $3.193621317706127\times 10^{-5}$\\
        Actor Learning Rate & $3.4187343220724746\times 10^{-5}$\\
        \hline
    \end{tabular}
    \caption{ILBRL Hyperparameters: Halfcheetah}
    \label{tab:ILBRL_Halfcheetah}
\end{table}
\begin{table}[H]
    \centering
    \begin{tabular}{|l|c|}
        \hline
        \textbf{Hyperparameter} & \textbf{Value} \\ \hline
        $\lambda$ & $3.780707811143581$\\
        Critic Learning Rate & $3.193621317706127\times 10^{-5}$\\
        Actor Learning Rate & $3.4187343220724746\times 10^{-5}$\\
        \hline
    \end{tabular}
    \caption{ILBRL Hyperparameters: Walker2d}
    \label{tab:ILBRL_Walker}
\end{table}
\begin{table}[H]
    \centering
    \begin{tabular}{|l|c|}
        \hline
        \textbf{Hyperparameter} & \textbf{Value} \\ \hline
        $\lambda$ & $3.780707811143581$\\
        Critic Learning Rate & $3.193621317706127\times 10^{-5}$\\
        Actor Learning Rate & $3.4187343220724746\times 10^{-5}$\\
        \hline
    \end{tabular}
    \caption{ILBRL Hyperparameters: Ant}
    \label{tab:ILBRL_Ant}
\end{table}
\begin{table}[H]
    \centering
    \begin{tabular}{|l|c|}
        \hline
        \textbf{Hyperparameter} & \textbf{Value} \\ \hline
        Reward Scale & $0.132485274367925$\\
        Discriminator Learning Rate & $0.0004279938571320915$\\
        Actor Learning Rate & $6.308344747206642\times 10^{-5}$\\
        \hline
    \end{tabular}
    \caption{SMODICE Hyperparameters: Hopper}
    \label{tab:SMODICE_hparams_hopper}
\end{table}
\begin{table}[H]
    \centering
    \begin{tabular}{|l|c|}
        \hline
        \textbf{Hyperparameter} & \textbf{Value} \\ \hline
        Reward Scale & $ 0.09806293680143131$\\
        Discriminator Learning Rate & $0.0006509491094594771$\\
        Actor Learning Rate & $8.59713615304861\times 10^{-5}$\\
        \hline
    \end{tabular}
    \caption{SMODICE Hyperparameters: Halfcheetah}
    \label{tab:SMODICE_hparams_halfc}
\end{table}
\begin{table}[H]
    \centering
    \begin{tabular}{|l|c|}
        \hline
        \textbf{Hyperparameter} & \textbf{Value} \\ \hline
        Reward Scale & $ 0.17006151306334566$\\
        Discriminator Learning Rate & $0.00012462906034979525$\\
        Actor Learning Rate & $1.3235001285067025\times 10^{-5}$\\
        \hline
    \end{tabular}
    \caption{SMODICE Hyperparameters: Walker2d}
    \label{tab:SMODICE_hparams_walker}
\end{table}
\begin{table}[H]
    \centering
    \begin{tabular}{|l|c|}
        \hline
        \textbf{Hyperparameter} & \textbf{Value} \\ \hline
        Reward Scale & $ 0.15584724849066423$\\
        Discriminator Learning Rate & $0.0008741470800027616$\\
        Actor Learning Rate & $3.9755828061032105\times 10^{-5}$\\
        \hline
    \end{tabular}
    \caption{SMODICE Hyperparameters: Ant}
    \label{tab:SMODICE_hparams_ant}
\end{table}
\begin{table}[H]
    \centering
    \begin{tabular}{|l|c|}
        \hline
        \textbf{Hyperparameter} & \textbf{Value} \\ \hline
        Reward Scale & $0.1$\\
        Discriminator Learning Rate & $0.0006509491094594771$\\
        Actor Learning Rate & $8.59713615304861\times 10^{-5}$\\
        \hline
    \end{tabular}
    \caption{ORIL Hyperparameters: All Environments}
    \label{tab:ORIL_hparams_all}
\end{table}
\begin{table}[H]
    \centering
    \begin{tabular}{|l|c|}
        \hline
        \textbf{Hyperparameter} & \textbf{Value} \\ \hline
        Learning Rate & $0.01463435996151319$\\
        \hline
    \end{tabular}
    \caption{BC Hyperparameters: Hopper}
    \label{tab:BC_hopper}
\end{table}
\begin{table}[H]
    \centering
    \begin{tabular}{|l|c|}
        \hline
        \textbf{Hyperparameter} & \textbf{Value} \\ \hline
        Learning Rate & $0.017457140313552756$\\
        \hline
    \end{tabular}
    \caption{BC Hyperparameters: Halfcheetah}
    \label{tab:BC_halfcheetah}
\end{table}
\begin{table}[H]
    \centering
    \begin{tabular}{|l|c|}
        \hline
        \textbf{Hyperparameter} & \textbf{Value} \\ \hline
        Learning Rate & $0.01463435996151319$\\
        \hline
    \end{tabular}
    \caption{BC Hyperparameters: Walker}
    \label{tab:BC_walker}
\end{table}
\begin{table}[H]
    \centering
    \begin{tabular}{|l|c|}
        \hline
        \textbf{Hyperparameter} & \textbf{Value} \\ \hline
        Learning Rate & $0.005028663892514822$\\
        \hline
    \end{tabular}
    \caption{BC Hyperparameters: Ant}
    \label{tab:BC_ant}
\end{table}

\end{document}